\newcommand{\no}[1]{#1^{\scriptscriptstyle \bot}} 
\tikzstyle{morphism}=[fill=white, rounded corners=3pt, draw=black, shape=rectangle]
\tikzstyle{generic morphism}=[fill=white, draw=black, shape=rectangle, dashed]
\tikzstyle{small box}=[fill=white, draw=black,rounded corners=3pt, shape=rectangle, minimum width=0.5cm, minimum height=0.5cm]
\tikzstyle{medium box}=[fill=white, draw=black,rounded corners=3pt, shape=rectangle, minimum width=0.5cm, minimum height=0.8cm]
\tikzstyle{large morphism}=[fill=white, draw=black,rounded corners=3pt, shape=rectangle, minimum width=0.5cm, minimum height=1.2cm]
\tikzstyle{bn}=[fill=black, draw=black, shape=circle, inner sep=1.5pt]
\tikzstyle{bw}=[fill=white, draw=black, shape=circle, inner sep=1.5pt]
\tikzstyle{bin}=[fill=white, draw=black, shape=circle, inner sep=0pt]
\tikzstyle{effect2}=[fill=white, draw=black, regular polygon, regular polygon sides=3, minimum width=0.8cm, inner sep=0pt]
\tikzstyle{state}=[fill=white, draw=black, regular polygon, regular polygon sides=3, minimum width=0.8cm, shape border rotate=90, inner sep=0pt, rounded corners=3pt]
\tikzstyle{and}=[fill=white, draw=black, circuit logic US, and gate, minimum width=0.5cm, minimum height=0.8cm]
\tikzstyle{or}=[fill=white, draw=black, circuit logic US, or gate,minimum width=1cm, minimum height=1cm]
\tikzstyle{not}=[fill=white,circuit logic US, not gate, minimum width=1cm, minimum height=1cm]
\tikzstyle{xor}=[fill=white, draw=black, circuit logic US, xor gate, minimum width=0.5cm, minimum height=0.8cm]
\tikzstyle{if}=[trapezium, draw=black, fill=white, minimum width=6pt, minimum height=8pt, rotate=270]
\tikzstyle{coreg}=[draw, fill=white, rounded rectangle, rounded rectangle right arc=none, minimum height=1.2em, minimum width=1.4em, node font={\scriptsize}]
\tikzstyle{reg}=[draw, fill=white, rounded rectangle, rounded rectangle left arc=none, minimum height=.47cm, minimum width=.47cm, node font={\scriptsize}]
\tikzstyle{medium state}=[fill=white, draw=black, regular polygon, regular polygon sides=3, minimum width=1.3cm, inner sep=0pt, shape border rotate=90]
\tikzstyle{large state}=[fill=white, draw=black, regular polygon, regular polygon sides=3, minimum width=2.2cm, shape border rotate=180, inner sep=0pt]
\tikzstyle{wide state}=[fill=white, draw=black, shape=isosceles triangle, minimum width=0.8cm, shape border rotate=270, inner sep=1.4pt, minimum height=0.5cm, isosceles triangle apex angle=80]
\tikzstyle{wn}=[fill=white, draw=black, shape=circle, inner sep=1.5pt]
\tikzstyle{blue morphism}=[fill=white, draw={rgb,255: red,15; green,0; blue,150}, shape=rectangle, text={rgb,255: red,15; green,0; blue,150}, tikzit category=blue]
\tikzstyle{red morphism}=[fill=white, draw={rgb,255: red,150; green,0; blue,2}, shape=rectangle, text={rgb,255: red,150; green,0; blue,2}, tikzit category=red]
\tikzstyle{blue state}=[fill=white, draw={rgb,255: red,15; green,0; blue,150}, shape=circle, regular polygon, regular polygon sides=3, minimum width=0.8cm, shape border rotate=180, inner sep=0pt, text={rgb,255: red,15; green,0; blue,150}, tikzit category=blue]
\tikzstyle{blue node}=[fill={rgb,255: red,15; green,0; blue,150}, draw={rgb,255: red,15; green,0; blue,150}, shape=circle, tikzit category=blue, inner sep=1.5pt]
\tikzstyle{blue}=[text={rgb,255: red,15; green,0; blue,150}, tikzit draw={rgb,255: red,191; green,191; blue,191}, tikzit category=blue, tikzit fill=white, inner sep=0mm]
\tikzstyle{blue wide state}=[fill=white, draw={rgb,255: red,15; green,0; blue,150}, text={rgb,255: red,15; green,0; blue,150}, shape=isosceles triangle, minimum width=0.8cm, shape border rotate=270, inner sep=1.4pt, minimum height=0.5cm, isosceles triangle apex angle=80]
\tikzstyle{red node}=[fill={rgb,255: red,150; green,0; blue,2}, draw={rgb,255: red,150; green,0; blue,2}, shape=circle, inner sep=1.5pt]
\tikzstyle{Purple node}=[fill={rgb,255: red,120; green,0; blue,120}, draw={rgb,255: red,120; green,0; blue,120}, text={rgb,255: red,120; green,0; blue,120}, shape=circle, inner sep=1.5pt]
\tikzstyle{red}=[text={rgb,255: red,150; green,0; blue,2}, inner sep=0mm, tikzit fill=white, tikzit draw={rgb,255: red,191; green,191; blue,191}]
\tikzstyle{purple}=[text={rgb,255: red,150; green,0; blue,150}, inner sep=0mm, tikzit fill=white, tikzit draw={rgb,255: red,191; green,191; blue,191}]
\tikzstyle{white morphism}=[fill=white, draw=white, shape=rectangle, tikzit draw={rgb,255: red,139; green,139; blue,139}]
\tikzstyle{leak morphism}=[fill=white, draw={rgb,255: red,120; green,0; blue,85}, shape=rectangle, text={rgb,255: red,120; green,0; blue,85}, tikzit category=leak]
\tikzstyle{leak}=[text={rgb,255: red,120; green,0; blue,85}, inner sep=0mm, tikzit fill=white, tikzit draw={rgb,255: red,191; green,191; blue,191}, tikzit category=leak]
\tikzstyle{leak node}=[fill={rgb,255: red,120; green,0; blue,85}, draw={rgb,255: red,120; green,0; blue,85}, shape=circle, inner sep=1.5pt, tikzit category=leak]
\tikzstyle{horiz state}=[fill=white, draw=black, regular polygon, regular polygon sides=3, minimum width=1cm, shape border rotate=90, inner sep=0pt]
\tikzstyle{none_90}=[rotate=90]
\tikzstyle{none_-90}=[rotate=-90]
\tikzstyle{arrow}=[->]
\tikzstyle{dashed box}=[-, dashed]
\tikzstyle{blue arrow}=[-, draw={rgb,255: red,15; green,0; blue,150}, tikzit category=blue]
\tikzstyle{red arrow}=[-, draw={rgb,255: red,150; green,0; blue,2}, tikzit category=red]
\tikzstyle{purple arrow}=[->, draw={rgb,255: red,120; green,0; blue,120}, >=stealth, shorten <=2pt, shorten >=2pt]
\tikzstyle{protected purple arrow}=[->, draw={rgb,255: red,120; green,0; blue,120}, >=stealth, shorten <=2pt, shorten >=2pt, preaction={line width=1.8pt, white, draw}]
\tikzstyle{mapsto}=[{|->}]
\tikzstyle{double wire}=[-, draw, line width=0.8pt, white, preaction={-, draw, line width=1.8pt}]
\tikzstyle{curly brace}=[-, draw=none, tikzit draw={rgb,255: red,128; green,0; blue,128}]
\tikzstyle{protected}=[-, preaction={line width=1.8pt,white,draw}]
\tikzstyle{leak arrow}=[-, tikzit draw={rgb,255: red,150; green,0; blue,120}]
\tikzstyle{protected leak arrow}=[-, tikzit draw={rgb,255: red,150; green,0; blue,120}]
\tikzstyle{hollow arrow}=[-, very thin, white, preaction={line width=0.7pt,draw={rgb,255: red,120; green,0; blue,85}}, tikzit category=leak, tikzit draw={rgb,255: red,150; green,0; blue,120}]
\tikzstyle{protected hollow arrow}=[-, very thin, white, preaction={line width=0.7pt,draw={rgb,255: red,120; green,0; blue,85},preaction={line width=2.1pt,white,draw}}, tikzit category=leak, tikzit draw={rgb,255: red,150; green,0; blue,120}]
\tikzstyle{over arrow}=[-, black, preaction={draw=white, double}]
\tikzstyle{curly brace}=[-, decorate, decoration={brace,amplitude=5pt}]
\tikzstyle{inv curly brace}=[-, decorate, decoration={brace,amplitude=5pt,mirror}]
\tikzstyle{d-wire1 plate}=[-, double=red!20!white]
\tikzstyle{d-wire2 plate}=[-, double=red!32!white]
\tikzstyle{dotted_plate}=[-,rounded corners=3pt, densely dotted, draw=blue, fill opacity=0.4, fill=blue!50!white]
\tikzstyle{twire1}=[-, draw, line width=0.4pt, preaction={-, draw, line width=1.4pt, red!20!white, preaction={-, draw, line width=2.2pt}}]
\tikzstyle{twire2}=[-, draw, line width=0.4pt, preaction={-, draw, line width=1.4pt, red!32!white, preaction={-, draw, line width=2.2pt}}]
	\setlist[enumerate]{label=(\roman*)}  
	\setlist[enumerate,2]{label=(\alph*)} 
\setlist{nolistsep} 
\setlist[itemize]{label={\color{gray} \rotatebox[origin=c]{-90}{\ding{122}}}}
\crefname{app}{Appendix}{Appendices}
\numberwithin{equation}{section}
\newtheorem{theorem}[equation]{Theorem}
\newtheorem{proposition}[equation]{Proposition}
\newtheorem{lemma}[equation]{Lemma}
\newtheorem{assumption}[equation]{Assumption}
\theoremstyle{definition}
\newtheorem{definition}[equation]{Definition}
\newtheorem{notation}[equation]{Notation}
\newtheorem{example}[equation]{Example}
\newtheorem{remark}[equation]{Remark}
\newtheorem{setting}[equation]{Setting}
\def\cref@thmoptarg[#1]#2#3#4{%
	    \ifhmode\unskip\unskip\par\fi%
	    \normalfont%
	    \trivlist%
	    \let\thmheadnl\relax%
	    \let\thm@swap\@gobble%
	    \thm@notefont{\fontseries\mddefault\upshape}%
	    \thm@headpunct{.}
	    \thm@headsep 5\p@ plus\p@ minus\p@\relax%
	    \thm@space@setup%
	    #2
	    \@topsep \thm@preskip               
	    \@topsepadd \thm@postskip           
	    \def\@tempa{#3}\ifx\@empty\@tempa%
	      \def\@tempa{\@oparg{\@begintheorem{#4}{}}[]}%
	    \else%
	      \refstepcounter[#1]{#3}
	      \@namedef{cref@#3@alias}{#1}
	      \def\@tempa{\@oparg{\@begintheorem{#4}{\csname the#3\endcsname}}[]}%
	    \fi%
	    \@tempa}%
\newcommand{\newterm}[1]{\emph{\textbf{#1}}}
\newcommand{\abs}[1]{\left\lvert #1 \right\rvert} 
\renewcommand{\emptyset}{\varnothing} 
\newcommand{\longmapsfrom}{\mathrel{\reflectbox{$\longmapsto$}}}
\newcommand{\cat}[1]{{\mathsf{#1}}}
\newcommand{\id}{\mathrm{id}} 		
\newcommand{\tot}{\mathrm{tot}}
\newcommand{\tensor}{\otimes}
\newcommand{\comp}{ 		
	\mathchoice{\,}{\,}{}{} 	
}
\DeclareMathOperator{\cop}{copy}
\DeclareMathOperator{\del}{del}
\newcommand{\copycomp}[1]{\operatorname{Cpy}(#1)}
\newcommand{\compcomp}[1]{\operatorname{Cmp}(#1)}
\newcommand{\In}[1]{\operatorname{In}_{#1}}
\newcommand{\Out}[1]{\operatorname{Out}_{#1}}
\newcommand{\cC}{\mathsf{C}}		
\newcommand{\cD}{\mathsf{D}}		
\newcommand{\graph}[1]{\operatorname{gr}(#1)}
\newcommand{\freehyp}[1]{\mathsf{FreeHyp}(#1)}
\newcommand{\freecdo}[1]{\mathsf{FreeCD}(#1)}
\newcommand{\vcat}[1]{\mathsf{VL}_{#1}}
\newcommand{\syn}[1]{\mathsf{HSyn}_{#1}} 
\newcommand{\cdsyn}[1]{\mathsf{CDSyn}_{#1}}
\newcommand{\cartsyn}[1]{\mathsf{FreeC}_{#1}} 
\newcommand{\parents}[1]{\operatorname{Pa}(#1)}
\newcommand{\mor}[1]{\operatorname{Mor}(#1)}
\newcommand{\tr}[1]{\operatorname{Tr}(#1)}
\newcommand{\ctr}[1]{\operatorname{Tr}_C(#1)}
\newcommand{\cmor}[1]{\operatorname{Mor}_C(#1)}
\newcommand{\ve}[1]{\operatorname{VE}(#1)}
\newcommand{\h}{\mathcal{H}}
\newcommand{\g}{\mathcal{G}}
\newcommand{\finstoch}{\mathsf{FinStoch}}
\newcommand{\odag}{\cat{ODAG}}
\newcommand{\ougr}{\cat{OUGr}}
\newcommand{\cdcat}{\cat{CDCat}}
\newcommand{\hypcat}{\cat{HypCat}}
\newcommand{\bn}{\cat{BN}}
\newcommand{\mn}{\cat{MN}}
\newcommand{\cn}{\cat{CN}}
\newcommand{\finprojstoch}{\mathsf{FinProjStoch}}
\newcommand{\mat}[1][\mathbb{R}^{\geq 0}]{\mathsf{Mat}(#1)}
\newcommand{\as}[1]{
	\def\relstate{#1}%
	\ifx\relstate\empty
		\text{a.s.}%
	\else
		{#1\text{-a.s.}}%
	\fi
}
\newcommand{\normalisation}[1]{\mathtt{n}(#1)}
\newcommand{\clique}[1]{C\ell(#1)}
\providecommand{\given}{\,|\,}			
\newcommand{\edge}{\mathrel{\rule[0.5ex]{1em}{0.2pt}}}
\title{Bayesian Networks, Markov Networks, Moralisation, Triangulation: a Categorical Perspective}
\author{Antonio Lorenzin and Fabio Zanasi}
\begin{document}

\maketitle

\begin{abstract}
	\noindent Moralisation and Triangulation are transformations allowing to switch between different ways of factoring a probability distribution into a graphical model. Moralisation allows to view a Bayesian network (a directed model) as a Markov network (an undirected model), whereas triangulation addresses the opposite direction. 
	We present a categorical framework where these transformations are modelled as functors between a category of Bayesian networks and one of Markov networks. 
	The two kinds of network (the objects of these categories) are themselves represented as functors from a `syntax' domain to a `semantics' codomain.
	Notably, moralisation and triangulation can be defined inductively on such syntax via functor pre-composition. 
	Moreover, while moralisation is fully syntactic, triangulation relies on semantics.
	This leads to a discussion of the variable elimination algorithm, reinterpreted here as a functor in its own right, that splits the triangulation procedure in two: one purely syntactic, the other purely semantic. 
	This approach introduces a functorial perspective into the theory of probabilistic graphical models, which highlights the distinctions between syntactic and semantic modifications.
\end{abstract}
\tableofcontents
\section{Introduction}
In recent years, there has been growing interest in categorical approaches to probabilistic computation. This trend spans several domains, including probabilistic programming, Bayesian inference, and machine learning, see e.g.~\cite{Barthe_Katoen_Silva_2020,lorenz2023causalmodels,shiebler2021categorytheorymachinelearning} for an overview. 
The appeal of these approaches lies in their capacity to provide rigorous and principled semantics, often surpassing the formal clarity offered by traditional methods.
One compelling example is the study in \cite{jacobs2019mathematics}, where categorical structures elucidate the conceptual differences between two widely adopted update rules for reasoning with soft evidence.
Moreover, the emphasis on abstraction offered by category theory enables a unified treatment of different probabilistic frameworks. 
A prominent formalism in this respect is that of \emph{Markov categories}~\cite{fritz2019synthetic}.
This provides a general setting in which fundamental probabilistic notions --- such as conditioning, independence, and determinism --- can be formulated, and moreover the instantiation of such notions correspond to the expected meaning in standard frameworks (e.g., discrete, measure-theoretic and Gaussian models).
Another important aspect of the categorical paradigm is its accent on compositionality, as it redirects our focus to decomposing probabilistic computations into more fundamental building blocks.

In this context, we aim to investigate probabilistic graphical models (PGMs).
PGMs are graphical representations of probability distributions, widely used in machine learning, statistics, and artificial intelligence.
The structure underlying a PGM is combinatorial, typically given by a graph in which vertices represent random variables and edges describe probabilistic relationships. 
More precisely, the absence of an edge is generally interpreted as a conditional independence between the corresponding variables.
The distinction between directed and undirected graphs leads to two different classes of PGMs, which capture causality and correlation respectively.
The two approaches are formalised by Bayesian networks, based on directed acyclic graphs (DAGs), and Markov networks (also called Markov random fields), based on undirected graphs.

To bridge the gap between them, we can translate one representation into the other. 
This is achieved via \emph{moralisation}, which transforms a Bayesian network into a Markov network, and \emph{triangulation}, which handles the opposite direction.
Importantly, these transformations must preserve the information offered by the combinatorial structure --- formally, they cannot introduce new conditional independencies. 
On the other hand, the conceptual distinction between correlation and causation may result in the loss of certain conditional independencies during conversion.
Moralisation and triangulation are especially relevant in exact inference tasks, such as the junction tree algorithm, clique tree message passing,  variable elimination, and graph-based optimisation; see~\cite{barberBRML2012,koller2009probabilistic} for an overview.

In the traditional treatment of PGMs, the combinatorial structure is rarely powerful enough to stand independently of its probabilistic content, and one is often forced to consider the associated probability distribution explicitly.
Formal tools for reasoning about the combinatorics are limited, and sometimes more convoluted than necessary (compare, for example, traditional d-separation~\cite[Sec.~3.3]{koller2009probabilistic} with its categorical counterpart~\cite{fritz2022dseparation}).
Moreover, this interplay enters crucially into the definitions of moralisation and triangulation: 
Although these may appear to be purely combinatorial operations, they must be applied to the whole network, including its probabilistic component.

The present paper aims to study Bayesian and Markov networks, along with the transformations of moralisation and triangulation, from the perspective of categorical semantics.
This is inspired by Lawvere's well-established approach to functorial semantics~\cite{lawvere1963functorial}, wherein an algebraic theory is described as a freely generated cartesian category of terms $\cartsyn{T}$, and a model of $T$ as a product-preserving functor $\cartsyn{T} \to \cat{C}$, with $\cat{C}$ a cartesian category. 
Through this lens, $\cartsyn{T}$ is interpreted as the syntax, while $\cat{C}$ plays the role of semantics. 
In categorical approaches to Bayesian networks, initiated in~\cite{fong2013causaltheoriescategoricalperspective,JacobsZ16,jacobs2019causal_surgery}, cartesianity turns out to be too strong, prompting the need for a weaker structure. This results in the formalism of \emph{copy-delete (CD) categories}~\cite{CorradiniG99,fong2013causaltheoriescategoricalperspective,JacobsZ16,Fritz_2023}, which are expressive enough to model the connectivity of DAGs, and yet weak enough to comprise standard probability settings.
Among the many contributions, \cite{jacobs2019causal_surgery} is particularly relevant for our purposes, as it establishes a bijective correspondence between Bayesian networks on a DAG $\g$ and structure-preserving functors from $\cdsyn{\g}$, the freely generated CD-category obtained from $\g$, to $\finstoch$, the CD-category of finite sets and stochastic matrices.
This reformulation mirrors Lawvere's approach, so we can reasonably regard Bayesian networks over $\g$ as models of the `theory' $\g$ in $\finstoch$. 
In particular, we can distinguish the syntax, given by $\g$, from its semantic content, formally captured by the functor.
Moreover, morphisms of $\cdsyn{\g}$ are expressed via \emph{string diagrams}~\cite{selinger11graphical,piedeleuzanasi}, highlighting their role as a two-dimensional syntax for graphs.
Surprisingly, this standard tool in category theory literature closely resembles factor graphs, a well-known type of PGM~\cite{loeliger2004factorgraphs}.

Similarly to the case of Bayesian networks, here we provide a functorial semantics for Markov networks. 
Using the formalism of \emph{hypergraph categories}~\cite{fong2019hypergraph}, we show that Markov networks on an undirected graph $\h$ correspond bijectively to structure-preserving functors from $\syn{\h}$, the freely generated hypergraph category obtained from $\h$, to $\mat$, the hypergraph category of finite sets and matrices.
Afterwards, we develop the necessary ingrendients to discuss moralisation and triangulation as functors.
After removing redundant information from the networks to better focus on conditional independencies, we define the categories of Bayesian and Markov networks, denoted by $\bn$ and $\mn$ respectively, where morphisms can be decomposed into syntactic and semantic components, in accordance with our functorial approach.
In the study of transformations, the standard triangulation is translated in two distinct functors to clarify the separation between syntactic and semantic aspects.
This splitting is enabled by the so-called chordal networks, and their category $\cn$: 
Specifically, the triangulation functor $\tr{-}$ factors as $\mn \to \cn \to \bn$, where $\ctr{-}\colon \mn \to \cn$ is completely syntactic, and $\ve{-}\colon \cn \to \bn$ is semantic. 
The latter is closely related to the Variable Elimination algorithm, hence the name.
A key feature of our functorial perspective is that both moralisation and triangulation are simply expressed as functor precomposition, freely defined on the syntactic generators of $\cdsyn{\g}$ and $\syn{\h}$. As a major takeaway, the traditional combinatorial viewpoint on PGMs is translated to a purely syntactic one, which facilitates modularisation and algebraic reasoning.

The paper concludes with a study of the functorial interplay, culminating in the following commutative diagram
\begin{equation}\label{eq:interplay_intro}
\begin{tikzcd}[column sep=large]
	&\bn \ar[r,"\mor{-}"]\ar[rrr, bend left=20, start anchor=north east, "\tr{\mor{-}}"] & \mn \ar[rr, color=red, "\tr{-}"]\ar[rd,"\ctr{-}" below left] && \bn\\
	\cn \ar[ru, color=red, "\ve{-}"]\ar[rru,"\cmor{-}" below right]\ar[rrr,equal] &&&\cn\ar[ru, color=red, "\ve{-}"] &
\end{tikzcd}
\end{equation}
where only the red arrows denote functors that require semantic assumptions. $\mor{-}$ and $\cmor{-}$ are moralisation and its chordal analogue, respectively, and similarly for triangulation $\tr{-}$.
We also show that $\mor{-}$ and $\tr{-}$ do not form an adjunction --- conceptually, this would require introducing additional conditional independencies, thereby adding information that was not originally present.

\paragraph{Conference version}
A preliminary version of this work was presented at CALCO 2025, under the title ``An Algebraic Approach to Moralisation and Triangulation of Probabilistic Graphical Models'' \cite{lorenzin2025moralisation}. 
The present journal version significantly extends that prior work, in particular:
\begin{itemize}
	\item We refine the notion of irredundant networks in \cref{sec:irredundant_networks} to better align with its traditional meaning, leading to a revised definition of the categories of networks;
	\item We decompose triangulation into two steps, separating syntactic modifications from a semantic “check”;
	\item We comment on the junction tree algorithm (\cref{rem:JTA}) and the challenges behind a possible categorical semantics approach;
	\item We relate our discussion to the variable elimination algorithm, which is closely connected to the functor $\ve{-}$ (see \cref{sec:VE}), and acts as an intermediate step in the description of triangulation.
\end{itemize}

\paragraph{Outline}
The necessary categorical notions for the present purposes are introduced in \cref{sec:cd-hypergraphcats}. In particular, the conditions required on semantics are specified in \cref{set:conditionals}.
\cref{sec:networks} begins our treatment of categorical semantics by showing how Bayesian and Markov networks can be viewed as functors (\cref{prop:bayesian_functor,prop:markov_functor}). 
The proposed syntax exhibits important functorial properties, covered in \cref{sec:syntax_functoriality} (\cref{thm:graphhom-cd,thm:graphhom-hyp}).
Since moralisation and triangulation are primarily concerned with preserving conditional independences, we introduce an irredundant formulation in \cref{sec:irredundant_networks}, yielding a description via functorial factorisations (\cref{prop:irredundantbayesian_functor,prop:irredundantmarkov_functor}). 
\cref{sec:categoriesnetworks} then turns to morphisms between networks, which we discuss in detail, offering insight and motivation for the present choices.
The final sections focus on transformations. 
Following the discussion of moralisation in \cref{sec:moralisation}, triangulation is treated in two parts: \cref{sec:triangulation} covers syntactic modifications, and \cref{sec:VE} covers semantic assumptions.
Especially important is the connection between the latter and the variable elimination algorithm (\cref{rem:ve}), which inspired the functor's and the section's name.
We conclude by presenting a complete picture of the functorial interplay in \cref{sec:interplay}.

\paragraph{Acknowledgements}
We would like to thank Tom\'a\v{s} Gonda, Oliver B\o{}ving, Leo Lobski and Ralph Sarkis for fruitful discussions.
We acknowledge funding from ARIA Safeguarded AI TA1.1 programme.

\section{Copy-Delete and Hypergraph Categories}\label{sec:cd-hypergraphcats}
The fundamental distinction between Bayesian and Markov networks is that one is a \emph{directed} and the other is an \emph{undirected} model. 
In this section we recall the categorical structures necessary to account for this difference. First, we focus on the directed case. Following the usual categorical perspective on Bayesian networks~\cite{fong2013causaltheoriescategoricalperspective,JacobsZ16}, we identify in \emph{copy-discard (CD) categories} the requirements needed to interpret the directed acyclic structures of these models. Intuitively, in CD-categories each object has a `copy' and a `delete' map, expressing the ability of nodes of being connected to multiple edges, or to no edge at all. We assume familiarity with \emph{string diagrams}~\cite{selinger11graphical,piedeleuzanasi}, the graphical language of monoidal categories, which we adopt to emphasise the interpretation of morphisms as graphical models.

\begin{assumption}[Strictness]\label{assump:strict}
	Throughout, monoidal categories and functors are \newterm{strict}, i.e.\ associators, unitors and coherence morphisms for the functors are all identities.
	Additionally, the monoidal product is always denoted by $\otimes$, whereas $I$ indicates the monoidal unit.
\end{assumption}

\begin{definition}\label{def:cdcat}
	A \newterm{CD-category} is a symmetric monoidal category where each object $X$ is equipped with a commutative comonoid respecting the monoidal structure. We write `copy' (comultiplication) and `delete' (counit) maps in string diagram notation, respectively as $\minitikzfig{copyX}$ and $\minitikzfig{delX}$. We omit the object label when unnecessary. The commutative comonoid equations are then displayed as:
		\begin{equation}\label{eq:comonoids}
			{\tikzfig{copy_commutative}}\qquad {\tikzfig{copy_associative}}\qquad {\tikzfig{del_unit}}
		\end{equation}
	Associativity ensures a well-defined `copy' $\minitikzfig{copymult}$ with multiple outputs.
	A \newterm{CD-functor} is a symmetric monoidal functor between CD-categories preserving \minitikzfig{copy} and \minitikzfig{del}. CD-categories and CD-functors form a category $\cdcat$.
\end{definition}

\begin{example}\label{ex:finstoch}
	Our chief example of CD-category is $\finstoch$~\cite[Ex.~2.5]{fritz2019synthetic}, the category whose objects are finite sets\footnote{To ensure strictness of $\finstoch$, we actually take as objects only finite sets whose elements are finite lists, and define the monoidal product $X \otimes Y$ via list concatenation. For example, if $X=\lbrace [a],[b] \rbrace$ and $Y=\lbrace [a],[c] \rbrace$, then $X \otimes Y =\lbrace [a,a], [a,c], [b,a],[b,c] \rbrace$. This caveat is immaterial for our developments.
} and whose morphisms $f\colon X \to Y$ are maps $Y \times X \to \mathbb{R}^{\ge 0}$ such that $\Sigma_{y \in Y} f(y,x) = 1$. We will use a ``conditional notation'' and write $f(y\given x)$ for the image of $(y,x)$ via $f$. 
Note that the requirement on $f$ amounts to saying $f(-\given x)$ is a probability distribution on $Y$ for each $x \in X$. Another way to view $f$ is as a stochastic matrix (i.e., a matrix where each column sums to $1$) with $X$-labelled columns and $Y$-labelled rows. Composition is defined via the Chapman--Kolmogorov equation, or equivalently by product of matrices: given $f \colon X \to Y$ and $g \colon Y \to Z$, $
				g\comp f (z \given x) := \sum_{y \in Y} g(z\given y) f(y \given x)$.
				The tensor product is the Kronecker product of matrices; more explicitly, for $f\colon X \to Y$ and $h \colon Z \to W$, $f \otimes h (y,w \given x,z) := f(y\given x) h(w \given z)$.
				The structural morphisms yielding the CD structure are defined as follows:
				\[
				\begin{array}{ccc}
					\minitikzfig{copy} (y,z\given x) \coloneqq \begin{cases}
						1 & \text{if }x=y=z\\
						0 & \text{otherwise}
					\end{cases},&\text{and}&
				\minitikzfig{del} (\given x) \coloneqq 1.
				\end{array}
				\]
				This definition motivates the names `copy' and `delete' for the comonoid operations.
\end{example}


\begin{example}[Free CD-categories]\label{ex:freecdo}
Recall that a \emph{monoidal signature} is a pair $(A, \Sigma)$ consisting of a set $A$ of generating objects and a set $\Sigma$ of generating morphisms typed in $A^{\star}$ (finite lists of $A$-elements). One may freely construct the CD-category associated with $(A,\Sigma)$, denoted $\freecdo{A,\Sigma}$: its set of objects is $A^{\star}$ and morphisms are obtained by combining $\Sigma$-generators with $\minitikzfig{copyX}$ and $\minitikzfig{delX}$, for each $X \in A$, via sequential and parallel composition, and then quotienting by~\eqref{eq:comonoids} and the laws of symmetric strict monoidal categories. We refer, e.g., to \cite{baez2018props,bonchi2018deconstructing,zanasi2018} for details.
\end{example}


Before proceeding, we recall a nontrivial property that conceptually distinguishes syntax from semantics. 
In this work, such a distinction will play an active role in \cref{thm:triangulation_finstoch} (see also \cref{prop:pearlupdate}).
\begin{definition}\label{def:conditionals}
	A CD-category \newterm{has conditionals} if for each $f \colon A \to X \otimes Y$, there exists $g\colon A \otimes X \to Y$ such that
	\begin{equation}\label{eq:conditionals}	
		\tikzfig{conditional}
	\end{equation}
	Given a morphism $f \colon A \to X$, its \newterm{normalisation} $\normalisation{f}$ is a choice of a conditional in the following sense: $\minitikzfig{fAX}=\minitikzfig{normalisation}$.
\end{definition}

We now turn attention to undirected models. We argue the suitable categorical structure is an extension of CD-categories, variously called \emph{hypergraph categories}~\cite{fong2019hypergraph} or \emph{well-supported compact closed categories}~\cite{CARBONI198711}.


\begin{definition}\label{def:hypcat}
	A \newterm{hypergraph category} is a CD-category where furthermore each object $X$ is equipped with a commutative monoid respecting the monoidal structure, and interacting with the comonoid on $X$ via the laws of special Frobenius algebras. We write `compare' (multiplication) and `omni' (unit) maps in string diagram notation, respectively as $\minitikzfig{compareX}$ and $\minitikzfig{omniX}$. The commutative monoid equations and special Frobenius equations are displayed as:
\begin{equation}\label{eq:hypergraph}
    \begin{aligned}
			{\tikzfig{compare_commutativity}}\qquad {\tikzfig{compare_associativity}}\qquad {\tikzfig{omni_unit}} \\[1em]
						{\tikzfig{frobenius_special}}\qquad {\tikzfig{frobenius_snake}} \qquad\qquad \qquad
   \end{aligned}
\end{equation}
Associativity ensures a well-defined `compare' $\minitikzfig{comparemult}$ with multiple inputs.
 A \newterm{hypergraph functor} is a CD-functor preserving the monoid structure. Hypergraph categories and hypergraph functors form a category $\hypcat$.
\end{definition}

\begin{notation}
	We will use the `cups' and `caps' notation: $\minitikzfig{cup}\coloneqq \minitikzfig{cup_def}$ and $\minitikzfig{cap}\coloneqq \minitikzfig{cap_def}$.
\end{notation}
	
The suitability of hypergraph categories to express undirected models may be better appreciated by observing that there is a bijective correspondence between homsets $[X,Y]$, $[I, X \otimes Y]$, and $[X \otimes Y,I]$ for any objects $X, Y$. These correspondences, obtained by `cups' and `caps', may be seen graphically as `bending wires', or `turning inputs into outputs' and vice versa: \minitikzfig{fXY}\, \minitikzfig{fcapXY} \, \minitikzfig{fcupXY}.
We refer to~\cite{fong2019hypergraph} for a more systematic discussion. 
Additionally, hypergraph categories allow for the decomposition of `cups' and `caps' into the more elementary `copy', `delete', `compare', and `omni' maps.
The expressivity provided by these maps is actually crucial for modelling Bayesian and Markov networks, as all occurring variables are explicitly represented, ensuring that no information is hidden (cf.\ \cref{fig:ex_bayesian_network,fig:markov_network_ex}; see also \cite[Sec.~3]{jacobs2019causal_surgery}).
	
	\begin{example} \label{ex:main}
Our leading example of hypergraph category is $\mat$, the category whose objects are finite sets\footnote{The same caveat as in the previous footnote applies.} and whose morphisms $f\colon X \to Y$ are maps $Y \times X \to \mathbb{R}^{\ge 0}$. Sequential and parallel composition are defined as in $\finstoch$ (\Cref{ex:finstoch}), and we may use analogous conditional notation and matrix representation for its morphisms. In fact, we may regard $\mat$-morphisms as (generic) matrices with entries in $\mathbb{R}^{\ge 0}$. $\finstoch$ is the full subcategory where we restrict to the stochastic matrices. 
The compare-omni morphisms are defined as follows:
				\[
				\begin{array}{ccc}
					\minitikzfig{compare} (z\given x,y) \coloneqq \begin{cases}
						1 & \text{if }x=y=z\\
						0 & \text{otherwise}
					\end{cases}, &\qquad& \minitikzfig{omni} (x \given ) \coloneqq 1, 
				\end{array}
				\]
			Note that $\minitikzfig{del_omni} = \id_I$ is not necessarily satisfied.
	\end{example}

	\begin{example}\label{ex:finprojstoch}
		The category $\finprojstoch$, as defined in \cite[Def.~6.3]{stein2024exactconditions}, is a quotient of $\mat$ given by setting an equivalence relation $f_1 \propto f_2$ whenever there exists $\lambda\in \mathbb{R}^{> 0}$ such that $\lambda f_1(y\given x) = f_2(y\given x)$ for all $x \in X$ and $y \in Y$. Intuitively, $f_1$ and $f_2$ only differ by a global nonzero multiplicative factor.
		$\finprojstoch$ inherits the hypergraph category structure of $\mat$ via the functor $\mat \to \finprojstoch$ sending $f$ to its equivalence class $[f]_{\propto}$. 
		Also, $\finstoch$ embeds in $\finprojstoch$ via the analogously defined functor $f\mapsto [f]_{\propto}$.
	\end{example}
	
\begin{remark}[The Normalisation Cospan]\label{rem:normalisation_cospan}
	The introduction of $\finprojstoch$ is justified by the co\-span 
	$\mat\xrightarrow{q} \finprojstoch \xleftarrow{i} \finstoch$, 
	where $q$ is the quotient and $i$ is the embedding described in \Cref{ex:finprojstoch}. This picture is central to our characterisation of Markov networks, as it describes a normalisation procedure (see~\cref{prop:irredundantmarkov_functor}). 
\end{remark}





This compels us to give the following definition, with the idea to comprise our situation in a more categorical framework. Here, ``tentative'' indicates that the definition is not final, but it is sufficient for our present purposes.
\begin{definition}[Tentative]
Let $\cC$ be a hypergraph category, and let $\cC_{\tot}$ be its subcategory of \newterm{total} morphisms (i.e., those $f$ such that $\minitikzfig{delf}=\minitikzfig{del}$). A \newterm{normalisation cospan} is a hypergraph functor $q \colon \cC \to \cD$ such that the composition $\cC_{\tot}\hookrightarrow \cC \to \cD$ is faithful, and we write it as 
\[
\begin{tikzcd}
	\cC \ar[r,"q"] & \cD &\cC_{\tot} \ar[l,"i" above]
\end{tikzcd}
\]
\end{definition}

\begin{setting}\label{set:conditionals}
The content of this paper can be applied to any hypergraph category $\cC$ with a normalisation cospan and total conditionals, i.e.\ for every morphism $f \colon A \to X \otimes Y$ there exists a $g$ satisfying \eqref{eq:conditionals} that is total.
For the sake of simplicity, we restrict our discussion to $\mat$ throughout, in order to highlight the conceptual motivations behind our choices.
\end{setting}

That $\mat$ does satisfy the wanted property on conditionals is a direct check: given $f\colon A \to X \otimes Y$, then
\[
g(y\mid x,a) \coloneqq \begin{cases}
	\frac{f(x,y\mid a) }{\sum_{y'} f(x,y'\mid a)} & \text{if }\sum_{y'} f(x,y'\mid a)\neq 0 \\
	\frac{1}{\abs{Y}} & \text{if }\sum_{y'} f(x,y'\mid a)= 0,
\end{cases}
\]
where $\abs{Y}$ indicates the cardinality of $Y$, is a total conditional.
	
	\begin{example}[Free Hypergraph Categories]\label{ex:freecor} Analogously to the case of CD-categories (\Cref{ex:freecdo}), one may freely construct a hypergraph category from a monoidal signature $(A,\Sigma)$: the only difference is that the construction of morphisms also involves `structural' generators $\minitikzfig{compareX}$ and $\minitikzfig{omniX}$ for each $X \in A$, and we additionally quotient by~\eqref{eq:hypergraph}.
	\end{example}

\section{Probabilistic Graphical Models as Functors}~\label{sec:networks}

In this section we characterise both Bayesian networks and Markov networks via the paradigm of functorial semantics. For Bayesian networks, we build on a previous result~\cite[Prop.~3.1]{jacobs2019causal_surgery}, whereas the characterisation for Markov networks is novel. These results also help us to characterise an `irredundant' version of the networks, which is instrumental for the developments in the next sections.

Throughout, all graphs are \newterm{finite}. Also, in a {DAG} (directed acyclic graph) $\g=(V_{\g},E_{\g})$, we denote by $\parents{v}$ the set of parents of a vertex $v$, i.e.~the set of vertices $w$ such that $w \to v$.

\begin{definition}\label{def:BN} A \newterm{Bayesian network} over $\g$ is given by an assignment $\tau(v)$ of a finite set for any vertex $v \in V_{\g}$ together with a stochastic matrix $f_v \colon \tau(v)\times \prod_{w \in\parents{v}} \tau(w)\to [0,1]$ interpreted as a conditional distribution of $v$ knowing $\parents{v}$.
Any such Bayesian network has an associated distribution given by $\omega(V_{\g})= \prod_{v \in V_{\g}} f_v (v \given \parents{v})$.
\end{definition}

\begin{example}\label{ex:BEAR}
The leftmost picture in \cref{fig:ex_bayesian_network} depicts a Bayesian network, borrowed from \cite[Ex.~9.14]{jacobs2021logical}. 
	Vertices represent an alarm ($A=\lbrace a,\no{a}\rbrace$), which may be activated by a burglary ($B=\lbrace b,\no{b}\rbrace$) or an earthquake ($E=\lbrace e, \no{e}\rbrace$). The radio ($R=\lbrace r,\no{r}\rbrace$) reliably reports any earthquake. The two elements of each set represent if the event occurred (e.g., $a$), or not (e.g., $\no{a}$). Edges indicate causal relationship between events, with probabilities given by stochastic matrices (represented as conditional probability tables).
	\begin{figure}[tp]
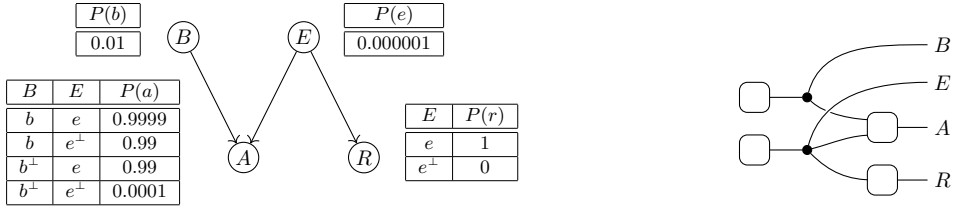

	\centering
	\begin{subfigure}[b]{0.6\textwidth}
		\centering
	\tikzfig{BEAR}
	\end{subfigure}
	\hspace*{0cm}
	\begin{subfigure}[b]{0.37\textwidth}
		\centering
	\tikzfig{BEAR_stringdiagram}
	\end{subfigure}
	\caption{A Bayesian network and the string diagram in $\finstoch$ representing it.}\label{fig:ex_bayesian_network}
	\end{figure}
\end{example}

\begin{remark}[On Ordering]\label{rem:ordering}
	The stochastic matrix $f_v$ as in \Cref{def:BN} can be described by a morphism $\tau(v)\to \prod_{w \in\parents{v}} \tau(w)$ in $\finstoch$.
	However, this assignment is only unique up to permutation. It becomes unique once we choose a specific order for $\prod_{w \in\parents{v}} \tau(w)$. Thus for simplicity we work with (totally) \newterm{ordered graphs}. Recall that a DAG $\g$ is ordered if equipped with a \emph{topological ordering}, i.e.\ a total order such that $v\to w$ implies $v <w$. 
\end{remark}
The idea behind the functorial perspective is to cleanly separate `syntax' (the graph) and `semantics' (the probability tables) of a Bayesian network. Viewing the graph as syntax is made possible by the free construction of \Cref{ex:freecdo}. Given an ordered DAG $\g=(V_{\g},E_{\g})$, we define $\cdsyn{\g}$ as the free CD-category given by the signature $(V_\mathcal{G}, \Sigma_{\mathcal{G}})$, where $\Sigma_{\mathcal{G}} \coloneqq \left\lbrace{\minitikzfig{parents_morph} \given v \in V_{\mathcal{G}}}\right\rbrace$. As stated in the following proposition, we can identify Bayesian networks based on $\g$ with models of $\g$ in $\finstoch$.

\begin{proposition}[{\cite[Prop.~3.1]{jacobs2019causal_surgery}}]\label{prop:bayesian_functor}
	Let $\g$ be an ordered DAG. 
	Bayesian networks over $\g$ are in bijective correspondence with CD-functors $\cdsyn{\g}\to \finstoch$.
\end{proposition}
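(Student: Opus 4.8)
The plan is to exhibit explicit maps in both directions between Bayesian networks over $\g$ and CD-functors $\cdsyn{\g} \to \finstoch$, and check they are mutually inverse. Since $\cdsyn{\g}$ is the \emph{free} CD-category on the signature $(V_{\g}, \Sigma_{\g})$, the universal property immediately tells us that a CD-functor $F \colon \cdsyn{\g} \to \finstoch$ is uniquely determined by: (a) the image $F(v) \in \ob(\finstoch)$ of each generating object $v \in V_{\g}$, which must be a finite set; and (b) the image $F(\sigma_v) \colon F(v) \to \bigotimes_{w \in \parents{v}} F(w)$ of each generating morphism $\sigma_v = \minitikzfig{parents_morph} \in \Sigma_{\g}$, which must be a stochastic matrix of the appropriate type — with no further constraints, because there are no equations imposed on the generators beyond the CD-axioms (which $F$ preserves automatically). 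This is precisely the data of a Bayesian network over $\g$ in the sense of \cref{def:BN}: the object assignment is $\tau$, and $F(\sigma_v)$ is exactly the conditional $f_v$ (viewed as a morphism $\tau(v) \to \prod_{w \in \parents{v}} \tau(w)$ as in \cref{rem:ordering}, which is where the choice of topological ordering is used to pin down the order of the tensor factors).

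Concretely, the first direction: given a Bayesian network $(\tau, (f_v)_v)$, define $F$ on generating objects by $F(v) := \tau(v)$ and on generating morphisms by $F(\sigma_v) := f_v$, then extend freely; the universal property of \cref{ex:freecdo} guarantees this extension exists and is a well-defined CD-functor. The second direction: given a CD-functor $F$, set $\tau(v) := F(v)$ and $f_v := F(\sigma_v)$; one must check $F(v)$ is a finite set (true, since $\finstoch$ has only finite sets as objects) and $F(\sigma_v)$ is a stochastic matrix of type $\tau(v) \to \prod_{w \in \parents{v}} \tau(w)$ (true by functoriality, since $F$ preserves the tensor product strictly by \cref{assump:strict}, so $F\bigl(\bigotimes_{w} w\bigr) = \bigotimes_w F(w) = \prod_w \tau(w)$). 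That these two assignments are mutually inverse is then immediate on the generating data, and hence — again by the universal property, since a CD-functor out of a free CD-category is determined by its action on generators — on all of $\cdsyn{\g}$.

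The one genuinely substantive point, rather than a routine bookkeeping check, is the ordering subtlety already flagged in \cref{rem:ordering}: the passage from the "unordered" conditional distribution $f_v \colon \tau(v) \times \prod_{w \in \parents{v}} \tau(w) \to [0,1]$ of \cref{def:BN} to an actual $\finstoch$-morphism requires fixing a linear order on $\parents{v}$, and the bijection is only canonical once $\g$ is an ordered DAG. So the proof should make explicit that the topological ordering on $\g$ induces, for each $v$, a specific order on $\parents{v}$, and that $\Sigma_{\g}$ is built using this order, so that $F(\sigma_v)$ and $f_v$ live in literally the same homset without any "up to permutation" caveat. Beyond this, the result is essentially a direct instance of Lawvere-style functorial semantics and could also simply be cited from \cite[Prop.~3.1]{jacobs2019causal_surgery}; I would present the short argument above for completeness and to fix notation used later.
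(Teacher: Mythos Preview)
Your proposal is correct and matches the approach the paper takes. In fact, the paper does not give an explicit proof of this proposition at all---it simply cites \cite[Prop.~3.1]{jacobs2019causal_surgery}---but your argument is exactly the one the paper spells out for the analogous Markov-network statement (\cref{prop:markov_functor}): use the universal property of the free CD-category to reduce a CD-functor to its action on generating objects and morphisms, and observe that this data is precisely a Bayesian network. Your additional care about the ordering subtlety (\cref{rem:ordering}) is appropriate and makes the bijection canonical rather than up to permutation.
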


We now focus on Markov networks, with the goal of achieving a characterisation analogous to \Cref{prop:bayesian_functor}. 
First, we recall how these models are defined in the literature. 

\begin{definition} 
	Given an undirected graph $\h=(V_{\h}, E_{\h})$, a \newterm{clique} $C$ is a subset of $V_{\h}$ such that for each $v,w\in C$, the set $\lbrace v,w \rbrace$ is an edge. 
The set of cliques is denoted by $\clique{\h}$.
A \newterm{Markov network} over $\h$ is given by an assignment $\tau(v)$ of a finite set for each vertex $v$, together with a function $\phi_C \colon \prod_{v \in C}\tau(v) \to \mathbb{R}^{\ge 0}$, called \emph{factor}, for each clique $C \in \clique{\h}$. 
\end{definition}
Such a Markov network yields a distribution defined by $\omega(V_{\h}) = \frac{1}{Z}\prod_{C\in \clique{\h}} \phi_C(C)$, where $Z$ is a normalisation coefficient. 
Note that $Z$ can be zero, thus $\omega$ is either a probability distribution or identically zero.
When $\omega$ satisfies the latter, we say that the Markov network is \newterm{degenerate}. The unnormalised distribution associated to a Markov network can be represented diagrammatically by comparing all occurrences of the outputs of the factors. Before formulating this construction in whole generality, we show it via an example.

\begin{figure}[!t]
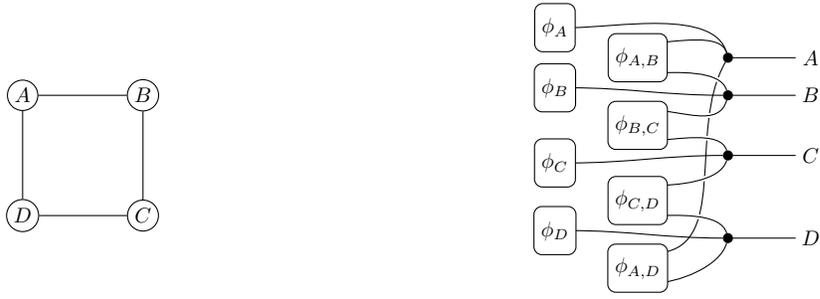

	\centering
	\begin{subfigure}[b]{0.43\textwidth}
		\centering
	\tikzfig{undirected_graph}
	\end{subfigure}
	\hspace*{1cm}
	\begin{subfigure}[b]{0.43\textwidth}
		\centering
	{\tikzfig{markov_network_ex}}
	\end{subfigure}
	\caption{An undirected graph and the string diagram representing its unnormalised distribution.
	Although more graphically complex, the string diagram makes all contributing factors explicit. This approach is also commonly reflected in the theory of PGMs through the use of factor graphs, which are more descriptive.}
	\label{fig:markov_network_ex}
\end{figure}
\begin{example}\label{ex:misconception}
	The undirected graph in \cref{fig:markov_network_ex} illustrates the differences between Markov and Bayesian networks. In this scenario, adapted from \cite[Ex.~3.8]{koller2009probabilistic}, four students --- Alice ($A$), Bob ($B$), Charles ($C$), and Debbie ($D$) --- meet in pairs to work on their class homework. 
	$A$ and $C$ do not get along well, nor do $B$ and $D$, so the only pairs that do \emph{not} meet are these two.
	During class, the professor misspoke, leading to a potential misconception among the students.
	Since $A$ and $C$ do not communicate directly to each other, they can only influence each other through $B$ and $D$. 
	Similarly, $B$ and $D$ only influence each other through $A$ and $C$.
	Note the symmetry of these relationships cannot be captured by a Bayesian network, as it inherently requires a specific choice of directionality among student-vertices. An example of a Markov network over this graph is given by the factors below, where vertices are associated with sets $A=\lbrace a,\no{a}\rbrace$, $B=\lbrace b,\no{b}\rbrace$, $C=\lbrace c,\no{c}\rbrace$, $D=\lbrace d,\no{d}\rbrace$.
	\begin{equation}\label{tab:misconception_factors}
		{\begin{tabular}{|c|c|c|}
		\hline
		\multicolumn{3}{|c|}{$\phi_{AB}$}\\
		\hline\hline
		$a$&$b$& $10$ \\
		\hline 
		$a$ & $\no{b}$ & $1$\\
		\hline 
		$\no{a}$& $b$ & $5$\\
		\hline
		$\no{a}$&$\no{b}$ & $30$\\
		\hline
		\end{tabular}}	
		\hspace{0.3cm}
		{\begin{tabular}{|c|c|c|}
		\hline
		\multicolumn{3}{|c|}{$\phi_{BC}$}\\
		\hline\hline
		$b$&$c$& $100$ \\
		\hline 
		$b$ & $\no{c}$ & $1$\\
		\hline 
		$\no{b}$& $c$ & $1$\\
		\hline
		$\no{b}$&$\no{c}$ & $100$\\
		\hline
		\end{tabular}}	
		\hspace{0.3cm}
		{\begin{tabular}{|c|c|c|}
		\hline
		\multicolumn{3}{|c|}{$\phi_{CD}$}\\
		\hline\hline
		$c$&$d$& $1$ \\
		\hline 
		$c$ & $\no{d}$ & $100$\\
		\hline 
		$\no{c}$& $d$ & $100$\\
		\hline
		$\no{c}$&$\no{d}$ & $1$\\
		\hline
		\end{tabular}}	
		\hspace{0.3cm}
		{\begin{tabular}{|c|c|c|}
		\hline
		\multicolumn{3}{|c|}{$\phi_{AD}$}\\
		\hline\hline
		$a$&$d$& $100$ \\
		\hline 
		$a$ & $\no{d}$ & $1$\\
		\hline 
		$\no{a}$& $d$ & $1$\\
		\hline
		$\no{a}$&$\no{d}$ & $100$\\
		\hline
		\end{tabular}}
	\end{equation} 

	In our interpretation, $\no{x}$ indicates that $X$ does not have the misconception. As we do not wish to consider an inherent difference between the various students, all the factors over a single node are omitted (we can simply set them to be constantly one). 
	The chosen numbers highlight some key aspects of the students relationships: $C$ and $D$ are prone to disagree, while all other pairs tend to agree. Additionally, $A$ and $B$ are less likely to have the misconception, and the fact that $\phi_{AB}(a,\no{b})< \phi_{AB} (\no{a},b)$ indicates that when they do disagree, it is more plausible that $B$ had the misconception.
\end{example}

To establish an analogue of \cref{prop:bayesian_functor} for Markov networks, we use the free construction of \cref{ex:freecor}.
Given any \emph{ordered} undirected graph $\h=(V_\mathcal{H}, E_{\mathcal{H}})$, consider the signature given by $(V_\mathcal{H}, \Sigma_{\mathcal{H}})$, with $\Sigma_{\mathcal{H}} \coloneqq \left\lbrace{\minitikzfig{clique_morph} \given C \in \clique{\mathcal{H}}}\right\rbrace$, and define $\syn{\mathcal{H}} \coloneqq \freehyp{V_\mathcal{H}, \Sigma_{\mathcal{H}}}$.
Note that the order here is necessary to give a well-defined box for each clique, since otherwise we would not know how to order the outputs (cf.~\cref{rem:ordering}).
\begin{proposition}\label{prop:markov_functor}
	Let $\h$ be an ordered undirected graph.
	Markov networks over $\h$ are in bijective correspondence to hypergraph functors $\syn{\h} \to \mat$.
\end{proposition}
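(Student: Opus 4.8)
The plan is to derive the statement from the universal property of the free hypergraph category (Example~\ref{ex:freecor}), in direct analogy with the way Proposition~\ref{prop:bayesian_functor} follows from the universal property of $\freecdo{-}$. Recall that a hypergraph functor out of $\freehyp{A,\Sigma}$ into a hypergraph category $\cC$ is uniquely determined by, and may be freely prescribed by, a morphism of monoidal signatures $(A,\Sigma)\to\cC$: an assignment of an object of $\cC$ to each generating object in $A$, together with an assignment of a $\cC$-morphism of the matching type to each generating morphism in $\Sigma$. The structural generators (copy, delete, compare, omni) do not enter the signature: a hypergraph functor is forced to send them to the corresponding structural maps of $\cC$, and the axioms~\eqref{eq:comonoids}--\eqref{eq:hypergraph} impose no extra condition because $\cC$ is already a genuine hypergraph category. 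Consequently, hypergraph functors $\syn{\h}\to\mat$ are in bijection with signature morphisms $(V_{\h},\Sigma_{\h})\to\mat$.

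It then remains to identify the latter with Markov networks over $\h$. An assignment of a $\mat$-object --- i.e.\ of a finite set --- to each vertex $v\in V_{\h}$ is exactly a choice of the sets $\tau(v)$. For a clique $C\in\clique{\h}$, the associated generator in $\Sigma_{\h}$ has type $I\to\bigtensor_{v\in C}\tau(v)$, where the tensor factors are listed in the order induced by the total order on $V_{\h}$; this is the sole use of the hypothesis that $\h$ is ordered, and it plays exactly the role highlighted in Remark~\ref{rem:ordering}, ensuring that the clique-box has a well-defined codomain. A strict monoidal functor with $v\mapsto\tau(v)$ must send this generator to a $\mat$-morphism $I\to\bigtensor_{v\in C}\tau(v)$, which by definition of $\mat$ is precisely a function $\prod_{v\in C}\tau(v)\to\mathbb{R}^{\ge 0}$, that is, a factor $\phi_C$. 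Hence a signature morphism $(V_{\h},\Sigma_{\h})\to\mat$ is precisely the data $\bigl(\tau,(\phi_C)_{C\in\clique{\h}}\bigr)$ of a Markov network over $\h$; the passages ``network $\mapsto$ induced functor'' and ``functor $\mapsto$ restriction to the generators'' are mutually inverse by the uniqueness clause of the universal property, which yields the desired bijection.

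I do not expect a genuine obstacle here: the result is essentially a repackaging of the universal property, exactly as in the Bayesian case. The two points that deserve a line of care are (i) the well-typedness of the clique-boxes, which is precisely why $\h$ must be ordered; and (ii) the fact that the semantic target must be $\mat$ and not, say, $\finstoch$ --- the factors of a Markov network are arbitrary nonnegative functions rather than stochastic matrices, so the category of generic matrices is the correct codomain (and, moreover, $\finstoch$ is not even a hypergraph category, lacking the compare--omni structure). As an optional sanity check, one may record that evaluating the induced functor on the canonical string diagram built by comparing, for each vertex, all the wires carrying it out of the various clique-boxes (Figure~\ref{fig:markov_network_ex}) returns the unnormalised distribution $\prod_{C\in\clique{\h}}\phi_C$; this confirms the agreement with the usual definition but is not needed for the bijection itself.
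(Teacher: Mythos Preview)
Your proof is correct and follows essentially the same approach as the paper's own proof: both invoke the universal property of $\freehyp{-}$ to reduce a hypergraph functor $\syn{\h}\to\mat$ to its action on generating objects and morphisms, and then identify this data with $(\tau,(\phi_C)_C)$. Your version is more discursive (explaining the role of the ordering, the choice of $\mat$ over $\finstoch$, and a sanity check on the unnormalised distribution), but the mathematical content is the same.
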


For instance, the Markov network of \Cref{ex:misconception} yields $V_\mathcal{H} = \{A,B,C,D\}$ and $\Sigma_{\mathcal{H}}$ including all the generators $\phi$ appearing in the string diagram of \cref{fig:markov_network_ex}, and the functor $\syn{\h}\to \mat$ is defined according to the factors defined in \eqref{tab:misconception_factors}.

\begin{proof}
	We note that hypergraph functors (resp.\ CD-functors) from free hypergraph categories (resp.\ free CD-categories) can be freely defined by describing where the generating objects and morphisms are sent. 
	
	Whenever we have a Markov network, we can define the hypergraph functor on generators by setting $
	F(v)\coloneqq \tau(v)$ and $F\left(\minitikzfig{clique_morph}\right) \coloneqq \phi_C$.
	Conversely, given a hypergraph functor $F \colon \syn{\mathcal{H}} \to \mat$, one simply uses the definitions above in the converse direction: $
	\tau(v) \coloneqq F(v)$ and $\phi_C \coloneqq F\left(\minitikzfig{clique_morph}\right)$.
	As the two functions defined are clearly the inverse of one another, the statement follows.
\end{proof}

\section{Syntax Functoriality}\label{sec:syntax_functoriality}

A useful result for our developments, not appearing in~\cite{jacobs2019causal_surgery}, is that $\cdsyn{\g}$ itself may be viewed as a functorial construction. Its codomain is the category $\cdcat$ of CD-categories from \Cref{def:cdcat}, while its domain is defined precisely in the following.
\begin{definition}\label{def:odag}
	The category $\odag$ of ordered DAGs is defined as follows:
	\begin{itemize}
		\item Objects are ordered DAGs, i.e.\ DAGs equipped with a total order such that $v \to w$ implies $v<w$;
		\item Morphisms are order-preserving graph homomorphisms, i.e.\ $\alpha \colon \g_1 \to \g_2$ is a function on the sets of vertices $V_{\g_1}\to V_{\g_2}$ that preserves the order and the edges.\footnote{If two vertices $x$ and $y$ have the same image under $\alpha$, we assume that the possible edges $(x,y)$ and $(y,x)$ are respected.}
	\end{itemize}
\end{definition}
The claimed functoriality requires a rigorous treatment of a special type of composition that allows a given variable to be considered multiple times.
For better understanding, let us start with a defining example.

Note $\bullet$, the \newterm{one-vertex graph}, is the final object in $\odag$. 
Thus given an ordered DAG $\g$, there always exists a unique morphism $\g \to \bullet$. 
Another important observation is that, by definition, $\cdsyn{\bullet}$ is generated by the monoidal signature consisting of a single generating object $\bullet$ and a single generating morphism \minitikzfig{state}, of type $I \to \bullet$. 
The idea is then to use a CD-functor $\cdsyn{\bullet}\to \cdsyn{\g}$ to describe the string diagram associated to the probability distribution (for \cref{ex:BEAR}, this is displayed in \cref{fig:ex_bayesian_network}).
We gain two important insights. First, we expect the functor to be \emph{contravariant}, since $\g\to \bullet$ corresponds to $\cdsyn{\bullet}\to \cdsyn{\g}$. Second, we must clarify what it means to compose by copying the information (for \cref{ex:BEAR}, the object $E$ must be copied to act as a parent of both $A$ and $R$).
Indeed, formalising copy-composition --- and later, compare-composition --- is the central focus of the present section.

Let us start by introducing some necessary notation.

\begin{notation}
	For any morphism $\phi$ in a CD-category, let us denote by $\In{\phi}$ and $\Out{\phi}$ the sets of inputs and outputs.
	Similarly, for any set of morphisms $S$, we consider $\Out{S}\coloneqq \bigcup_{\phi \in S} \Out{\phi}$, whereas $\In{S} \coloneqq \bigcup_{\phi \in S} \In{\phi}\cap \Out{S}^c$.
\end{notation}

The asymmetry between the notations $\In{S}$ and $\Out{S}$ is motivated by their use in the following.
\begin{definition}\label{def:copycomp}
	Let $\g$ be an ordered DAG. 	
	Consider a set of generators $S\subseteq\Sigma_{\g}$ and let $\phi$ be the generator in $S$ whose output is the biggest element of $\Out{S}$.
	By induction, we define $\copycomp{\emptyset}\coloneqq \id_I$ and 
	\[
	{\tikzfig{copycomp_S}}\qquad \coloneqq \qquad 
	{\tikzfig{copycomp_def}}
	\] 
	where $S'\coloneqq S \setminus \lbrace \phi \rbrace$, and $\pi\colon \In{S'} \cup \Out{S'}\to (\In{S'} \cup \Out{S'})\cap \In{\phi}$
	is the marginalization (i.e., it deletes all the occurring vertices that are not inputs of $\phi$).
	In the string diagram we omitted the permutations of the inputs to avoid using additional notation.
	For a given $S$, we refer to $\copycomp{S}$ as the \newterm{copy-composition} of $S$.
\end{definition}
By definition, if $S=\lbrace \phi \rbrace$, then $\copycomp{S}=\phi$. 

\begin{remark}\label{rem:smithe}
	Although similar, the copy-composition above differs from the one introduced by Smithe in \cite{smithe2024copycomposition}.
	Indeed, for our purposes, copy-composition must allow inputs and outputs to simply overlap without entirely matching, which extends beyond Smithe's original notion. 
	On the other hand, this relaxed notion requires additional care as everything depends on a specific order. 
	This is why we have stated it only for sets of generators $S \subseteq \Sigma_{\g}$,  making it specific to the syntax categories $\cdsyn{\g}$ and $\syn{\g}\coloneqq \freehyp{V_{\g},\Sigma_{\g}}$, as used abundantly from \cref{sec:moralisation} onwards.
\end{remark}

\begin{remark}[Semantics with the Copy-Composition]\label{rem:semantics_copycomp}
	Consider the DAG $\minitikzfig{dag_AB_1}$, and a CD-functor $F\colon \cdsyn{\g}\to \finstoch$. Set $\omega\coloneqq F(I\to A)$ and $f\coloneqq F(A \to B)$. 
	Then $F(\copycomp{\Sigma_{\g}}) = \minitikzfig{fcopyw}$, which is the probability distribution given by $A \tensor B \owns (a,b)\mapsto f(b\given a) \omega(a)$, where the occurrences of $a$ as output of $\omega$ and as input of $f$ are identified. 
	In other words, $\minitikzfig{fcopyw} = f(B\given A)\omega(A)$.

	In general, the interpretation of copy-composition, particularly when given a CD-functor $\cdsyn{\g}\to \finstoch$, is to identify all occurrences of the same variable. 
\end{remark}

\begin{lemma}\label{lem:copycomp}
	Let $S$ and $T$ be two sets of generators such that for each $\phi\in S$ and $\psi\in T$, $\Out{\phi} <\Out{\psi}$. Then 
	\[
		{\tikzfig{copycomp_ST}} \qquad = \qquad {\tikzfig{copycomp_SandT}}
	\]
	where $\pi$ marginalizes all the inputs and outputs of $S$ that are not inputs of $T$.
\end{lemma}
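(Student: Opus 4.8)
The plan is to induct on the cardinality of $T$. For the base case $T = \emptyset$, the right-hand side has $\copycomp{S \cup T} = \copycomp{S}$ composed with copies feeding an empty family, so both sides reduce to $\copycomp{S}$ with a trivial marginalization $\pi$, and the claim is immediate. For the base case $|T| = 1$, say $T = \{\psi\}$, observe that $\psi$ has the biggest output in $\Out{S \cup T}$ by the hypothesis $\Out{\phi} < \Out{\psi}$ for all $\phi \in S$; hence unfolding \cref{def:copycomp} for $\copycomp{S \cup \{\psi\}}$ peels off exactly $\psi$ and leaves $\copycomp{S}$ underneath, with a marginalization onto the inputs of $\psi$ — which is precisely the right-hand side. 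So the real content is the inductive step.

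For the inductive step, suppose $|T| \geq 2$ and let $\psi$ be the generator in $T$ with the biggest output; note this is also the biggest output in all of $S \cup T$. Write $T' = T \setminus \{\psi\}$. Unfolding \cref{def:copycomp} once on the left-hand side $\copycomp{S \cup T}$ peels off $\psi$, leaving $\copycomp{S \cup T'}$ with the appropriate copies and a marginalization $\pi'$ onto the inputs of $\psi$. Now apply the induction hypothesis to $S$ and $T'$ (the hypothesis $\Out{\phi} < \Out{\psi'}$ still holds for all $\phi \in S$, $\psi' \in T'$, since $T' \subseteq T$): this rewrites $\copycomp{S \cup T'}$ as $\copycomp{S}$ fed through copies into $\copycomp{T'}$, with a marginalization. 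Finally, reassembling — peeling $\psi$ back on top of $\copycomp{T'}$ reconstitutes $\copycomp{T}$ by \cref{def:copycomp} — and carefully tracking how the two marginalizations compose yields $\copycomp{S}$ copied into $\copycomp{T}$ with the single marginalization $\pi$ deleting all inputs and outputs of $S$ not consumed by $T$. The comonoid equations \eqref{eq:comonoids} (associativity and counitality of copy) are what justify merging the nested copy-then-marginalize patterns into the flat one on the right.

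The main obstacle I anticipate is purely bookkeeping: precisely matching the marginalization maps and the (suppressed) input permutations on the two sides. \cref{def:copycomp} explicitly says permutations of inputs are omitted from the string diagrams, and the interaction between "which vertex of $\In{S \cup T'} \cup \Out{S \cup T'}$ gets copied to feed $\psi$" versus "which vertex of $\In{S} \cup \Out{S}$ gets copied to feed $\copycomp{T'}$" requires care to see that copying-then-deleting along one route equals copying-then-deleting along the other. This is exactly the kind of equality that holds on the nose by naturality of copy and the comonoid laws, but writing it without ambiguity is the delicate part; a clean way to sidestep heavy diagram-chasing is to pass through the semantic interpretation of \cref{rem:semantics_copycomp} — both sides, under any CD-functor to $\finstoch$ (or $\mat$), compute the same product of factors with all shared-variable occurrences identified — though to get a syntactic identity one still wants the string-diagrammatic argument via \eqref{eq:comonoids}.
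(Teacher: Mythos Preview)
Your proposal is correct and follows essentially the same approach as the paper: induct on $|T|$, peel off the generator $\psi \in T$ with the largest output (which is automatically the largest in $S \cup T$), apply the induction hypothesis to $S$ and $T' = T \setminus \{\psi\}$, and then recognise the resulting diagram as $\copycomp{T}$ sitting on top of $\copycomp{S}$. The paper's proof is terser about the base case and the bookkeeping you flag, but the structure is identical.
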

\begin{proof}
	By induction, we can assume that statement holds for $T'\coloneqq T \setminus \lbrace \phi\rbrace$, where $\phi$ is the generator whose output is the biggest element. Then 
	\[
	\begin{aligned}
		{\tikzfig{copycomp_ST}}\quad &=\quad {\tikzfig{copycomp_ST1_phi}}\\
		&=\quad {\tikzfig{copycomp_S_T1_phi}}\\
		&=\quad {\tikzfig{copycomp_S_T}}
	\end{aligned}
	\]
	where $\pi=\pi_1 \otimes \pi_2$, $\pi_1$ only considers the inputs obtained from $S$ and $\pi_2$ those obtained from $T'$. Since the dashed box corresponds to $\copycomp{T}$, the statement is shown.
\end{proof}

\begin{notation}
	Let $\alpha\colon \g \to \g'$ be an order-preserving graph homomorphism, and let $v \in V_{\g'}$.
	We write $S_v^{\alpha}$, or $S_v$ when there is no confusion, for the set of generators $\phi \in \Sigma_{\g}$ such that $\Out{\phi} \subseteq \alpha^{-1}(v)$.
\end{notation}
\begin{remark}\label{rem:sv_properties}
	Let us highlight some important properties of $S_v$.
	\begin{enumerate}
		\item\label{it:sv_out} As every vertex in $V_{\g}$ is an output for some generator in $\Sigma_{\g}$, $\Out{S_v}=\alpha^{-1}(v)$.
		\item\label{it:sv_partition} The collection $\lbrace S_v \mid v\in \alpha(\g) \rbrace$ gives a partition of $\Sigma_{\g}$ because the same holds for the preimage.
		\item\label{it:sv_in_all} We have 
		\[
			\bigcup_{\phi \in S_v} \In{\phi} = \bigcup_{\phi \in S_v} \parents{\Out{\phi}} = \bigcup_{w \in \alpha^{-1}(v)} \parents{w} = \parents{\alpha^{-1}(v)},
		\]
		and therefore $\In{S_v}=\parents{\alpha^{-1}(v)}\cap \alpha^{-1}(v)^c$ by \cref{it:sv_out}. In particular, from this equality we infer that $\In{S_v}\subseteq \alpha^{-1}(\parents{v})$.
	\end{enumerate}
\end{remark}

We are now ready to define the association on morphisms. 
We will tacitly use \cref{rem:sv_properties} in the definition.

\begin{definition}\label{def:cdsyn_morph}
	Let $\alpha \colon \g \to \g'$ be an order-preserving graph homomorphism.
	We define a functor $\cdsyn{\alpha}\colon \cdsyn{\g'}\to \cdsyn{\g}$ as follows.
	\begin{itemize}
		\item On objects, $v$ is sent to the tensor product given by $\alpha^{-1}(v)$ (and ordered according to the order on $\g$).
		\item On morphisms, $\cdsyn{\alpha}$ is given by 
		\[
			{\tikzfig{parents_morph}} \quad \longmapsto \quad 
			{\tikzfig{synf_dags}}
		\]
		where $\In{S_v}^c\coloneqq \alpha^{-1}(\parents{v})\setminus \In{S_v}$. 
		As in the previous definition, we omit some permutation of objects to avoid additional notation.
	\end{itemize}
\end{definition}
Two concrete examples of $\cdsyn{\alpha}$ are provided by \cref{fig:graphhoms_bn_ex}.
 
\begin{figure}[tp]
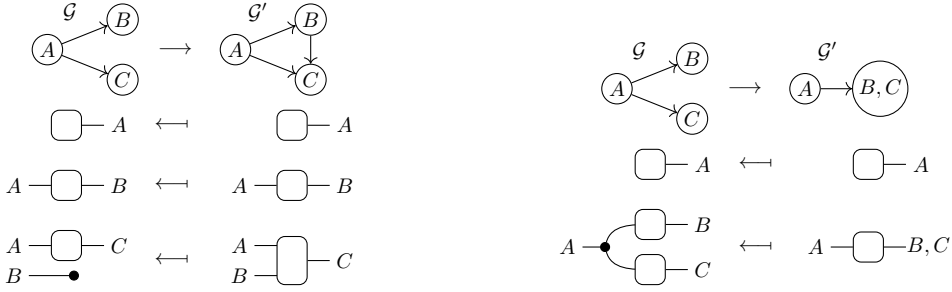

	\begin{subfigure}[b]{0.44\textwidth}
		\centering
		\tikzfig{odag_ex1}
		\par\vspace{1ex}
		{\tikzfig{odag_ex1_stringdiagram}}
	\subcaption{The missing connection $B\to C$ in $\g$ results in a deletion of the $B$ input.}\label{subfig:graphhoms_bn_ex1}
	\end{subfigure}
	\hspace*{0.6cm}
	\begin{subfigure}[b]{0.44\textwidth}
		\centering
		\tikzfig{odag_ex2}
		\par\vspace{1ex}
		{\tikzfig{odag_ex2_stringdiagram}}
	\subcaption{The added distinction of $B$ and $C$ gives rise to a copy.}\label{subfig:graphhoms_bn_ex2}
	\end{subfigure}
	\caption{Examples of the contravariant action of the functor of \Cref{thm:graphhom-cd} on morphisms $\g \to \g'$ (top), resulting in CD-functors (bottom), of which we describe the action on generators of $\cdsyn{\g}$.}\label{fig:graphhoms_bn_ex}
\end{figure}

\begin{lemma}\label{lem:copycomp2}
	Let $\alpha \colon \g \to \g'$ be an order-preserving graph homomorphism and let $T\subset\Sigma_{\g'}$. Then 
	\[
	\cdsyn{\alpha}\left(
	{\tikzfig{copycompT}}\right) \qquad =\qquad  
	{\tikzfig{synf_dags_lem}}
	\]
\end{lemma}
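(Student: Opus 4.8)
The plan is to prove the identity by induction on $|T|$, following the inductive structure of copy-composition in \cref{def:copycomp}. The only structural facts I will use about $\cdsyn{\alpha}$ are that it is a bona fide (strict) CD-functor — being defined on the generators of the free CD-category $\cdsyn{\g'}$ (\cref{ex:freecdo}), it extends uniquely and commutes with $\otimes$, with sequential composition, with permutations, and with $\cop$ and $\del$, hence with every marginalization — together with its explicit values on generators from \cref{def:cdsyn_morph}. For the base cases: if $T=\emptyset$ then $\copycomp{T}=\id_I$, which $\cdsyn{\alpha}$ preserves, and the right-hand side is the empty copy-composition $\id_I$ as well (since $\bigcup_{v\in\Out{\emptyset}}S_v=\emptyset$); if $T=\{\phi\}$ with $\Out{\phi}=\{v\}$, then $\copycomp{T}=\phi$ and $\bigcup_{w\in\Out{T}}S_w=S_v$, so the claim is exactly the definition $\cdsyn{\alpha}(\phi)=\copycomp{S_v}$ precomposed with the deletion of $\In{S_v}^{c}=\alpha^{-1}(\parents v)\setminus\In{S_v}$, modulo checking that the marginalization named in the statement specializes to this deletion.

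For the inductive step I would write $T=T'\cup\{\phi\}$ with $\phi$ the generator whose output $v$ is the largest element of $\Out{T}$, as in \cref{def:copycomp}. Unfolding that definition presents $\copycomp{T}$ as $\copycomp{T'}$ and $\phi$ glued together by copies, an internal marginalization $\pi$, and some permutations; since $\cdsyn{\alpha}$ commutes with all of these, $\cdsyn{\alpha}(\copycomp{T})$ is the same gluing applied to $\cdsyn{\alpha}(\copycomp{T'})$ and $\cdsyn{\alpha}(\phi)$. I then substitute: by the induction hypothesis $\cdsyn{\alpha}(\copycomp{T'})$ is $\copycomp{\bigcup_{w\in\Out{T'}}S_w}$ precomposed with its marginalization, and by \cref{def:cdsyn_morph} $\cdsyn{\alpha}(\phi)$ is $\copycomp{S_v}$ precomposed with the deletion of $\In{S_v}^{c}$.

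The decisive move is to recognize the resulting composite as $\copycomp{\bigcup_{w\in\Out{T}}S_w}$ (up to the stated marginalization) by invoking \cref{lem:copycomp}. Its hypothesis holds here: for every $w\in\Out{T'}$ one has $w<v$ in $\g'$, and since $\alpha$ is order-preserving, $y\le x$ in $\g$ forces $\alpha(y)\le\alpha(x)$, so the entire fibre $\alpha^{-1}(w)$ lies below $\alpha^{-1}(v)$; hence every generator of $\bigcup_{w\in\Out{T'}}S_w$ has output strictly smaller than every generator of $S_v$. As the fibres are pairwise disjoint we moreover have $\bigl(\bigcup_{w\in\Out{T'}}S_w\bigr)\sqcup S_v=\bigcup_{w\in\Out{T}}S_w$. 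Applying \cref{lem:copycomp} therefore collapses the copy-and-marginalization gluing of $\copycomp{\bigcup_{w\in\Out{T'}}S_w}$ and $\copycomp{S_v}$ into $\copycomp{\bigcup_{w\in\Out{T}}S_w}$, and it remains to check that the leftover deletions — $\In{S_v}^{c}$, the internal $\pi$ transported along $\alpha$, and the marginalization carried over by the induction hypothesis — reassemble into exactly the marginalization $\alpha^{-1}(\In{T})\to\In{\bigcup_{w\in\Out{T}}S_w}$ on the right-hand side.

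I expect the only genuine difficulty to be this final bookkeeping: reconciling the three sources of deletions/marginalizations (and the suppressed permutations) and confirming they combine to precisely what \cref{def:copycomp} itself would produce for $\bigcup_{w\in\Out{T}}S_w$. The delicate case is an input shared by several generators without matching any output — exactly the overlap phenomenon flagged in \cref{rem:smithe} — where one has to ensure it is copied, rather than duplicated-and-then-deleted, the right number of times; everything else is a routine diagram chase using the comonoid laws \eqref{eq:comonoids}.
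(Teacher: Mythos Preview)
Your proposal is correct and follows essentially the same route as the paper: induction on $|T|$, peeling off the generator with largest output, applying the induction hypothesis together with the definition of $\cdsyn{\alpha}$ on generators, and collapsing the result via \cref{lem:copycomp}. The paper's proof is simply terser about the base cases and the marginalization bookkeeping you spell out, and it invokes \cref{rem:sv_properties}\ref{it:sv_partition} where you argue disjointness of the fibres directly.
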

\begin{proof}
	By induction, let us assume the statement is true for $T'= T \setminus\lbrace \phi\rbrace$, where $\phi$ is the generator whose output is the biggest element. Then
	\[
	\begin{aligned}
		\cdsyn{\alpha}\left({\tikzfig{copycompT}}\right) \quad &=\quad \cdsyn{\alpha}\left({\tikzfig{copycomp_T1_phi}}\right)\\
		&=\quad {\tikzfig{copycomp_T1phi_im}}
	\end{aligned}
	\] 
	and we conclude by \cref{lem:copycomp} and \cref{it:sv_partition} of \cref{rem:sv_properties}.
\end{proof}

\begin{theorem}\label{thm:graphhom-cd}
	The association 
	\[
	\begin{array}{rccc}
		\cdsyn{} \colon& \odag & \to & \cdcat\\
		& \g,\quad \alpha \colon \g \to \g' &\mapsto & \cdsyn{\g},\quad \cdsyn{\alpha}\colon \cdsyn{\g'}\to \cdsyn{\g}
	\end{array}
	\]
	is a contravariant functor.
\end{theorem}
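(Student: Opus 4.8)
The plan is to verify the two functoriality axioms, after first observing that each $\cdsyn{\alpha}$ is a genuine CD-functor. Well-definedness is essentially automatic: $\cdsyn{\g'}=\freecdo{V_{\g'},\Sigma_{\g'}}$ is freely generated, so a CD-functor out of it may be prescribed arbitrarily on generating objects and morphisms (\cref{ex:freecdo}), and the only obligation is a type-check. For a generator $\phi\in\Sigma_{\g'}$ with $\Out{\phi}=\{v\}$, the diagram assigned in \cref{def:cdsyn_morph} has output object $\Out{S_v}=\alpha^{-1}(v)=\cdsyn{\alpha}(v)$ by \cref{it:sv_out} of \cref{rem:sv_properties}, and input object $\In{S_v}\cup\In{S_v}^{c}$; since $\In{S_v}\subseteq\alpha^{-1}(\parents{v})$ by \cref{it:sv_in_all} of \cref{rem:sv_properties}, this equals $\In{S_v}\cup\bigl(\alpha^{-1}(\parents{v})\setminus\In{S_v}\bigr)=\alpha^{-1}(\parents{v})=\cdsyn{\alpha}\bigl(\bigotimes_{w\in\parents{v}}w\bigr)$, as required (all tensor factors being ordered by the order of $\g$). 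Hence $\cdsyn{\alpha}$ is a bona fide CD-functor.

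For preservation of identities, take $\alpha=\id_\g$. Then $\alpha^{-1}(v)=\{v\}$, so $\cdsyn{\id_\g}$ is the identity on objects. Since $\Sigma_\g$ contains exactly one generator with output $v$, namely $\phi$ itself, we get $S_v=\{\phi\}$ and so $\copycomp{S_v}=\phi$ (as noted after \cref{def:copycomp}). Moreover $\In{S_v}=\parents{v}\cap\{v\}^{c}=\parents{v}$ because $v\notin\parents{v}$ in a DAG, whence $\In{S_v}^{c}=\emptyset$ and no wires are adjoined or deleted. Thus $\cdsyn{\id_\g}$ fixes every generator, i.e.\ $\cdsyn{\id_\g}=\id_{\cdsyn{\g}}$.

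The substantive step is the contravariant composition law. Given $\alpha\colon\g\to\g'$ and $\beta\colon\g'\to\g''$, both $\cdsyn{\beta\circ\alpha}$ and $\cdsyn{\alpha}\circ\cdsyn{\beta}$ are CD-functors $\cdsyn{\g''}\to\cdsyn{\g}$, so by freeness of $\cdsyn{\g''}$ it suffices to check they agree on generators. On an object $v\in V_{\g''}$ both give $(\beta\circ\alpha)^{-1}(v)=\bigotimes_{w\in\beta^{-1}(v)}\alpha^{-1}(w)$, with orders induced from $\g$, so they coincide. On a generator $\phi\in\Sigma_{\g''}$ with $\Out{\phi}=\{v\}$, the functor $\cdsyn{\beta}$ sends $\phi$ (up to deletions of $\In{S^{\beta}_v}^{c}$) to $\copycomp{S^{\beta}_v}$, a copy-composition of generators of $\Sigma_{\g'}$; applying $\cdsyn{\alpha}$, one invokes \cref{lem:copycomp2} to express $\cdsyn{\alpha}\bigl(\copycomp{S^{\beta}_v}\bigr)$ through the copy-compositions of the sets $S^{\alpha}_w$ for $w\in\Out{S^{\beta}_v}=\beta^{-1}(v)$, and \cref{lem:copycomp} to fuse these into a single copy-composition. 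The combinatorial identity that makes the two sides agree is
\[
\bigcup_{w\in\beta^{-1}(v)}S^{\alpha}_w=\bigl\{\,\psi\in\Sigma_\g : \alpha(\Out{\psi})\in\beta^{-1}(v)\,\bigr\}=S^{\beta\circ\alpha}_v,
\]
which holds because each $\Sigma_\g$-generator has a singleton output and the $S^{\alpha}_w$ are pairwise disjoint (\cref{it:sv_partition} of \cref{rem:sv_properties}); in parallel one checks that the input-handling matches, governed by $\bigcup_{w\in\beta^{-1}(v)}\parents{\alpha^{-1}(w)}=\parents{\alpha^{-1}(\beta^{-1}(v))}$, so that the deletions of non-parent inputs and the copies of parents shared between several $w\in\beta^{-1}(v)$ occur identically on both sides. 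I expect this last bookkeeping — tracking the omitted permutations, the nested deletions, and the copies of vertices lying in $\parents{w}$ for several $w\in\beta^{-1}(v)$ — to be the delicate point; \cref{lem:copycomp2} and \cref{lem:copycomp} are precisely the tools engineered to make the double application of \cref{def:cdsyn_morph} collapse to the single one for $\beta\circ\alpha$. Once the two string diagrams are identified, the functors agree on all generators and are therefore equal, completing the proof.
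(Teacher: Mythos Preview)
Your proposal is correct and follows essentially the same route as the paper: reduce to generators by freeness, check identities directly, and for composition apply \cref{lem:copycomp2} together with the combinatorial identity $\bigcup_{w\in\beta^{-1}(v)}S^{\alpha}_w=S^{\beta\alpha}_v$. One small simplification: your separate invocation of \cref{lem:copycomp} ``to fuse these into a single copy-composition'' is unnecessary, since \cref{lem:copycomp2} already returns a single $\copycomp{\bigcup_{\phi\in T}S_{\Out{\phi}}}$; the paper accordingly cites only \cref{lem:copycomp2} at this step and absorbs the input-handling bookkeeping into it.
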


\begin{proof}
	We note that indeed $\cdsyn{\id}=\id$, so we are left to prove composition.
	Let $\alpha\colon \g_1 \to \g_2$ and $\beta\colon \g_2\to \g_3$. 
	On objects, $\cdsyn{\alpha}\cdsyn{\beta}(v) = \cdsyn{\beta\alpha} (v)$ holds by properties of the preimage and the fact that $\beta$ and $\alpha$ are order-preserving.
	
	Let us now consider a generator $\minitikzfig{parents_morph}$ in $\cdsyn{\g_3}$. 
	By \cref{lem:copycomp2}, it suffices to show that $\bigcup_{\phi \in S_v^{\beta}} {S_{\Out{\phi}}^{\alpha}} = {S_v^{\beta\alpha}}$. But this follows immediately from \cref{rem:sv_properties}:
	\[
		\bigcup_{\phi \in S_v^{\beta}} S_{\Out{\phi}}^{\alpha} = \bigcup_{w \in \beta^{-1}(v)} S_w^{\alpha} = \lbrace \phi \mid \Out{\phi} \subseteq \alpha^{-1}(w) \text{ for some }w \in \beta^{-1}(v) \rbrace = {S_v^{\beta\alpha}}.
	\]
\end{proof}

\begin{remark}[Factorising Distributions]\label{rem:factorising_distributions_DAG}
	We now return to our defining example. 
	Via the functor $\cdsyn{}$, the contraction $\g \to \bullet$ gives the CD-functor $!_{\g}\colon \cdsyn{\bullet}\to \cdsyn{\g}$, which sends the single generator $\minitikzfig{state}$ to $\copycomp{\Sigma_{\g}}$.

	When considering a CD-functor $F\colon \cdsyn{\g}\to \finstoch$, the distribution $\omega$ associated to $\cdsyn{\bullet}\to \cdsyn{\g}\to \finstoch$ is then defined by $F(\copycomp{\Sigma_{\g}})$.
	Since CD-functors preserve $\minitikzfig{copy}$ and $\minitikzfig{del}$, we have a factorisation of $\omega$. 
	More explicitly, set $f_v\coloneqq F(\minitikzfig{parents_morph})$.
	Then, $\omega(V_{\g})= \prod_{v} f_v(v\given \parents{v})$, where we identified all occurrences of the same variable, as discussed in \cref{rem:semantics_copycomp}.
\end{remark}


Just as in the case of Bayesian networks, also the construction of $\syn{\h}$ yields a functor. Its codomain is $\hypcat$, introduced in \Cref{def:hypcat}, while its domain is the following category.

\begin{definition}
	The category $\ougr$ of ordered undirected graphs is defined as follows:
	\begin{itemize}
		\item Objects are ordered undirected graphs;
		\item Morphisms are order-preserving graph homomorphisms, i.e.\ $\alpha \colon \h_1 \to \h_2$ is a function on the sets of vertices $V_{\h_1}\to V_{\h_2}$ that preserves the order and the edges.\footnote{If two vertices $x$ and $y$ have the same image under $f$, we assume that the possible edge $\lbrace x,y\rbrace$ is respected.}
	\end{itemize}
\end{definition}

Similarly to the case of $\cdsyn{}$, we need to define what it means to consider several morphisms together. 
Importantly, as here we do not need to take care of directionality, this definition can be extended to a general situation, which will be important from \cref{sec:moralisation} onwards (see \cref{lem:graphcopycomp}).
\begin{definition}
	A morphism $\phi \colon I\to C$ in a hypergraph category is a \newterm{factor} if it comes equipped with a sequence of objects $(X_1 ,\dots , X_n)$ such that $X_1 \tensor \cdots \tensor X_n=C$. 
	
	We write $\Out{\phi}\coloneqq \lbrace X_1, \dots, X_n\rbrace$, and, more generally, $\Out{S}\coloneqq \bigcup_{\phi \in S} \Out{\phi}$ for a finite set of factors $S$. 
\end{definition}

A finite set of factors $S$ is assumed to come with an ordering of $\Out{S}$ to avoid the permutation issue discussed in \cref{rem:ordering}.

In the categories $\syn{\g}$ and $\syn{\h}$, every morphism $I\to C$ is tacitly assumed to become a factor by considering the vertices of the associated graph. In particular, every set of factors in these categories comes with an induced ordering.

\begin{definition}
	For every finite set of factors $S$ in a hypergraph category, we define by induction $\compcomp{\emptyset}=\id_I$ and
	\[
		{\tikzfig{compare_comp}} \qquad \coloneqq \qquad {\tikzfig{compare_comp_def}}
	\]
	where $\phi$ is arbitrarily chosen and $S'\coloneqq S \setminus \lbrace \phi \rbrace$. Permutations of the outputs on the right hand side are omitted for brevity. 
	For a given $S$, we refer to $\compcomp{S}$ as the \newterm{compare-composition} of $S$.
\end{definition}
\begin{lemma}\label{lem:compcomp}
	The compare-composition is well-defined, i.e. it does not depend on the choice of $\phi$. 
	Moreover, for any disjoint sets of factors $S$ and $T$,
	\[
		{\tikzfig{compare_comp_ST}}\qquad =\qquad {\tikzfig{compare_comp2}}	
	\]
\end{lemma}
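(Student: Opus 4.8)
The plan is to isolate a \emph{choice-free normal form} for $\compcomp{S}$, prove that the inductive clause computes it regardless of which factor $\phi$ is peeled off, and then read both assertions off that normal form. Fix any linear order on the (finite) set $S$; for an object $X$ let $n_X$ denote the number of factors $\phi\in S$ that list $X$ among their outputs, counted with multiplicity. Let $m_X\colon X^{\otimes n_X}\to X$ be the $n_X$-ary \emph{compare}, which is well defined by commutativity and associativity of the commutative monoid on $X$, i.e.\ the first line of \eqref{eq:hypergraph} (so $m_X=\id_X$ when $n_X=1$). I claim
\[
\compcomp{S}\;=\;\Bigl(\bigotimes_{X\in\Out{S}} m_X\Bigr)\circ\sigma\circ\Bigl(\bigotimes_{\phi\in S}\phi\Bigr),
\]
where $\sigma$ is the unique symmetric-monoidal rewiring routing, for each $X$, the $X$-labelled outputs of the various $\phi$ into the $n_X$ inputs of $m_X$. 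The right-hand side visibly does not depend on any choice.

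To establish the displayed identity I induct on $|S|$; the cases $|S|\le 1$ are immediate. For the step, expand $\compcomp{S}$ via the inductive clause by removing an arbitrary $\phi$, rewrite $\compcomp{S\setminus\{\phi\}}$ by the induction hypothesis, and note that tensoring on $\phi$ and then inserting one \emph{compare} for every object shared between $\phi$ and $\Out{S\setminus\{\phi\}}$ reassembles exactly the normal form for $S$: object by object this is the equation $m_X=m_X\circ(m_X'\otimes m_X'')$ expressing the $(n_X'+n_X'')$-ary compare in terms of the $n_X'$- and $n_X''$-ary ones, which is associativity of the monoid on $X$ (with commutativity used to slide the newly merged wire into position), while the remaining wire shuffling is naturality of the symmetry together with strictness (\cref{assump:strict}). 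Since the result never mentions $\phi$, well-definedness follows. Note that, unlike the copy-composition of \cref{lem:copycomp}, no hypothesis comparing the outputs of the factors is needed here, precisely because \emph{compare} is commutative.

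For the \emph{Moreover} clause, let $S$ and $T$ be disjoint finite sets of factors --- they may still share objects. Applying the normal form to $S\cup T$ and using that $n_X^{S\cup T}=n_X^S+n_X^T$, split each $m_X$ on $S\cup T$ as $m_X\circ(m_X^S\otimes m_X^T)$ by associativity; this groups the $\phi\in S$ into $\compcomp{S}$ and the $\psi\in T$ into $\compcomp{T}$ and leaves a single \emph{compare} for each $X\in\Out{S}\cap\Out{T}$ connecting the two subdiagrams, which is the asserted right-hand side. Equivalently, one may induct on $|T|$: peel a $\psi\in T$ off $\compcomp{S\cup T}$, legitimised by the well-definedness just proved, invoke the inductive hypothesis for $S$ and $T\setminus\{\psi\}$, and reorganise the \emph{compare} maps by \eqref{eq:hypergraph}.

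The genuine difficulty is bookkeeping rather than conceptual content: the string-diagram clause deliberately suppresses ``permutations of the outputs'', so the work lies in making the rewiring $\sigma$ explicit and checking that every exchange used in the induction is an instance of symmetric-monoidal naturality combined with the commutative-monoid axioms of \eqref{eq:hypergraph} --- in particular that the Frobenius equations are never invoked, which holds since compare-composition bends no wire. A minor point worth flagging is that an object could a priori occur more than once in a single factor's output sequence; this cannot happen for the factors of $\syn{\g}$ and $\syn{\h}$ (whose outputs are distinct vertices), and in general is harmlessly absorbed into the count $n_X$ and the $n_X$-ary compare $m_X$.
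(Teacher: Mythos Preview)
Your proof is correct and is precisely a detailed expansion of the paper's own treatment: the paper omits the proof entirely, stating only that it ``immediately follows from associativity and commutativity of \minitikzfig{compare}''. Your normal-form argument makes explicit exactly those two axioms and nothing more, confirming in particular your observation that the Frobenius laws are never needed.
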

The proof is omitted as it immediately follows from associativity and commutativity of \minitikzfig{compare}.

\begin{remark}[Semantics with the Compare-Composition]\label{rem:semantics_compcomp}
	\cref{rem:semantics_copycomp} can be translated to this setting, meaning that the compare-composition identifies all occurrences of the same variable on the different factors. 
	For example, take two factors $\phi\colon I \to A \tensor B$ and $\psi\colon I \to A \tensor C$ in $\mat$. 
	Then $\compcomp{\lbrace \phi,\psi\rbrace}$ is given by the factor $A\tensor B\tensor C\owns (a,b,c)\mapsto \phi(a,b)\psi(a,c)$, also written as $\phi(AB)\psi(AC)$.
\end{remark}

\begin{definition}\label{def:syn_morph}
	Let $\alpha \colon \h \to \h'$ be an order-preserving graph homomorphism. The hypergraph functor $\syn{\alpha}\colon \syn{\h'} \to \syn{\h}$ is defined as follows:
	\begin{itemize}
	\item It maps an object $v$ to the tensor product given by $\alpha^{-1}(v)$.
	\item On morphisms, 
	\[
		{\tikzfig{clique_morph}}\qquad \longmapsto \qquad {\tikzfig{synf_ugr}}
	\]
	where $S_C\coloneqq \lbrace\phi \colon I \to D \mid \alpha(D)=C\rbrace$.
	\end{itemize}
\end{definition}
Two concrete examples of $\syn{\alpha}$ are provided by \cref{fig:graphhoms_mn_ex}. 	

\begin{figure}[tp]
	\begin{subfigure}[b]{0.44\textwidth}
		\centering
		\tikzfig{ougr_ex1}
		\par\vspace{1ex}
		{\tikzfig{ougr_ex1_stringdiagram}}
	\subcaption{The missing connection $B\edge C$ in $\h$ results in disregarding the clique $\lbrace B,C\rbrace$ and consequently also $\lbrace A,B,C\rbrace$. For brevity, we used $X$ as a placeholder for the other cliques $\lbrace A\rbrace$, $\lbrace B\rbrace$, $\lbrace C\rbrace$, $\lbrace A,B\rbrace$, and $\lbrace A,C\rbrace$.
	}\label{subfig:graphhoms_mn_ex1}
	\end{subfigure}
	\hspace*{0.6cm}
	\begin{subfigure}[b]{0.44\textwidth}
		\centering
		\tikzfig{ougr_ex2}
		\par\vspace{1ex}
		{\tikzfig{ougr_ex2_stringdiagram}}
	\subcaption{The added distinction of $B$ and $C$ gives rise to a compare.}\label{subfig:graphhoms_mn_ex2}
	\end{subfigure}
\caption{Examples of the contravariant action of the functor of \Cref{thm:graphhom-hyp} on morphisms $\h \to \h'$ (top), resulting in hypergraph functors (bottom), of which we describe the action on generators of $\syn{\h'}$.}\label{fig:graphhoms_mn_ex}
\end{figure}

\begin{lemma}\label{lem:compcomp2}
	Let $\alpha \colon \h \to \h'$ be an order-preserving graph homomorphism and let $T\subseteq \Sigma_{\h'}$. Then 
	\[
	\syn{\alpha}\left({\tikzfig{compcompT}}\right) \qquad =\qquad  {\tikzfig{synf_ugr_lem}}
	\]
\end{lemma}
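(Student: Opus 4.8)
The plan is to follow the proof of \cref{lem:copycomp2} almost verbatim, noting that the undirected case is in fact simpler: since compare-composition does not depend on the order in which factors are combined (\cref{lem:compcomp}), there is no need to track a topological order and we may recurse on an arbitrary element of $T$. I would argue by induction on the cardinality of $T$. For the base case $T=\emptyset$ one has $\syn{\alpha}(\compcomp{\emptyset}) = \syn{\alpha}(\id_I) = \id_I = \compcomp{\emptyset}$, and $\bigcup_{C\in\emptyset} S_C = \emptyset$, so the claim holds.

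For the inductive step, fix any $\phi\in T$, and set $C_0 \coloneqq \Out{\phi}$ and $T' \coloneqq T\setminus\{\phi\}$. By definition, $\compcomp{T}$ is obtained by compare-composing $\compcomp{T'}$ with $\phi$ (identifying the outputs they share). As $\syn{\alpha}$ is a hypergraph functor it preserves the compare and omni maps, hence it commutes with compare-composition; applying it to $\compcomp{T}$ therefore yields the compare-composition of $\syn{\alpha}(\compcomp{T'})$ with $\syn{\alpha}(\phi)$. By \cref{def:syn_morph} we have $\syn{\alpha}(\phi) = \compcomp{S_{C_0}}$, and by the inductive hypothesis $\syn{\alpha}(\compcomp{T'}) = \compcomp{\bigcup_{C\in T'} S_C}$.

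It remains to merge these two compare-compositions into one. The sets $\{S_C\}_C$ are pairwise disjoint, since every factor $\psi\colon I\to D$ of $\syn{\h}$ lies in $S_C$ for exactly one clique, namely $C=\alpha(D)$; in particular $\bigcup_{C\in T'} S_C$ and $S_{C_0}$ are disjoint. Hence the ``moreover'' clause of \cref{lem:compcomp} applies and gives $\syn{\alpha}(\compcomp{T}) = \compcomp{\big(\bigcup_{C\in T'} S_C\big)\cup S_{C_0}} = \compcomp{\bigcup_{C\in T} S_C}$, which is exactly the right-hand side of the statement.

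There is no serious obstacle here. The only steps that require a word of justification are that a hypergraph functor commutes with compare-composition — immediate by unwinding the defining recursion together with preservation of the compare and omni maps — and the pairwise disjointness of the $S_C$, which is what licenses the appeal to \cref{lem:compcomp}. In contrast to \cref{lem:copycomp2}, no marginalisation bookkeeping (factors have no inputs to delete) and no reference to the topological order are needed.
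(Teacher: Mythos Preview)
Your proof is correct and matches the paper's own approach, which simply states that the result ``is proven by induction using \cref{lem:compcomp}, following the approach of the proof of \cref{lem:copycomp2}.'' Your write-up is in fact more detailed than the paper's; the one small imprecision is that $\syn{\alpha}(\phi)$ may in general carry an extra $\minitikzfig{omni}$ tensor on vertices of $\alpha^{-1}(C_0)$ not covered by $\Out{S_{C_0}}$, but this is harmless for the induction and is absorbed by the same appeal to \cref{lem:compcomp}.
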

This is proven by induction using \cref{lem:compcomp}, following the approach of the proof of \cref{lem:copycomp2}.

\begin{theorem}\label{thm:graphhom-hyp}
	The association 
	\[
	\begin{array}{rccc}
		\syn{} \colon& \ougr & \to & \hypcat\\
		& \h,\quad \alpha \colon \h \to \h' &\mapsto & \syn{\h},\quad \syn{\alpha}\colon \syn{\h'}\to \syn{\h}
	\end{array}
	\]
	is a contravariant functor.
\end{theorem}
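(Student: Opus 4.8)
The plan is to mirror the proof of \cref{thm:graphhom-cd} exactly, replacing copy-composition by compare-composition and \cref{lem:copycomp2} by \cref{lem:compcomp2}. First I would observe that $\syn{}$ is well-defined on objects and morphisms by \cref{thm:graphhom-hyp}'s preliminary constructions (\cref{def:syn_morph}), and that $\syn{\id}=\id$ is immediate: if $\alpha$ is the identity, then for each vertex $v$ the preimage $\alpha^{-1}(v)=\{v\}$, so objects are fixed, and the set $S_C=\{\phi\colon I\to D\mid \alpha(D)=C\}$ reduces to the singleton $\{\phi_C\}$, whence the generator $\minitikzfig{clique_morph}$ is sent to $\compcomp{\{\phi_C\}}=\phi_C$ itself. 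Hence the only real content is functoriality of composition.

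So let $\alpha\colon \h_1\to\h_2$ and $\beta\colon\h_2\to\h_3$ be order-preserving graph homomorphisms; I must show $\syn{\alpha}\circ\syn{\beta}=\syn{\beta\alpha}$ as hypergraph functors $\syn{\h_3}\to\syn{\h_1}$. On objects, a vertex $v\in V_{\h_3}$ is sent by $\syn{\beta}$ to $\bigtensor_{w\in\beta^{-1}(v)} w$ (in the $\h_2$-order), and then $\syn{\alpha}$ sends this to $\bigtensor_{w\in\beta^{-1}(v)}\bigtensor_{u\in\alpha^{-1}(w)} u$, which equals $\bigtensor_{u\in(\beta\alpha)^{-1}(v)} u$ since $(\beta\alpha)^{-1}(v)=\bigcup_{w\in\beta^{-1}(v)}\alpha^{-1}(w)$ and both $\alpha,\beta$ preserve the order (so the two ways of nesting the tensor yield the same ordered list). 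On morphisms, it suffices to check the two functors agree on each generator $\minitikzfig{clique_morph}$ of $\syn{\h_3}$, since both are hypergraph functors out of a freely generated hypergraph category and the structural morphisms are automatically preserved. Applying \cref{def:syn_morph}, $\syn{\beta}$ sends this generator to $\compcomp{S_C^\beta}$ where $S_C^\beta=\{\psi\colon I\to D\mid \beta(D)=C\}$ (post-composed with the appropriate bending/identification of wires); then applying $\syn{\alpha}$ and using \cref{lem:compcomp2}, we get $\compcomp{\bigcup_{\psi\in S_C^\beta} S_{\Out{\psi}}^\alpha}$. The desired identity therefore reduces to the set-theoretic equality $\bigcup_{\psi\in S_C^\beta} S_{\Out{\psi}}^\alpha = S_C^{\beta\alpha}$, which unwinds to: the set of factors $\phi\colon I\to E$ in $\Sigma_{\h_1}$ with $\alpha(E)$ a clique lying over $C$ via $\beta$ coincides with those with $\beta\alpha(E)=C$ — immediate from $(\beta\alpha)(E)=\beta(\alpha(E))$ and the partition of $\Sigma_{\h_2}$ into the sets $S_D^\alpha$ indexed by $D\in\alpha(\h_2)$ (the analogue of \cref{it:sv_partition} in \cref{rem:sv_properties}).

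The main obstacle — such as it is — is bookkeeping: making sure that the permutations and the $\In{S_v}^c$-style wire-bending conventions omitted in \cref{def:syn_morph} genuinely match up under the two nested applications, so that the equality of compare-compositions on the right-hand sides is an honest equality of morphisms in $\syn{\h_1}$ and not merely "equal up to reordering of wires". This is handled precisely by \cref{lem:compcomp2} together with the well-definedness half of \cref{lem:compcomp} (compare-composition is independent of the chosen order of the factors, because $\minitikzfig{compare}$ is associative and commutative), which is why the undirected case is actually cleaner than the directed one: there is no topological-ordering constraint to track, and \cref{lem:compcomp} already absorbs all the permutation ambiguity. Thus no genuinely new argument is needed beyond transcribing the proof of \cref{thm:graphhom-cd}.
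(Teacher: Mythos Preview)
Your proposal is correct and follows the paper's own proof essentially verbatim: check $\syn{\id}=\id$, handle objects via order-preservation, and reduce the action on generators via \cref{lem:compcomp2} to the set equality $\bigcup_{\psi\in S_C^\beta} S_{\Out{\psi}}^\alpha = S_C^{\beta\alpha}$. The only cosmetic difference is that the paper justifies this last equality by explicitly noting that graph homomorphisms send cliques to cliques (so $\alpha(E)\in\clique{\h_2}$ automatically whenever $E\in\clique{\h_1}$), whereas you package the same fact as a partition analogy to \cref{rem:sv_properties}\cref{it:sv_partition}; the substance is identical.
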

\begin{proof}
	A direct check shows that $\syn{\id}=\id$, so we focus on composition. Let $\alpha\colon \h_1 \to \h_2$ and $\beta\colon \h_2 \to \h_3$.
	As in the proof of \cref{thm:graphhom-cd}, $\syn{\alpha}\comp \syn{\beta}(v)=\syn{\beta\alpha}(v)$ because the considered graph homomorphisms are order-preserving.

	Regarding composition, let $\phi \colon I \to C$ in $\syn{\h_3}$. By \cref{lem:compcomp2}, it suffices to show that $\bigcup_{\phi \in S_C^{\beta}} S_{\Out{\phi}}^{\alpha} = S_C^{\beta\alpha}$, where $S_Y^{\ell}\coloneqq \lbrace \psi \colon I \to X \mid \ell(X)=Y \rbrace$. 
	By unpacking the definition,
	\[
	\bigcup_{\phi \in S_C^{\beta}} S_{\Out{\phi}}^{\alpha} = \lbrace \psi \colon I \to D \,\given\,  \exists\,  E \in \clique{\h_2}\text{ such that } \alpha(D)=E\text{ and } \beta(E)=C \rbrace.
	\] 
	Since graph homomorphisms send cliques to cliques, whenever $D$ is a clique, $\alpha(D)$ is a clique as well. 
	Therefore, the condition on the right can be shortened to $\beta\alpha(D)=C$, so the wanted equality $\bigcup_{\phi \in S_C^{\beta}} S_{\Out{\phi}}^{\alpha} = S_C^{\beta\alpha}$ indeed holds.
\end{proof}

\begin{remark}[Factorising Distributions]\label{rem:factorising_distributions_UGr}
	Similarly to \cref{rem:factorising_distributions_DAG}, we consider the contraction map $\h \to \bullet$, which via $\syn{}$ yields the hypergraph functor $!_{\h}\colon \syn{\bullet}\to \syn{\h}$ sending the single generator $\minitikzfig{state}$ to $\compcomp{\Sigma_{\h}}$.
	For example, considering \cref{ex:misconception}, $\syn{\bullet}\to \syn{\h}$ describes the string diagram in \cref{fig:markov_network_ex}.

	We now consider a hypergraph functor $\Phi\colon \syn{\h}\to \mat$ and set $\phi_C\coloneqq \Phi(\minitikzfig{clique_morph})$.
	Then, the distribution $\omega$ associated to the composition $\syn{\bullet}\to \syn{\h}\to \mat$ is defined by $\Phi(\compcomp{\Sigma_{\h}})$, and since hypergraph functors preserve $\minitikzfig{compare}$, we have a factorisation $\omega(V_{\h})= \prod_{C} \phi_C(C)$, where we identified all occurrences of the same variable, as discussed in \cref{rem:semantics_compcomp}.
\end{remark}



\section{Irredundant Networks via Functorial Factorisation}\label{sec:irredundant_networks}

When the probability distribution associated to a Bayesian network does not have full support, i.e.\ there is some value $x$ such that $\omega(x)=0$, the Bayesian network structure has an inherent redundancy.
For example, consider the family of Bayesian networks $F_p$, one for each $p \in [0,1]$,  described by 
\[
\tikzfig{dag_AB}
\]
where $A=\lbrace 0,1,2\rbrace$, $B=\lbrace b,\no{b} \rbrace$.
All of these networks are associated to the same probability distribution $\omega(A,B)$, but they are different according to Definition~\ref{def:BN} (and \Cref{prop:bayesian_functor}). However, distinguishing them is not always desirable, for instance when studying the conditional independencies of $\omega(A,B)$. The important property is the existence, rather than the uniqueness of a Bayesian network representing the independencies of $\omega(A,B)$. 
	For our developments, it is important to account for this different perspective, which is in fact more natural for the present purposes.
		
\begin{definition}[Irredundant Bayesian Network] \label{def:irredundantBN}
An \newterm{irredundant} Bayesian network over $\g$ is a probability distribution $\omega$, together with an assignment $\tau(v)$ of a finite set for any vertex $v \in V_{\g}$, such that $\omega(V_{\g})= \prod_{v \in V_{\g}} f_v (v \given \parents{v})$ for some stochastic matrices $f_v \colon \tau(v)\times \prod_{w \in\parents{v}} \tau(w)\to [0,1]$.
\end{definition}

\begin{remark}\label{rem:irredundant_distinction}
	In our previous work \cite{lorenzin2025moralisation}, we defined irredundant Bayesian networks in a slightly different manner: the assignment that is fixed once and for all in \cref{def:irredundantBN} was instead allowed to vary. 
	While allowing the assignment to vary simplifies the treatment, it introduces several issues. 
	First, it deviates from the standard theoretical framework, where the assignment determines how each vertex corresponds to a random variable—information that is typically part of the initial data, not something inferred from the network structure. 
	Second, this flexibility can lead to a more superficial or trivial analysis. 
	For these reasons, we here adopt a formulation that aligns more closely with standard theory and provides a more rigorous foundation.
\end{remark}

This irredundant version is more common in the study of conditional independencies. For example, recall that the d-separation criterion ensures when a certain probability distribution admits the factorisation above by means of (global or local) Markov properties. 
In particular, with this definition we can also connect ordered graph homomorphisms and I-maps (see \cref{prop:Imap_DAG} below).

We now introduce a characterisation analogous to~\Cref{prop:bayesian_functor} for the irredundant case. 
Recall that the one-vertex graph $\bullet$ is associated to $\cdsyn{\bullet}$, the free CD-category generated by one object, also denoted by $\bullet$, and one morphism of type $I\to \bullet$.
Therefore, any CD-functor $\cdsyn{\bullet}\to \finstoch$ is entirely captured by setting $\bullet \mapsto X$ and $\minitikzfig{state} \mapsto \omega$, where $\omega \colon I \to X$ is a probability distribution on $X$. 
Additionally, we need to capture the assignment of vertices without requiring a precise choice of the stochastic matrices. To this end, we consider the following.

\begin{definition}\label{def:vcat}
	Let $\g$ be an ordered DAG. We define the \newterm{vertex-list category} $\vcat{\g}$ as the (coloured) PROP given by the monoidal signature $(V_{\g}, \emptyset)$. More explicitly, objects are finite lists of vertices of $\g$, while the only morphisms are permutations and identities.
\end{definition}

Note that there is an obvious symmetric monoidal functor $\vcat{\g}\hookrightarrow \cdsyn{\g}$ defined by sending each vertex to itself. Moreover, an association $\tau$ of a set for each vertex is exactly the data of a symmetric monoidal functor $\vcat{\g} \to \finstoch$.
Further, this construction is functorial.

\begin{proposition}\label{prop:vcat_functor}
	The association 
		\[
	\begin{array}{rccc}
		\vcat{} \colon& \odag & \to & \cat{SymCat}\\
		& \g,\quad \alpha \colon \g \to \g' &\mapsto & \vcat{\g},\quad \vcat{\alpha} \colon \vcat{\g'}\to \vcat{\g}
	\end{array}
	\]
	where: 
	\begin{itemize}
		\item $\vcat{\alpha}$ is defined by sending $v \in V_{\g'}$ to $\bigotimes_{w \in \alpha^{-1}(v)} w$ (ordered according to the order of $\g$), and
		\item $\cat{SymCat}$ is the category of symmetric monoidal categories with symmetric monoidal functors,
	\end{itemize}
		is a contravariant functor.
\end{proposition}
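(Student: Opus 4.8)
The plan is to exploit the universal property of free PROPs. Since $\vcat{\g'}$ is the free (coloured) symmetric strict monoidal category on the signature $(V_{\g'},\emptyset)$ — which has \emph{no} generating morphisms — a symmetric monoidal functor out of $\vcat{\g'}$ is determined uniquely by its action on the generating objects $V_{\g'}$, and any two symmetric monoidal functors out of $\vcat{\g'}$ that agree on generating objects are equal (every morphism is a composite of identities and symmetries). Consequently each of the three things to check — well-definedness of $\vcat{\alpha}$, $\vcat{\id}=\id$, and contravariant composition — reduces to a computation on objects.

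First I would check that $\vcat{\alpha}$ is well-defined. Because all graphs are finite, each fibre $\alpha^{-1}(v)\subseteq V_{\g}$ is a finite set carrying the total order restricted from $\g$, so $\bigotimes_{w\in\alpha^{-1}(v)}w$ is an unambiguously determined object of $\vcat{\g}$. As the signature of $\vcat{\g'}$ contains no generating morphisms, this assignment on objects extends uniquely to a symmetric strict monoidal functor $\vcat{\g'}\to\vcat{\g}$; no compatibility with edges is required, since $\vcat{}$ discards the edge data entirely. The identity case is then immediate: $\id^{-1}(v)=\lbrace v\rbrace$, so $\vcat{\id}$ fixes every generating object and hence equals $\id_{\vcat{\g}}$.

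For contravariance, let $\alpha\colon\g_1\to\g_2$ and $\beta\colon\g_2\to\g_3$, and evaluate both $\vcat{\beta\alpha}$ and $\vcat{\alpha}\comp\vcat{\beta}$ on a generating object $v\in V_{\g_3}$. On one side $\vcat{\beta\alpha}(v)=\bigotimes_{u\in(\beta\alpha)^{-1}(v)}u$, ordered by $\g_1$. On the other side $\vcat{\beta}(v)=\bigotimes_{w\in\beta^{-1}(v)}w$, and since $\vcat{\alpha}$ is symmetric monoidal it distributes over $\otimes$, yielding $\bigotimes_{w\in\beta^{-1}(v)}\bigotimes_{u\in\alpha^{-1}(w)}u$. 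As plain sets, $(\beta\alpha)^{-1}(v)=\bigsqcup_{w\in\beta^{-1}(v)}\alpha^{-1}(w)$, so both expressions involve exactly the same vertices; the only remaining point is that the two orderings agree. The nested list is ordered first by the $\g_2$-order on $\beta^{-1}(v)$ between blocks $\alpha^{-1}(w)$, and by the $\g_1$-order within each block. This matches the flat $\g_1$-order precisely because $\alpha$ is order-preserving: if $w<w'$ in $\g_2$ then every $u\in\alpha^{-1}(w)$ precedes every $u'\in\alpha^{-1}(w')$ in $\g_1$ — otherwise $u'<u$ with $\alpha(u')=w'>w=\alpha(u)$, contradicting monotonicity of $\alpha$. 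Hence $\vcat{\beta\alpha}$ and $\vcat{\alpha}\comp\vcat{\beta}$ coincide on generating objects, and therefore as functors.

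The verification is essentially routine, and the one genuinely load-bearing step — which I would make sure to spell out — is the ordering compatibility in the last paragraph: this is exactly the place where the hypothesis that morphisms of $\odag$ are \emph{order-preserving} graph homomorphisms is used, mirroring its role in the proof of \cref{thm:graphhom-cd}.
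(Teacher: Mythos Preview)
Your proposal is correct and follows essentially the same approach as the paper, which simply states that the result is analogous to the object-level part of the proof of \cref{thm:graphhom-cd} and hinges on order-preservation. You have spelled out in full what the paper leaves implicit --- the reduction via freeness to generating objects, the preimage decomposition, and the ordering compatibility --- and your identification of order-preservation as the load-bearing hypothesis matches the paper exactly.
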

\begin{proof}
	This is analogous to the first part of the proof of \cref{thm:graphhom-cd}, and follows from requiring order-preservation.	
\end{proof}

	The discussion above, together with \cref{prop:bayesian_functor}, justifies the following characterisation.
\begin{proposition}\label{prop:irredundantbayesian_functor}
	There is a bijective correspondence between
	\begin{itemize}
		\item Irredundant Bayesian networks over $\g$, and
		\item Pairs $(\omega,\tau)$, where $\omega\colon \cdsyn{\bullet} \to \finstoch$ is a CD-functor and $\tau\colon \vcat{\g}\to \finstoch$ is a symmetric monoidal functor, for which there exists a CD-functor $F\colon \cdsyn{\g}\to \finstoch$ such that the following diagram commutes:
		\begin{equation}\label{eq:irredundantbayesian_functor}
			\begin{tikzcd}
				& \vcat{\g} \ar[d,hook]\ar[rd,"\tau"]& \\
			\cdsyn{\bullet} \ar[rr,bend right,"\omega"]\ar[r,"!_{\g}"] &\cdsyn{\g}\ar[r, "F", dashed] & \finstoch
			\end{tikzcd}
		\end{equation}
	\end{itemize}
\end{proposition}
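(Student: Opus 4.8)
The plan is to recognise that both items of the statement are really parametrised by the same data, so that the proposition amounts to showing that two existence conditions on a pair $(\omega,\tau)$ are equivalent. First I would unwind the functor descriptions, as already observed just before the statement: by \cref{def:vcat} a symmetric monoidal functor $\tau\colon\vcat{\g}\to\finstoch$ is exactly an assignment of a finite set to each vertex of $\g$ (its value on the remaining morphisms, namely permutations and identities, being then forced), and since $\cdsyn{\bullet}$ is the free CD-category on one object $\bullet$ and one morphism $I\to\bullet$, a CD-functor $\omega\colon\cdsyn{\bullet}\to\finstoch$ is exactly a finite set together with a probability distribution on it. Hence both items concern pairs $(\omega,\tau)$ of this shape, and the asserted bijection will be the identity on such pairs once the two existence requirements are matched. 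For notation, write $g_v\in\Sigma_{\g}$ for the generator of $\cdsyn{\g}$ whose output is $v$, so that $\Sigma_{\g}=\{g_v\mid v\in V_{\g}\}$.

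For the forward direction, suppose $(\omega,\tau)$ is an irredundant Bayesian network, witnessed by stochastic matrices $(f_v)_{v\in V_{\g}}$ as in \cref{def:irredundantBN}. I would feed the ordinary Bayesian network $(\tau,(f_v))$ into \cref{prop:bayesian_functor} to obtain a CD-functor $F\colon\cdsyn{\g}\to\finstoch$ with $F(v)=\tau(v)$ on objects and $F(g_v)=f_v$ on generators, and then check the two triangles of \eqref{eq:irredundantbayesian_functor}. The upper triangle commutes because $F$ restricted along $\vcat{\g}\hookrightarrow\cdsyn{\g}$ and $\tau$ are both symmetric monoidal functors agreeing on the generating objects (each sends $v$ to $\tau(v)$), hence equal. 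The lower triangle is where \cref{rem:factorising_distributions_DAG} does the work: it identifies $!_{\g}$ as the CD-functor sending the generating state to $\copycomp{\Sigma_{\g}}$, and it computes $F(\copycomp{\Sigma_{\g}})=\prod_{v}f_v(v\mid\parents{v})=\omega(V_{\g})$, which is exactly the equation $F\circ{!_{\g}}=\omega$. So $(\omega,\tau)$ qualifies for the second item.

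For the reverse direction, suppose $F\colon\cdsyn{\g}\to\finstoch$ makes \eqref{eq:irredundantbayesian_functor} commute. By \cref{prop:bayesian_functor}, $F$ is the functor of a unique ordinary Bayesian network, with vertex assignment $v\mapsto F(v)$ and conditionals $f_v\coloneqq F(g_v)$. Commutativity of the upper triangle forces $F(v)=\tau(v)$ for all $v$, so the assignment is exactly $\tau$ and each $f_v$ has the required type; commutativity of the lower triangle together with \cref{rem:factorising_distributions_DAG} gives $\omega(V_{\g})=F(\copycomp{\Sigma_{\g}})=\prod_{v}f_v(v\mid\parents{v})$, so these $f_v$ witness that $(\omega,\tau)$ is irredundant. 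Finally I would observe that the two passages are mutually inverse: under the bijection of \cref{prop:bayesian_functor} the functor $F$ and the tuple $(\tau,(f_v))$ determine one another, and $\omega$ is pinned down on both sides, so composing the constructions in either order yields the identity.

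I do not expect a genuine obstacle here; modulo \cref{prop:bayesian_functor} the argument is essentially bookkeeping with the vertex-list category and the copy-composition. The one point deserving care is checking that requiring the existence of $F$ in \eqref{eq:irredundantbayesian_functor} is neither stronger nor weaker than the irredundancy condition --- that the two commuting triangles encode exactly ``the vertex types are $\tau$'' and ``$\omega$ factors as $\prod_v f_v(v\mid\parents{v})$'', and nothing more. This is precisely what \cref{rem:factorising_distributions_DAG} guarantees, by pinning $!_{\g}$ down as the copy-composition of all of $\Sigma_{\g}$.
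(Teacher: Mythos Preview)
Your proposal is correct and follows essentially the same approach as the paper: both directions construct or read off $F$ from the conditionals $f_v$ and the assignment $\tau$, and both invoke \cref{rem:factorising_distributions_DAG} to identify the lower triangle with the factorisation $\omega(V_{\g})=\prod_v f_v(v\mid\parents{v})$. Your version is slightly more explicit in routing through \cref{prop:bayesian_functor} and in separately verifying the two triangles, but the underlying argument is the same.
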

In particular, the request of the existence of such an $F$ ensures that $\omega(\bullet)= \bigotimes_{v\in V_{\g}} \tau(v)$.
	\begin{proof}
	Given a probability distribution $\omega$ together with an assignment $\tau$ gives rise to a picture as above: indeed, since $\omega$ factorises as a product $\prod_v f_v$ for some stochastic matrices $f_v$, we can define a CD-functor ${F}\colon \cdsyn{\g}\to \finstoch$ by sending $v\mapsto \tau(v)$ and the generator $\minitikzfig{parents_morph}$ to $f_v$. 
	\cref{rem:factorising_distributions_DAG} is sufficient to ensure that the CD-functor $\cdsyn{\bullet} \to \finstoch$ given by $\omega$ indeed factorises through $!_{\g}$. On the other side, we also know that $F$ respects the assignment of sets $\tau$, which can be extended to a symmetric monoidal functor $\vcat{\g}\to \finstoch$, thus ensuring the commutation of the other triangle.
 		
	Conversely, any pair of functors $(\omega,\tau)$ as in the statement leads to a probability distribution $\omega (\minitikzfig{state})$ that can be factorised, again by \cref{rem:factorising_distributions_DAG}, via $f_v\coloneqq {F}(\minitikzfig{parents_morph})$. Note such a choice respects the assignment because $\tau$ coincides with the composition $V(\g)\hookrightarrow \cdsyn{\g}\xrightarrow{F} \finstoch$.
	Therefore, we obtain an irredundant Bayesian network. 
	As the two constructions are clearly the inverse of one another, the wanted bijection holds.
\end{proof}
\begin{remark}\label{rem:vcat_bullet_commuting}
	It may be worth pointing out that the obvious inclusions $\vcat{\g}\hookrightarrow \cdsyn{\g}$ in fact give rise to a natural transformation
	between $\vcat{}$ and the composition $\odag\xrightarrow{\cdsyn{}}\cdcat \to \cat{SymCat}$, where the last functor simply forgets the CD-structure.

	In particular, we may add to \eqref{eq:irredundantbayesian_functor} an additional $\vcat{\bullet}$, which is another way to see why $\omega(\bullet)$ indeed coincides with $\bigotimes_{v \in V_{\g}} \tau(v)$ by means of the following commutativity: 
	\begin{equation*}
			\begin{tikzcd}
				\vcat{\bullet} \ar[r]\ar[d,hook] & \vcat{\g} \ar[d,hook]\ar[rd,"\tau"]& \\
			\cdsyn{\bullet} \ar[rr,bend right,"\omega"]\ar[r,"!_{\g}"] &\cdsyn{\g}  & \finstoch
			\end{tikzcd}
	\end{equation*}
	We stress that this commutativity alone is however insufficient to guarantee the existence of a functor $F$ as in \eqref{eq:irredundantbayesian_functor}. 
	In fact, the existence of such a functor ensures that $\omega$ satisfies certain conditional independencies; without it, we cannot distinguish between different graph structures that share the same set of vertices.
\end{remark}

\begin{example}\label{ex:BEAR!}
To better highlight the connection given by \cref{prop:irredundantbayesian_functor}, let us consider \cref{ex:BEAR}. 
The CD-functor $!_{\g}\colon \cdsyn{\bullet}\to \cdsyn{\g}$ is then defined by sending $\bullet$ to $B\tensor E\tensor A\tensor R$ and $\minitikzfig{state}$ to the string diagram depicted in \cref{fig:ex_bayesian_network}. 
When composing $!_{\g}$ with the CD-functor $\cdsyn{\g}\to \finstoch$ described by the probability tables in \cref{fig:ex_bayesian_network}, we indeed obtain the probability distribution associated to the Bayesian network.
\end{example}

As we already pointed out, our formalism is connected to \newterm{I-maps}.
Given a set of conditional independencies $I$ over some variables, one says that a DAG $\g$ is an \emph{I-map} for $I$ if $I(\g)\subseteq I$, where $I(\g)$ denotes the set of Markov properties associated to $\g$ (local or global depending on the reference).
This choice is subject to a prior assignment $\tau$, sending each vertex to a finite set.
When comparing two DAGs, one may say that $\g$ is an \emph{I-map} for $\g'$ if $I(\g)\subseteq I(\g')$.
In the theory, one tacitly assumes that both graphs are on the same vertices, which forces us to restrict to identity-like homomorphisms. 

\begin{proposition}\label{prop:Imap_DAG}
	Let $\g$ and $\g'$ be ordered DAGs and let $\alpha \colon \g' \to \g$ be a graph homomorphism which is the identity on vertices. 
	Then $\g$ is an I-map for $\g'$. 
\end{proposition}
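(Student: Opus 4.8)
The plan is to combine the functorial construction of \cref{thm:graphhom-cd} with the characterisation of irredundant networks in \cref{prop:irredundantbayesian_functor}, and then convert the resulting statement about representable distributions into the set-inclusion $I(\g)\subseteq I(\g')$ that defines the I-map property. First I would record the combinatorial content of the hypothesis: since $\alpha$ is the identity on vertices and preserves edges, $\g$ and $\g'$ share the same vertex set and $E_{\g'}\subseteq E_{\g}$, so $\g'$ is obtained from $\g$ by deleting edges. By the factorisation theorem for DAGs, the inclusion $I(\g)\subseteq I(\g')$ is equivalent to the statement that every distribution which factorises over $\g'$ also factorises over $\g$; I would take this equivalent formulation as the target.

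For the functorial core, I would apply the contravariant functor $\cdsyn{}$ of \cref{thm:graphhom-cd} to $\alpha\colon \g'\to\g$, obtaining a CD-functor $\cdsyn{\alpha}\colon \cdsyn{\g}\to\cdsyn{\g'}$. Given an irredundant Bayesian network over $\g'$, \cref{prop:irredundantbayesian_functor} supplies a witnessing CD-functor $G\colon\cdsyn{\g'}\to\finstoch$ with $G\comp{!_{\g'}}=\omega$. I would set $F\coloneqq G\comp\cdsyn{\alpha}\colon\cdsyn{\g}\to\finstoch$. Because $\bullet$ is terminal in $\odag$, the contraction maps $t_\g\colon\g\to\bullet$ and $t_{\g'}\colon\g'\to\bullet$ satisfy $t_\g\comp\alpha=t_{\g'}$; applying the contravariant $\cdsyn{}$ gives $\cdsyn{\alpha}\comp{!_\g}={!_{\g'}}$, hence $F\comp{!_\g}=G\comp{!_{\g'}}=\omega$. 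Thus, by \cref{rem:factorising_distributions_DAG}, $F$ represents the same distribution $\omega$. Moreover, since $\alpha$ is the identity on vertices we have $\alpha^{-1}(v)=\{v\}$, so $\cdsyn{\alpha}$ is the identity on objects and $F$ agrees with $G$ on objects, meaning the vertex assignment $\tau$ is preserved. By \cref{prop:irredundantbayesian_functor} this exhibits $(\omega,\tau)$ as an irredundant Bayesian network over $\g$, establishing the representability containment.

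The main obstacle is the final translation from this representability containment to the CI-set inclusion $I(\g)\subseteq I(\g')$. One direction of the equivalence between the two is immediate, but deducing the inclusion of independence statements from the containment of representable distributions requires in general the completeness of d-separation, i.e.\ the existence of distributions faithful to a DAG. To keep the argument self-contained I would instead close the gap by a direct monotonicity argument at the level of d-separation: any path active given $Z$ in $\g'$ uses only edges of $\g$, retains the same colliders and non-colliders (edge orientations are unchanged), and satisfies $\operatorname{desc}_{\g'}(v)\subseteq\operatorname{desc}_{\g}(v)$ at each collider (directed paths of $\g'$ are directed paths of $\g$), so it remains active in $\g$. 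Contrapositively, whenever $Z$ d-separates $X$ and $Y$ in $\g$ it also d-separates them in $\g'$, which is exactly $I(\g)\subseteq I(\g')$.
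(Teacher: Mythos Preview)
Your proposal is correct and its functorial core---precomposing a witnessing $G\colon\cdsyn{\g'}\to\finstoch$ with $\cdsyn{\alpha}$ and using terminality of $\bullet$ to identify the two factorisation functors $!_{\g}$ and $!_{\g'}$---is exactly the paper's argument. The paper phrases this more tersely, but the content is the same: every irredundant Bayesian network over $\g'$ is one over $\g$, hence satisfies the Markov properties of $\g$.

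Where you differ is in the last step, and the difference is worth recording. The paper closes by citing \cite[Thm.~3.4]{koller2009probabilistic}: from ``every distribution factorising over $\g'$ satisfies $I(\g)$'' one extracts $I(\g)\subseteq I(\g')$, which implicitly invokes the completeness side of d-separation (existence of a faithful distribution for $\g'$). You instead supply a self-contained monotonicity argument on active paths: edges, collider status, and descendant sets all transfer from $\g'$ to $\g$, so d-separation in $\g$ implies d-separation in $\g'$. Your route avoids the external appeal and is elementary; it also makes the entire functorial detour strictly speaking unnecessary for this particular proposition, since the direct d-separation argument already yields $I(\g)\subseteq I(\g')$ outright. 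The paper's route, by contrast, keeps the emphasis on the categorical machinery and defers the combinatorics to the literature.
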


We remark that not all I-maps are obtained by graph homomorphisms: a simple example is given by \minitikzfig{dag_AB_1} and \minitikzfig{dag_BA_1}, whose associated sets of conditional independences are simply the emptyset.

\begin{proof}
	By \cref{thm:graphhom-cd}, the obtained $\cdsyn{\g}\to \cdsyn{\g'}$, together with terminality of $\bullet$ in $\odag$, ensures the factorisation $!_{\g'}\colon \cdsyn{\bullet}\xrightarrow{!_{\g}} \cdsyn{\g}\to \cdsyn{\g'}$. 
	This means that any (irredundant) Bayesian network $(\omega,\tau)$ over $\g'$ is also a Bayesian network over $\g$.
	By the d-separation criterion, this means that $(\omega,\tau)$ satisfies the Markov properties of $\g$ as soon as it satisfies the Markov properties of $\g'$. 
	We can now apply \cite[Thm.~3.4]{koller2009probabilistic} to conclude.	
\end{proof}
\begin{remark}[Decontracting Edges]\label{rem:decontracting}
	A surjective order-preserving graph ho\-mo\-mor\-phism $\alpha \colon \g \to \g'$ can be factored as $\g\to \g'' \to \g'$, where $\g \to \g''$ is the identity on vertices and $\g''\to \g'$ is simply a contraction of complete subgraphs.
	To see how, one simply defines $\g''$ as the graph with the same vertices of $\g$ and an edge $v\to w$ in $\g''$ if $v<w$ and either $\alpha(v)=\alpha(w)$ or $\alpha(v)\to \alpha(w)$.
	Then the identity on vertices $\g \to \g''$ is still a graph homomorphism since the order on $\g$ allows $v\to w$ only when $v<w$.
	Moreover, by definition of $\g''$, it follows from a direct check that the map $\g''\to \g'$ given by $\alpha$ on vertices is an order-preserving graph homomorphism where every preimage is a complete subgraph.

	Using this modification, we can then connect surjective order-preserving graph homomorphisms to graph homomorphisms given by the identity on vertices, and therefore to I-maps by \cref{prop:Imap_DAG}.
\end{remark}

We now turn to Markov networks, where the observation about redundancy (\cref{def:irredundantBN}) is even more relevant. Indeed, the generality of cliques causes a considerable redundancy of information, which is often `pulled under the rug' in applications, but should be made explicit in a mathematically rigorous treatment. 
	
\begin{definition}[Irredundant Markov Network]\label{def:irredundantMN}
	An \newterm{irredundant} Markov network over an (ordered) undirected graph $\h$ is a probability distribution $\omega$, together with an assignment $\tau(v)$ of a finite set for any vertex $v\in V_{\h}$, such that $\omega(V_{\h})=\frac{1}{Z}\prod_{C\in \clique{\h}} \phi_C(x_C)$ for some factors $\phi_C$, where $Z$ is a normalisation coefficient.
\end{definition}

Since $\omega$ is a proper distribution, $Z \neq 0$, so any irredundant network is also non-degenerate. 

\begin{remark}
	The definition above differs from the one we provided in our previous work~\cite{lorenzin2025moralisation}. 
	This discrepancy mirrors the case of Bayesian networks, detailed in \cref{rem:irredundant_distinction}.
\end{remark}

As in the case of Bayesian networks, $\bullet$ is the final object in $\ougr$, yielding a morphism $!_{\h}\colon \syn{\bullet}\to \syn{\h}$. Moreover, there is an obvious functor $\star\colon \cdsyn{\bullet}\to \syn{\bullet}$ that sends the unique generator to itself.

To take care of the normalisation coefficient, a characterisation analogous to the one of \Cref{prop:irredundantbayesian_functor} requires a few extra pieces. 
We also adopt a PROP $\vcat{\h}$ for an undirected graph, analogously to the case of Bayesian networks.\footnote{The definition coincides verbatim with \cref{def:vcat}, since the requirement that $\g$ be directed has no effect in this context. In fact, the definition holds just as well for total ordered (finite) sets.} 
\begin{remark}\label{rem:vcat_functor}
The PROPs $\vcat{\h}$ gives rise to a contravariant functor $\ougr\to \cat{SymCat}$, analogously to \cref{prop:vcat_functor}.
\end{remark}

\begin{proposition}\label{prop:irredundantmarkov_functor}
	There is a bijective correspondence between
	\begin{itemize}
		\item Irredundant Markov networks over $\h$, and 
		\item Pairs $(\omega,\tau)$, where $\omega\colon \cdsyn{\bullet} \to \finstoch$ is a CD-functor and $\tau\colon \vcat{\g}\to \finstoch$ is a symmetric monoidal functor, for which there exists a hypergraph functor $\Phi \colon \syn{\h} \to \mat$ such that the following diagram commutes:
		\begin{equation}\label{eq:mn_factorisation}
		\begin{tikzcd}
			&&\finstoch \ar[drr,"i", bend left=10]& &\\ 
			\cdsyn{\bullet}\ar[urr,"\omega", bend left=10]\ar[rd,"\star"] &&\vcat{\h}\ar[d,hook]\ar[u,"\tau"]&& \finprojstoch\\
			&\syn{\bullet}\ar[r,"!_{\h}"] &\syn{\h}\ar[r,dashed,"\Phi"] & \mat\ar[ur,"q"]&		
		\end{tikzcd}
	\end{equation}
	\end{itemize}
\end{proposition}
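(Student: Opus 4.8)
The plan is to mimic the structure of the proof of \cref{prop:irredundantbayesian_functor}, using \cref{prop:markov_functor} and the normalisation cospan $\mat \xrightarrow{q} \finprojstoch \xleftarrow{i} \finstoch$ (\cref{rem:normalisation_cospan}) in place of the plain bijection. The essential new ingredient is that the factor $\frac{1}{Z}$ is \emph{invisible} in $\finprojstoch$: passing from $\mat$ to the projective quotient is exactly what lets us present an unnormalised factorisation $\prod_C \phi_C$ and a genuine distribution $\omega$ as ``the same'' object, which is why the diagram \eqref{eq:mn_factorisation} commutes only after applying $q$ and $i$.

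\emph{From a network to a pair of functors.} Suppose we are given an irredundant Markov network over $\h$, i.e.\ a distribution $\omega$, an assignment $\tau$, and factors $\phi_C$ with $\omega(V_\h) = \frac{1}{Z}\prod_{C} \phi_C(x_C)$ and $Z \neq 0$. By \cref{prop:markov_functor} the factors $\phi_C$ (together with $\tau$ on vertices) define a hypergraph functor $\Phi\colon \syn{\h}\to \mat$ with $\Phi(v) = \tau(v)$ and $\Phi(\minitikzfig{clique_morph}) = \phi_C$. Set $\omega\colon \cdsyn{\bullet}\to\finstoch$ to be the CD-functor sending the unique generator to the distribution $\omega$, and extend $\tau$ to the symmetric monoidal functor $\vcat{\h}\to\finstoch$. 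The inner triangle $\vcat{\h}\hookrightarrow \syn{\h}\xrightarrow{\Phi}\mat$ versus $\vcat{\h}\xrightarrow{\tau}\finstoch\xrightarrow{i}\mat$ commutes on objects by construction, and on morphisms because both sides are just permutations/identities matched appropriately; here $i$ is faithful and total morphisms are preserved, so $\tau$ really does factor. For the outer region: by \cref{rem:factorising_distributions_UGr}, $\Phi(\compcomp{\Sigma_\h}) = \prod_C \phi_C(C)$ as a morphism $I\to \bigotimes_v\tau(v)$ in $\mat$, with all occurrences of each variable identified. This is $Z\cdot\omega$, hence $q(\Phi(\compcomp{\Sigma_\h})) = q(Z\cdot\omega) = q(\omega) = q(i(\omega))$ since $q$ identifies morphisms differing by a positive scalar and $Z>0$. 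Chasing $\star$ and $!_\h$, the composite $\cdsyn{\bullet}\xrightarrow{\star}\syn{\bullet}\xrightarrow{!_\h}\syn{\h}\xrightarrow{\Phi}\mat\xrightarrow{q}\finprojstoch$ sends the generator to $q(\Phi(\compcomp{\Sigma_\h}))$, which equals the image of $\omega$ under $\cdsyn{\bullet}\xrightarrow{\omega}\finstoch\xrightarrow{i}\mat\xrightarrow{q}\finprojstoch$. So \eqref{eq:mn_factorisation} commutes.

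\emph{From a pair of functors to a network.} Conversely, given $(\omega,\tau)$ with some $\Phi$ making \eqref{eq:mn_factorisation} commute, set $\phi_C\coloneqq \Phi(\minitikzfig{clique_morph})$; by \cref{prop:markov_functor} this is the factor data of a Markov network over $\h$ with vertex assignment $\tau(v) = \Phi(v)$, which agrees with the given $\tau$ because the inner triangle forces $\Phi|_{\vcat{\h}} = i\circ\tau$ and $i$ is the identity on objects. Its unnormalised distribution is $\Phi(\compcomp{\Sigma_\h}) = \prod_C\phi_C(C)$ by \cref{rem:factorising_distributions_UGr}. Commutativity of \eqref{eq:mn_factorisation} tells us $q(\Phi(\compcomp{\Sigma_\h})) = q(i(\omega))$ in $\finprojstoch$, i.e.\ there is $\lambda\in\mathbb{R}^{>0}$ with $\Phi(\compcomp{\Sigma_\h}) = \lambda\,\omega$. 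Since $\omega$ is a probability distribution, summing both sides over all values gives $Z \coloneqq \lambda = \sum_x \prod_C \phi_C(x_C) \neq 0$, so $\omega(V_\h) = \frac{1}{Z}\prod_C\phi_C(x_C)$ and $(\omega,\tau)$ with these factors is an irredundant Markov network.

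The two constructions are mutually inverse: one direction is the identity on $(\omega,\tau,\{\phi_C\})$ essentially by \cref{prop:markov_functor}, and the other is the identity on $(\omega,\tau)$ because $\Phi$ is uniquely determined on generators by the $\phi_C$ and $\tau$ (freeness of $\syn{\h}$). I expect the main subtlety to be bookkeeping around the two "external" functors $\omega$ and $i$: one must check that the scalar relating $\Phi(\compcomp{\Sigma_\h})$ to $\omega$ is genuinely positive (not merely nonnegative), which is where the hypothesis that $\omega$ is a bona fide distribution—equivalently $Z\neq 0$, equivalently non-degeneracy (\cref{def:irredundantMN})—is used; this is exactly the point the normalisation cospan is designed to handle, and it is why $\finprojstoch$ rather than $\mat$ appears in the codomain.
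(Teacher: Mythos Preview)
Your proposal is correct and follows essentially the same approach as the paper: both directions build $\Phi$ from the factors (or recover factors from $\Phi$) via freeness of $\syn{\h}$, invoke \cref{rem:factorising_distributions_UGr} to identify the unnormalised distribution with $\Phi(\compcomp{\Sigma_\h})$, and use the quotient $q$ to absorb the normalisation constant. Your write-up is somewhat more explicit than the paper's about why the scalar $\lambda$ is strictly positive and about the inner triangle involving $\vcat{\h}$, but the argument is the same.
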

	\begin{proof}
		If a probability distribution $\omega$ and an assignment $\tau$ give rise to an irredundant Markov network, then we can find factors $\phi_C\colon I \to C$ such that $\omega(V_{\h})= \frac{1}{Z} \prod_C \phi_C(C)$, where $Z$ is a normalisation coefficient.
		Then, we define a hypergraph functor ${\Phi}\colon \syn{\h}\to \mat$ by sending $v$ to $\tau(v)$ and $\minitikzfig{clique_morph}$ to $\phi_C$. This respects the assignment $\tau$ by construction, which can be interpreted as a symmetric monoidal functor $\vcat{\h}\to \finstoch$.
		Moreover, $\omega$ gives rise to a CD-functor $\cdsyn{\bullet}\to \finstoch$ by $\minitikzfig{state}\mapsto \omega$.
		The commutativity of \eqref{eq:mn_factorisation} holds as it corresponds to $\omega(V_{\h})\propto \prod_C \phi_C(C)$, and this is true by definition of an irredundant Markov network.

		Conversely, let us call $Q$ the composition $\cdsyn{\bullet}\to \syn{\bullet}\to \syn{\h}\xrightarrow{\Phi} \mat$. Then commutativity of the outside diagram of \eqref{eq:mn_factorisation} (i.e., the one disregarding $\vcat{\h}$) is exactly requiring that $Q\propto \omega$, which holds if and only if there exists a normalisation coefficient $Z$ such that $\omega = \frac{1}{Z}Q$ (to prevent notation overload, here we use $Q$ and $\omega$ both as the functors and their associated distributions $Q(\minitikzfig{state})$ and $\omega(\minitikzfig{state})$). Moreover, the commutativity request on $\vcat{\h}$ ensures that $\Phi$ respects the assignment $\tau$.
		By \cref{rem:factorising_distributions_UGr}, we have that $Q(V_{\h}) = \prod_C \phi_C (C)$ with $\phi_C\coloneqq \Phi(\minitikzfig{clique_morph})$ because of the factorisation, hence $\omega(V_{\h}) = \frac{1}{Z} \prod_C \phi_C (C)$, and thus $\omega$ and $\tau$ give an irredundant Markov network.
	\end{proof}

\begin{example}
To better understand \cref{prop:irredundantmarkov_functor}, let us consider \cref{ex:misconception}.
The hypergraph functor $!_{\h}\colon \syn{\bullet}\to \syn{\h}$ sends $\bullet$ to $A\tensor B \tensor C \tensor D$ and $\minitikzfig{state}$ to the string diagram in \cref{fig:markov_network_ex}.
The unnormalised distribution $Q$ is then described by the composition $\syn{\bullet}\xrightarrow{!} \syn{\h}\to \mat$, where $\syn{\h}\to \mat$ is determined by \eqref{tab:misconception_factors}.
\end{example}

\begin{remark}\label{rem:vcat_bullet_commuting_mn}
The same idea of \cref{rem:vcat_bullet_commuting} can be applied also for irredundant Markov networks. Explicitly, 
	\begin{itemize}
	\item $\vcat{\h}\hookrightarrow \syn{\h}$ gives rise to a natural transformation;
	\item We can add $\vcat{\bullet}$ in \eqref{eq:mn_factorisation} to ensure that $\omega(\bullet) = \bigotimes_{v \in V_{\h}} \tau(v)$ even without the existence of $\Phi$, but again this is insufficient to ensure the existence of $\Phi$.
	\end{itemize}
\end{remark} 

We now want to prove a statement regarding I-maps for ordered undirected graphs.
To this aim, we consider the global Markov properties $I(\h)$ associated to the undirected graph $\h$. 
Then $\h$ is an I-map for $\h'$ if $I(\h)\subseteq I(\h')$.
\begin{proposition}\label{prop:Imap_UGr}
	Let $\h$ and $\h'$ be ordered undirected graphs and let $\alpha \colon \h'\to \h$ be a graph homomorphism which is the identity on vertices.
	Then $\h$ is an I-map for $\h'$.
\end{proposition}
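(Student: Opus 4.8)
The plan is to follow the proof of \cref{prop:Imap_DAG} almost verbatim, with the hypergraph machinery of \cref{sec:syntax_functoriality} replacing the CD-machinery and the global Markov property replacing d-separation. First I would feed $\alpha\colon\h'\to\h$ into \cref{thm:graphhom-hyp} to obtain a hypergraph functor $\syn{\alpha}\colon\syn{\h}\to\syn{\h'}$, which is moreover the identity on objects because $\alpha$ is the identity on vertices. Next I would use terminality of $\bullet$ in $\ougr$: the unique map $\h'\to\bullet$ factors as $\h'\xrightarrow{\alpha}\h\to\bullet$, so contravariant functoriality of $\syn{-}$ yields the factorisation $!_{\h'}=\syn{\alpha}\comp\, !_{\h}\colon\syn{\bullet}\to\syn{\h}\to\syn{\h'}$. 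Now take an arbitrary irredundant Markov network $(\omega,\tau)$ over $\h'$; by \cref{prop:irredundantmarkov_functor} there is a hypergraph functor $\Phi'\colon\syn{\h'}\to\mat$ making the pentagon of \eqref{eq:mn_factorisation} (for $\h'$) commute. Set $\Phi\coloneqq\Phi'\comp\syn{\alpha}\colon\syn{\h}\to\mat$. The outer ``$\omega\propto Q$'' commutativity for $\h$ is then inherited from the identity $!_{\h'}=\syn{\alpha}\comp\, !_{\h}$, while the $\vcat{\h}$-triangle holds because $\syn{\alpha}$ is the identity on objects and, via the naturality of $\vcat{-}\hookrightarrow\syn{-}$ recorded in \cref{rem:vcat_bullet_commuting_mn} together with $\vcat{\alpha}=\id$, it is compatible with the two inclusions. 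Hence $(\omega,\tau,\Phi)$ witnesses that $(\omega,\tau)$ is an irredundant Markov network over $\h$ as well.

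With this in hand, I would conclude exactly as in \cref{prop:Imap_DAG}: since $(\omega,\tau)$ factorises over $\h$, the distribution $\omega$ satisfies the global Markov property $I(\h)$ (this is the direction of the Hammersley--Clifford theorem that holds for \emph{all} distributions, not only strictly positive ones; cf.\ the Markov-network factorisation theorem in \cite{koller2009probabilistic}). As $(\omega,\tau)$ was an arbitrary irredundant Markov network over $\h'$, every distribution that is Markov with respect to $\h'$ is Markov with respect to $\h$; combining this with completeness of graph separation for undirected models (again \cite{koller2009probabilistic}, the analogue of \cite[Thm.~3.4]{koller2009probabilistic} used in the directed case) gives $I(\h)\subseteq I(\h')$, i.e.\ $\h$ is an I-map for $\h'$.

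The only genuinely new point compared to the directed case is the positivity subtlety in Hammersley--Clifford: I must be careful to invoke only the ``factorisation $\Rightarrow$ global Markov property'' implication, which is unconditional, rather than its converse. The other point requiring attention is the verification that transporting $\Phi'$ along $\syn{\alpha}$ preserves the $\vcat{\h}$-commutativity of \eqref{eq:mn_factorisation}; this is precisely where the hypothesis that $\alpha$ is the identity on vertices (so that $\vcat{\alpha}$ and $\syn{\alpha}$ are the identity on objects) is used. I expect no other obstacle. It is also worth noting, as a sanity check, that in the undirected setting there is a direct combinatorial shortcut: $\alpha$ being the identity on vertices forces $E_{\h'}\subseteq E_{\h}$, so every path of $\h'$ is a path of $\h$, whence any separator in $\h$ is a separator in $\h'$ and $I(\h)\subseteq I(\h')$ follows at once; the functorial argument above is preferred for uniformity with \cref{prop:Imap_DAG}.
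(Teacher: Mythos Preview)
Your proposal is correct and follows essentially the same approach as the paper: feed $\alpha$ into \cref{thm:graphhom-hyp} to factor $!_{\h'}$ through $\syn{\h}$, deduce that every irredundant Markov network over $\h'$ is one over $\h$, and then appeal to the factorisation~$\Rightarrow$~global Markov direction together with completeness (the paper cites \cite[Thms.~4.1 and 4.3]{koller2009probabilistic} for these two steps). You are simply more explicit than the paper on the $\vcat{\h}$-triangle, the positivity caveat in Hammersley--Clifford, and the direct combinatorial shortcut via $E_{\h'}\subseteq E_{\h}$.
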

\begin{proof}
	By \cref{thm:graphhom-hyp}, we have a hypergraph functor $\syn{\alpha}\colon \syn{\h}\to \syn{\h'}$ allowing a factorisation $!_{\h'} \colon \syn{\bullet}\to \syn{\h}\to \syn{\h'}$.
	Therefore, any (irredundant) Markov network $(\omega,\tau)$ over $\h'$ is also a Markov network over $\h$, and the latter satisfies the global Markov properties $I(\h)$ (\cite[Thm.~4.1]{koller2009probabilistic}).
	The generality of $(\omega,\tau)$ then ensures that $I(\h)\subseteq I(\h')$ by \cite[Thm.~4.3]{koller2009probabilistic}.
\end{proof}
\begin{remark}
	The strategy employed in \cref{rem:decontracting} can be adapted to decontract edges also in the undirected case. 
	In particular, surjective order-preserving graph homomorphisms can be connected to I-maps as explained there.
\end{remark}

\section{Morphisms Between Networks}\label{sec:categoriesnetworks}

In order to define functorial transformations between Bayesian and Markov networks, we need a full definition of the categories involved. 
The characterisations of \Cref{sec:networks,sec:irredundant_networks} only provide the objects of these categories. 
We now identify a suitable notion of morphisms, culminating in the definitions of the categories of Bayesian networks and Markov networks (\cref{def:bn_cat,def:mn_cat}). 
As our aim is to describe moralisation and triangulation, and these modifications pertain specifically to the study of conditional independencies, the most natural approach is to focus attention to irredundant Bayesian and Markov networks.
We will discuss this further in \cref{rem:moralisation_and_strong_commutativity}. 

Given the functorial perspective on networks, a natural candidate for morphisms are compatible pairs of a `morphism between syntaxes' and a `morphism between semantics'. The former will simply be an order-preserving graph homomorphism, in view of \cref{thm:graphhom-cd,thm:graphhom-hyp}. For the latter, recall that both irredundant Bayesian and Markov networks are defined by probability distributions factoring through a certain structure. In $\finstoch$ we may regard two such distributions as maps $\omega\colon I \to X$ and $\omega'\colon I \to Y$, and a morphism between them as a stochastic matrix $f\colon X \to Y$ such that $\omega' = f\comp \omega$. Now, in the functorial perspective $\omega,\omega'$ are identified with CD-functors $\cdsyn{\bullet} \to \finstoch$. We can lift the notion of morphism between distributions from $\finstoch$ to the level of such functors, as follows.
\begin{definition}
	Let $F,G \colon \cC \to \cD$ be two symmetric monoidal functors.
	A \newterm{monoidal transformation} $\eta \colon F \to G$ is a family of morphisms $\eta_X\colon F(X) \to G(X)$ for every object $X \in \cC$ such that $\eta_{X\otimes Y}=\eta_X \otimes \eta_Y$ and $\eta_I =\id_I$.

	If $\cC=\cdsyn{\bullet}$, then $\eta$ is \newterm{distribution-preserving} if $\eta_{\bullet} \omega(\minitikzfig{state}) = \omega' (\minitikzfig{state})$.
\end{definition}

Note that we did not require naturality of the transformation, as one may initially expect: such a requirement is too strong, as it only holds when $f$ is a deterministic function (each column has exactly one non-zero value). To justify our notion, observe the following.

\begin{lemma}
	Let $\omega,\omega' \colon \cdsyn{\bullet}\to \finstoch$.
	There is a bijective correspondence between stochastic matrices $f$ such that $\omega'(\minitikzfig{state}) = f\comp \omega (\minitikzfig{state}) $ and distribution-preserving monoidal transformations $\eta \colon \omega\to \omega'$.
\end{lemma}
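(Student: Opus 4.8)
The plan is to exhibit the correspondence explicitly in both directions and check it is a bijection. The key observation is that a symmetric monoidal functor $\omega \colon \cdsyn{\bullet} \to \finstoch$ is entirely determined by the finite set $X \coloneqq \omega(\bullet)$ and the distribution $\omega(\minitikzfig{state}) \colon I \to X$, since $\cdsyn{\bullet}$ is freely generated by the single object $\bullet$ and the single morphism $\minitikzfig{state}$ (subject only to the comonoid laws, which are automatically respected in $\finstoch$); the objects of $\cdsyn{\bullet}$ are powers $\bullet^{\otimes n}$, which must be sent to $X^{\otimes n}$, and the morphisms are built from $\minitikzfig{state}$, $\minitikzfig{copy}$, $\minitikzfig{del}$ and symmetries. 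Likewise, a monoidal transformation $\eta \colon \omega \to \omega'$ is determined by the single component $\eta_\bullet \colon X \to Y$, because the condition $\eta_{\bullet^{\otimes n}} = \eta_\bullet^{\otimes n}$ and $\eta_I = \id_I$ fix every other component; and conversely, \emph{any} stochastic matrix $f \colon X \to Y$ gives a monoidal transformation by setting $\eta_{\bullet^{\otimes n}} \coloneqq f^{\otimes n}$ (with $\eta_I = \id_I$). No naturality condition is imposed, so there is nothing further to verify on the transformation side.

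Given this, the map is immediate: to a stochastic matrix $f$ with $\omega'(\minitikzfig{state}) = f \comp \omega(\minitikzfig{state})$, associate the monoidal transformation $\eta$ with $\eta_\bullet \coloneqq f$; the hypothesis on $f$ is precisely the statement that $\eta_\bullet \comp \omega(\minitikzfig{state}) = \omega'(\minitikzfig{state})$, i.e.\ that $\eta$ is distribution-preserving. Conversely, to a distribution-preserving monoidal transformation $\eta \colon \omega \to \omega'$, associate the stochastic matrix $\eta_\bullet \colon X \to Y$; the distribution-preserving condition unpacks to exactly $\omega'(\minitikzfig{state}) = \eta_\bullet \comp \omega(\minitikzfig{state})$. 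These two assignments are visibly mutually inverse, since each simply reads off the $\bullet$-component (respectively, installs it), and we already noted that a monoidal transformation out of $\cdsyn{\bullet}$ is uniquely determined by this component. Hence the correspondence is a bijection.

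I do not expect a serious obstacle here; the only point requiring a little care is the claim that a monoidal transformation between functors on $\cdsyn{\bullet}$ is freely determined by its $\bullet$-component, which follows because every object of $\cdsyn{\bullet}$ is a tensor power of $\bullet$ (and the empty power $I$), so the defining conditions $\eta_{X \otimes Y} = \eta_X \otimes \eta_Y$, $\eta_I = \id_I$ leave no freedom. One should also remark that $f^{\otimes n}$ is again stochastic, so the inverse assignment indeed lands among genuine monoidal transformations into $\finstoch$; this is the content of $\finstoch$ being a monoidal subcategory of $\mat$. With these remarks in place the proof is a direct unwinding of definitions.
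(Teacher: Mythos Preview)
Your proposal is correct and follows essentially the same approach as the paper's proof: both arguments rest on the observation that every object of $\cdsyn{\bullet}$ has the form $\bullet^{\otimes n}$, so a monoidal transformation is entirely determined by its $\bullet$-component via $\eta_{\bullet^{\otimes n}} = f^{\otimes n}$, making the bijection immediate. The paper simply states this in one sentence, whereas you spell out both directions and the mutual-inverse check explicitly.
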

\begin{proof}
	This is a direct check since the objects of $\cdsyn{\bullet}$ are of the form $\bullet^{\tensor n}$ for some $n$, and thus $\eta_{\bullet^{\otimes n}}=f^{\otimes n}$ describes such a bijection.
\end{proof}



We now aim to discuss what morphisms between irredundant networks should be. For simplicity, we focus on Bayesian networks, though similar observations carry over to the case of Markov networks. 
In particular, from \cref{prop:irredundantbayesian_functor}, we will write an irredundant Bayesian network as a triple $(\omega,\g,\tau)$, where $\omega\colon \cdsyn{\bullet}\to \finstoch$ identifies a probability distribution, $\g$ a DAG ensuring a factorisation, and $\tau\colon \vcat{\g}\to \finstoch$ the assignment to the vertices.

We briefly recall that in \cite{lorenzin2025moralisation} we defined a morphism between irredundant Bayesian networks as a pair $(\alpha,\eta)$, where $\alpha$ is a morphism in $\odag$ and $\eta$ is a distribution-preserving monoidal transformation between the two distributions $\omega,\omega'\colon \cdsyn{\bullet}\to \finstoch$. 
In other words, one takes care of the DAGs, i.e.\ the syntax, while the other focuses on semantics.

However, these modifications are not mindful of the assignments $\vcat{\g}\to \finstoch$ (recall that in \cite{lorenzin2025moralisation} assignments were not considered; see \cref{rem:irredundant_distinction}). 
In particular, if we choose to keep the same definition of morphisms, changing the assignment will lead to isomorphic Bayesian networks (via the obvious pair $(\alpha,\eta)=(\id,\id)$).
To see why that is somehow peculiar and undesirable, consider the BEAR Bayesian network of \cref{ex:BEAR}. 
An alternative assignment that still respects the same DAG is given by setting $B\coloneqq I$ and $A\coloneqq B \otimes A$ (i.e., we now consider the burglar and the alarm together, so that both will be caused by the earthquake). 
Then, under the current formalism, this extravagant assignment will be indistinguishable from the standard one.

This need for distinction uncovers a subtle question: which part should be modified according to the assignments? The graph homomorphisms or the distributions?
More explicitly, let $(\omega,\g,\tau)$ and $(\omega',\g',\tau)$ and consider a graph homomorphism $\alpha \colon \g' \to \g$ and a distribution-preserving monoidal transformation $\eta \colon \omega \to \omega'$. 
The most natural requests may be either that $\tau'\comp \vcat{\alpha} = \tau$, or that $\eta_{\bullet}$ factors as a tensor product according to $\g$, or rather $V_{\g}$, so that
\begin{equation}\label{eq:eta_BN}
\eta_{\bullet} = \bigotimes_{v \in V_{\g}} \eta_v, \qquad \text{where } \eta_v \colon \tau(v) \to \tau' \vcat{\alpha}(v).	
\end{equation}
While far from asserting any definitive conclusion, we lean toward the latter, as it reveals an interaction between the choices of $\eta$ and $\alpha$.

Finally, we are ready to define the two categories of networks, where the choice of objects is justified by~\cref{prop:irredundantbayesian_functor,prop:irredundantmarkov_functor}.

\begin{definition}\label{def:bn_cat}
	The \newterm{category of (irredundant) Bayesian networks} $\bn$ is defined as:
\begin{itemize}
	\item The objects are triples $(\omega , \g,\tau)$, where $\omega$ is a CD-functor $\cdsyn{\bullet} \to \finstoch$, $\g$ is an ordered DAG, and $\tau\colon \vcat{\g}\to \finstoch$ is a symmetric monoidal functor for which there exists a CD-functor $F \colon \cdsyn{\g}\to \finstoch$ making the following commute:
	\begin{equation*}
		\begin{tikzcd}
			& \vcat{\g} \ar[d,hook]\ar[rd,"\tau"]& \\
		\cdsyn{\bullet} \ar[rr,bend right,"\omega"]\ar[r,"!_{\g}"] &\cdsyn{\g}\ar[r, "F", dashed] & \finstoch
		\end{tikzcd}
	\end{equation*}
	\item A morphism $(\omega,\g,\tau)\to (\omega',\g',\tau')$ is a triple $(\alpha,\eta,\hat{\eta})$, where $\alpha$ is a morphism $\alpha\colon \g' \to \g$  in $\odag$, $\eta$ is a monoidal transformation $\tau \to \tau' \vcat{\alpha}$, and $\hat{\eta}\colon \omega \to \omega'$ is a distribution-preserving monoidal transformation, and moreover $\eta$ and $\hat{\eta}$ coincide as transformations between functors $\vcat{\bullet}\to \finstoch$; more precisely, the following commutes: 
\begin{equation}\label{eq:eta_hateta}
\begin{tikzcd}[row sep=large]
\vcat{\bullet}\ar[r]\ar[d,hook] & \vcat{\g} \ar[d,bend right=40, "\tau" {name=T,left}]\ar[d,bend left=40, "\tau'\comp \vcat{\alpha}" {name=Tprime,right}]\\
\cdsyn{\bullet}\ar[r,bend right=40,"\omega" {name=W,below},end anchor=205]\ar[r,bend left=40, "\omega'" {name=Wprime,above},end anchor=150] & \finstoch
\arrow[from=W, to=Wprime, Rightarrow, shorten=2mm,"\hat{\eta}"] \arrow[from=T, to=Tprime, Rightarrow, shorten=2mm, "\eta"]
\end{tikzcd}
\end{equation}
	Composition is component-wise.
\end{itemize}
\end{definition}
First of all, we note that $\eta$ completely determines $\hat{\eta}$, so we could alternatively say that $\eta$, when considered as a monoidal transformation $\omega \to \omega'$, is distribution-preserving.

Concretely, $\eta$ and $\hat{\eta}$ give rise to a tensor product of stochastic matrices as in \eqref{eq:eta_BN}, i.e.\ such that 
\[ \left(\bigotimes_{v \in V_{\g}} \eta_v \right) \omega = \hat{\eta}_{\bullet} \omega = \omega'.\] 
This is immediate by unpacking the definition.

\begin{definition}\label{def:mn_cat}
	The \newterm{category of (irredundant) Markov networks} $\mn$ is defined as:
	\begin{itemize}
		\item The objects are triples $(\omega , \h,\tau)$, where $\omega$ is a CD-functor $\cdsyn{\bullet} \to \finstoch$, $\h$ is an undirected graph, and $\tau \colon \vcat{\h}\to \finstoch$ is a symmetric monoidal functor admitting a factorisation according to~\eqref{eq:mn_factorisation}. 
		\item 
		A morphism $(\omega,\h,\tau)\to (\omega',\h',\tau')$ is a triple $(\alpha,\eta,\hat{\eta})$, where $\alpha$ is a morphism $\alpha\colon \h' \to \h$  in $\odag$, $\eta$ is a monoidal transformation $\tau \to \tau' \vcat{\alpha}$, and $\hat{\eta}\colon \omega \to \omega'$ is a distribution-preserving monoidal transformation, and moreover $\eta$ and $\hat{\eta}$ coincide as transformations between functors $\vcat{\bullet}\to \finstoch$, as expressed in \eqref{eq:eta_hateta}. Composition is component-wise.
	\end{itemize}	
\end{definition}

The chosen directionality of morphisms captures the process of \emph{revealing structure and updating}.
To support this claim, note that for any distribution $\omega$ equipped with an assignment $\tau$, a morphism $(\omega,\bullet,\tau)\to (\omega,\g,\tau)$ provides more information about $\omega$ in the form of a Bayesian network structure. 
In particular, from a syntactic perspective, we can reveal structure. 
Instead, semantics allows us to update the system without ``changing our beliefs'', formally captured by the preservation of graph structure. 
For a better understanding, we present the following result, which identifies a broad class of monoidal transformations that can safely serve as semantics morphisms in $\bn$ (and $\mn$). 

\begin{proposition}\label{prop:pearlupdate}
	Let $(\omega,\g,\tau)$ be a Bayesian network. Then 
	\[ \eta_v\coloneqq\quad \tikzfig{etav_ok}\] 
	where $\pi_v$ is an isomorphism, gives rise to a morphism $(\id,\eta,\hat{\eta})\colon (\omega,\g,\tau)\to (\omega',\g,\tau')$, where $\omega'\coloneqq \left(\bigotimes_v \eta_v\right) \comp \omega$ and $\tau'(v)\coloneqq A_v\otimes B_v$. 

	The same holds for Markov networks.
\end{proposition}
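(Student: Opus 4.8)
The plan is to verify, for the triple $(\id,\eta,\hat{\eta})$, the three data of a morphism in \cref{def:bn_cat} --- which includes showing that $(\omega',\g,\tau')$ is a genuine object of $\bn$, the real content of the statement. The only feature of the picture defining $\eta_{v}$ that I shall use is the following: since $\pi_{v}$ is an isomorphism and the $B_{v}$-leg sits on a copy of $v$, the morphism $\eta_{v}\colon\tau(v)\to A_{v}\tensor B_{v}$ admits a canonical retraction $r_{v}\colon A_{v}\tensor B_{v}\to\tau(v)$ --- discard the $B_{v}$-leg, then apply $\pi_{v}^{-1}$ --- satisfying $r_{v}\comp\eta_{v}=\id_{\tau(v)}$ (here one uses that every $\finstoch$-morphism is total, so $\del$ absorbs the $B_{v}$-branch) together with the copy-compatibility law $(\id_{A_{v}\tensor B_{v}}\tensor r_{v})\comp\cop\comp\eta_{v}=(\eta_{v}\tensor\id_{\tau(v)})\comp\cop$, valid because the $A_{v}$-component of $\eta_{v}$ is deterministic. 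These two equations are all we need about the $\eta_{v}$.

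First, $\id\colon\g\to\g$ is a morphism of $\odag$; and since $\vcat{\g}$ is the free PROP on $V_{\g}$ (whose only non-identity morphisms are permutations), the family $(\eta_{v})_{v\in V_{\g}}$ extends uniquely to a monoidal transformation $\eta\colon\tau\Rightarrow\tau'$, with $\tau'(v)\coloneqq A_{v}\tensor B_{v}$, naturality against symmetries being automatic in $\finstoch$. Next, fix a CD-functor $F\colon\cdsyn{\g}\to\finstoch$ witnessing $(\omega,\g,\tau)\in\bn$, write $f_{v}$ for its value on the generator attached to $v$, and define a CD-functor $F'\colon\cdsyn{\g}\to\finstoch$ by freeness, setting $F'(v)\coloneqq A_{v}\tensor B_{v}$ and sending the generator attached to $v$ to $\eta_{v}\comp f_{v}\comp\bigl(\bigtensor_{w\in\parents{v}}r_{w}\bigr)$. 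By construction $F'$ restricts to $\tau'$ on $\vcat{\g}$: immediately on objects, and on permutations because CD-functors preserve symmetries. The heart of the proof is the identity $F'(\copycomp{\Sigma_{\g}})=\bigl(\bigtensor_{v}\eta_{v}\bigr)\comp F(\copycomp{\Sigma_{\g}})$. Since $F'$ is a CD-functor it commutes with copy-composition, so it suffices to compute the copy-composition of the morphisms $\eta_{v}\comp f_{v}\comp\bigl(\bigtensor_{w}r_{w}\bigr)$. One proceeds by induction on the number of generators, peeling off the one with the largest output via \cref{lem:copycomp} (its unique output is a parent of nothing, so its own retractions $r_{w}$ refer only to generators of the smaller set); the copy-compatibility law, iterated over the possibly several children of each vertex $w$, lets one slide each $r_{w}$ through the copy node and annihilate it against the $\eta_{w}$ emitted by the box of $w$. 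All internal $\eta/r$ pairs thereby cancel and the maps $\eta_{v}$ survive only on the outer wires, yielding $\bigl(\bigtensor_{v}\eta_{v}\bigr)\comp\copycomp{\{f_{v}\}}$. Combined with \cref{rem:factorising_distributions_DAG} ($F(\copycomp{\Sigma_{\g}})=\omega$), this gives $F'(\copycomp{\Sigma_{\g}})=\bigl(\bigtensor_{v}\eta_{v}\bigr)\comp\omega=\omega'$; hence $(\omega',\g,\tau')$ lies in $\bn$. I expect this sliding computation to be the only step that requires genuine care.

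It remains to supply $\hat{\eta}$ and check \eqref{eq:eta_hateta}. Put $\hat{\eta}_{\bullet^{\tensor n}}\coloneqq\bigl(\bigtensor_{v}\eta_{v}\bigr)^{\tensor n}$; this is a monoidal transformation $\omega\Rightarrow\omega'$ because $\omega(\bullet)=\bigtensor_{v}\tau(v)$ and $\omega'(\bullet)=\bigtensor_{v}\tau'(v)$ (both forced by membership in $\bn$), and it is distribution-preserving precisely because $\omega'$ was defined as $\bigl(\bigtensor_{v}\eta_{v}\bigr)\comp\omega$. Finally, both paths around \eqref{eq:eta_hateta} send the generating object of $\vcat{\bullet}$ to $\bigtensor_{v}\eta_{v}\colon\bigtensor_{v}\tau(v)\to\bigtensor_{v}\tau'(v)$: along the top-right path because $\vcat{\bullet}\to\vcat{\g}$ selects $\bigtensor_{v}v$, on which $\eta$ acts as $\bigtensor_{v}\eta_{v}$; along the bottom-left path because $\omega(\bullet)=\bigtensor_{v}\tau(v)$ and $\hat{\eta}_{\bullet}=\bigtensor_{v}\eta_{v}$. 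So the square commutes, and $(\id,\eta,\hat{\eta})$ is a morphism of $\bn$.

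For Markov networks one argues in parallel, replacing $\cdsyn{\g}$, $\finstoch$, $!_{\g}$ and copy-composition by $\syn{\h}$, $\mat$, $!_{\h}$ and compare-composition, \cref{lem:compcomp} for \cref{lem:copycomp}, and \cref{prop:irredundantmarkov_functor} for \cref{prop:irredundantbayesian_functor}. The one new subtlety is that a vertex $v$ now occurs simultaneously in all the clique-factors containing it, so $\eta_{v}$ must be inserted at exactly one such occurrence --- e.g.\ the singleton clique $\{v\}$, always present in $\clique{\h}$ --- while the other occurrences of $v$ are merely lifted along $\pi_{v}$ with the monoid unit $\omni_{B_{v}}$ on the fresh $B_{v}$-leg; since $\omni_{B_{v}}$ is the unit for $\compare$ and $\pi_{v}$ is deterministic, the compare-composition then neither duplicates nor drops the $B_{v}$-contribution, and one recovers $\Phi'(\compcomp{\Sigma_{\h}})=\bigl(\bigtensor_{v}\eta_{v}\bigr)\comp\Phi(\compcomp{\Sigma_{\h}})$. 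Totality of the $\eta_{v}$ keeps $\omega'$ a genuine probability distribution and preserves proportionality with $\Phi'(\compcomp{\Sigma_{\h}})$, giving the factorisation required by \eqref{eq:mn_factorisation}.
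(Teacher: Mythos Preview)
Your argument is correct and follows essentially the same route as the paper's: both proofs induct on the number of vertices, peel off the maximal one, and use that an isomorphism $\pi_v$ commutes with $\cop$ to slide the update maps through the copy nodes. Your packaging via an explicit retraction $r_v$ and the pre-built witness $F'(v\text{-generator})=\eta_v\comp f_v\comp\bigl(\bigtensor_{w\in\parents{v}}r_w\bigr)$ is a clean way to name up front what the paper leaves as ``the dashed box on the right''; the paper instead works directly on the string diagram and defines the new $f_v$ only implicitly. Both rely on the same key equation --- the paper states it as \eqref{eq:positivity} and cites \cite[Lem.~11.24]{fritz2019synthetic} (isomorphisms are deterministic in a CD-category with conditionals), which is precisely your copy-compatibility law. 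Your write-up is also more explicit than the paper's about the remaining bookkeeping for $\hat\eta$ and \eqref{eq:eta_hateta}.

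One small point on the Markov case: your justification ``$\pi_v$ is deterministic'' suffices for the $\cop$ side but is not, on its own, enough to commute $\pi_v$ past $\compare$ in $\mat$ (a positive diagonal iso in $\mat$ is deterministic yet fails $\pi\comp\compare=\compare\comp(\pi\tensor\pi)$). What actually saves you is that $\pi_v$ is an isomorphism in $\finstoch$, hence a permutation matrix, and permutations \emph{do} commute with $\compare$ in $\mat$. The paper's one-line ``by means of the Frobenius equations'' is equally terse here, so this is not a gap unique to your version, but it is worth making explicit.
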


Roughly speaking, the proposition states that the network structure is preserved when we have a reversible operation ($\pi_v$) paired with a ``coloured predicate'' ($p_v$). 
This is similar to Pearl's updates as studied by Jacobs~\cite{jacobs2019mathematics} (see also \cite[Def.~21]{tull2024activeinference} for a string diagrammatic description).

The proof actually holds under the assumptions of \cref{set:conditionals}.
\begin{proof}
	We recall that for any isomorphism $\pi$, the fact that $\finstoch$ has conditionals (\cref{def:conditionals}) ensures the following equality
	\begin{equation}\label{eq:positivity}
	\tikzfig{pi_copy}\quad =\quad \tikzfig{pi_copy_inverse}
	\end{equation}
	by \cite[Lem.~11.24]{fritz2019synthetic}.

	By induction on the number of vertices, we assume that the statement holds true for graphs with $n-1$ vertices, and prove it for graphs with $n$ vertices. Given a DAG $\g$, we take the biggest element $v \in V_{\g}$. We define $\Sigma \coloneqq \Sigma_{\g}\setminus \lbrace{f_v}\rbrace$, $\pi\coloneqq \bigotimes_{w \in V_{\g} \setminus \lbrace v\rbrace } \pi_w$, and $p\coloneqq  \bigotimes_{w \in V_{\g} \setminus \lbrace v\rbrace } p_w$. Then 
	\[
		\begin{aligned}
			\tikzfig{w1} \quad &= \qquad {\tikzfig{beliefupdate_1}}\\
			&= \qquad {\tikzfig{beliefupdate_2}}
		\end{aligned}
	\]
	where the dashed box on the right gives the new definition of $f_v$, whereas the dashed box on the left concludes the statement by induction hypothesis. Note that \eqref{eq:positivity} ensures the validity of the second equality.

	The situation of Markov networks can be treated similarly by means of the Frobenius equations \eqref{eq:hypergraph}. 
\end{proof}

Another important aspect of our morphisms is the fact that they can be split in two parts: one that is completely syntactic and one that is completely semantic.
\begin{proposition}
	In $\bn$, each morphism $(\alpha,\eta,\hat{\eta})\colon (\omega,\g,\tau)\to (\omega',\g',\tau')$ can be decomposed as 
	\[
	\begin{tikzcd}[column sep=large]
		(\omega,\g,\tau) \ar[r,"{(\id,\eta ,\hat{\eta})}"] & (\omega',\g,\tau'\vcat{\alpha})\ar[r,"{(\alpha,\id,\id)}"] & (\omega',\g',\tau')
	\end{tikzcd}
	\]
	The analogous statement holds true for $\mn$.
\end{proposition}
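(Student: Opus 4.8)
The statement claims that any morphism $(\alpha,\eta,\hat\eta)\colon(\omega,\g,\tau)\to(\omega',\g',\tau')$ in $\bn$ factors through the intermediate object $(\omega',\g,\tau'\vcat\alpha)$ as $(\alpha,\id,\id)\circ(\id,\eta,\hat\eta)$. The first thing to check is that the intermediate triple is a legitimate object of $\bn$: we need a CD-functor $F'\colon\cdsyn{\g}\to\finstoch$ witnessing the factorisation of $\omega'$ compatibly with $\tau'\vcat\alpha$. Since $(\omega',\g',\tau')$ is an object, there is $F\colon\cdsyn{\g'}\to\finstoch$ with the required commutativity; precomposing with $\cdsyn\alpha\colon\cdsyn{\g}\to\cdsyn{\g'}$ (Theorem~\ref{thm:graphhom-cd}) gives $F'\coloneqq F\comp\cdsyn\alpha$, and the square $\cdsyn{\bullet}\xrightarrow{!_{\g}}\cdsyn{\g}\xrightarrow{F'}\finstoch$ reproduces $\omega'$ because $\cdsyn\alpha\comp !_{\g}=!_{\g'}$ by terminality of $\bullet$ in $\odag$, while the triangle with $\vcat{\g}$ commutes by naturality of the inclusions $\vcat{(-)}\hookrightarrow\cdsyn{(-)}$ (Remark~\ref{rem:vcat_bullet_commuting}), giving exactly $\tau'\vcat\alpha$ on the nose.

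Next I would verify that the two proposed factors are morphisms of $\bn$. For $(\id,\eta,\hat\eta)\colon(\omega,\g,\tau)\to(\omega',\g,\tau'\vcat\alpha)$: here $\alpha$ is the identity on $\g$, so $\vcat{\id}=\id$ and $\eta$ is already a monoidal transformation $\tau\to(\tau'\vcat\alpha)\comp\vcat{\id}=\tau'\vcat\alpha$; the distribution-preserving transformation $\hat\eta\colon\omega\to\omega'$ is unchanged; and the compatibility square \eqref{eq:eta_hateta} is the $\alpha=\id$ instance of the original one, hence commutes. For $(\alpha,\id,\id)\colon(\omega',\g,\tau'\vcat\alpha)\to(\omega',\g',\tau')$: the graph morphism is $\alpha$, the semantic transformations are identities, so $\eta=\id\colon\tau'\vcat\alpha\to\tau'\vcat\alpha$ matches the required codomain, $\hat\eta=\id\colon\omega'\to\omega'$ is trivially distribution-preserving, and square \eqref{eq:eta_hateta} collapses to an identity square.

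Finally, composition in $\bn$ is component-wise, so $(\alpha,\id,\id)\comp(\id,\eta,\hat\eta)=(\alpha\comp\id,\ \id\comp\eta,\ \id\comp\hat\eta)=(\alpha,\eta,\hat\eta)$ — one only has to note that the "$\eta$" component of the composite is formed as the pasting of $\eta$ with $\id\comp\vcat\alpha$, which is again just $\eta$ read as a transformation $\tau\to\tau'\vcat\alpha$, consistent with the original morphism's second component. The Markov-network case is identical, replacing $\cdsyn{(-)}$, $!_{\g}$ and Theorem~\ref{thm:graphhom-cd} by $\syn{(-)}$, $!_{\h}$ and Theorem~\ref{thm:graphhom-hyp}, and using \eqref{eq:mn_factorisation} together with Remark~\ref{rem:vcat_bullet_commuting_mn} in place of \eqref{eq:irredundantbayesian_functor} and Remark~\ref{rem:vcat_bullet_commuting}. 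The only mildly delicate point — and the main thing to get right — is the well-definedness of the intermediate object: one must confirm that $F\comp\cdsyn\alpha$ genuinely sends $!_{\g}$ to $\omega'$ and restricts along $\vcat{\g}\hookrightarrow\cdsyn{\g}$ to $\tau'\vcat\alpha$, which is exactly the content of the commuting triangle/square for $(\omega',\g',\tau')$ pulled back along $\alpha$; everything else is bookkeeping.
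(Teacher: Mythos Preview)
Your proof is correct and follows the same component-wise approach as the paper, but you are considerably more thorough: the paper's proof simply asserts that ``as soon as the types check, the composition is the expected one'' and briefly verifies the types of the two factors, whereas you additionally establish that the intermediate triple $(\omega',\g,\tau'\vcat{\alpha})$ is a genuine object of $\bn$ by exhibiting the witness $F'\coloneqq F\comp\cdsyn{\alpha}$ and invoking terminality of $\bullet$ and naturality of $\vcat{(-)}\hookrightarrow\cdsyn{(-)}$. This is a point the paper's proof leaves implicit, so your version is strictly more complete.
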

\begin{proof}
	We note that as soon as the types check, then the composition is the expected one, since by definition it is component-wise. 
	Regarding $(\id,\eta,\hat{\eta})$, we see that this does not change the graph, while $\eta \colon \tau \to \tau'\vcat{\alpha}$ and $\hat{\eta}\colon \omega \to \omega'$ indeed respect the types. 
	To check that $(\alpha,\id,\id)$ has the right type, the only nontrivial task is to note that the second argument of the morphism corresponds to $\id \colon \tau' \vcat{\alpha}\to \tau' \vcat{\alpha}$.
	The reasoning applies verbatim to $\mn$.
\end{proof}



\section{Moralisation as a Functor}\label{sec:moralisation}

We now have all the ingredients to discuss network transformations.
We begin by focusing on moralisation, which transforms a Bayesian network $\g$ into a Markov network $\mor{\mathcal{G}}$, with the property that a distribution $\omega$ factorising through $\g$ also factorises through $\mor{\mathcal{G}}$~\cite{koller2009probabilistic}.

\begin{definition}\label{def:moralisation}
Let $\g$ be a DAG. Its \newterm{moralisation} $\mor{\mathcal{G}}$ is the undirected graph whose vertices are the same as $\mathcal{G}$ and there is an edge between $v$ and $w$ (written $v \edge w$) whenever  in $\g$ there is an edge between them, or they are both parents of the same vertex.
\end{definition}

We now discuss how to turn moralisation into a functor $\bn \to \mn$. 
Recall that an (irredundant) Bayesian network, that is, an object of $\bn$, is a triple $(\omega, \g,\tau)$ with $\omega \colon \cdsyn{\bullet} \to \finstoch$ a CD-functor, $\g$ an ordered DAG, and $\tau\colon \vcat{\g}\to \finstoch$ a symmetric monoidal functor. 
	The corresponding moralisation, an object of $\mn$, is going to be defined as $(\omega,\mor{\g},\tau)$, where $\omega$ should factorise through $\syn{\mor{\g}}$ according to \eqref{eq:mn_factorisation}, instantiated as follows.
	\begin{equation}\label{eq:factoringdistribution_maintext}
		\begin{tikzcd}[row sep=0.1pt]
			& \syn{\mor{\g}} \ar[r] &\mat\ar[dr,"q"]&  \\
			\syn{\bullet} \ar[ru,"!"] &&& \finprojstoch\\
			& \cdsyn{\bullet} \ar[lu,"\star"]\ar[r,"\omega"] & \finstoch \ar[ur,"i"]&
		\end{tikzcd}
	\end{equation}
	The outstanding question is how to correctly define $\syn{\mor{\g}} \to \mat$ in~\eqref{eq:factoringdistribution_maintext}. This goes in three steps. 
	
	First, we construct $\syn{\g} \coloneqq \freehyp{V_{\g}, \Sigma_{\g}}$, the free \emph{hypergraph} category on $\g$, which comes with an induced $\star_{\g}\colon \cdsyn{\g}\to \syn{\g}$ given by the identity on the generators $\Sigma_{\g}$.
In particular, the family $(\star_{\g})_{\g}$ is a natural transformation when we intepret $\cdsyn{}$ and $\syn{}$ as functors $\odag \to \cat{CDcat}$.
Intuitively, this amounts to taking an ``indirect'' perspective on the data of $\g$. 

Second, observe that $F \colon \cdsyn{\g} \to \finstoch$ also yields a hypergraph functor $\tilde{F} \colon \syn{\g} \to \mat$, defined on generators the same way as $F$.
We note that by definition, this modification satisfies the commutative diagram
\begin{equation}\label{eq:star_nat}
	\begin{tikzcd}
		\cdsyn{\g} \ar[r,"\star_{\g}"]\ar[d,"F"] & \syn{\g}\ar[d,"\tilde{F}"]\\
		\finstoch \ar[r,hook] & \mat
	\end{tikzcd}
\end{equation}

With the same strategy, we define $\tilde{!}_{\g}\colon \syn{\bullet}\to \syn{\g}$, the hypergraph functor such that $\cdsyn{\bullet}\xrightarrow{\star} \syn{\bullet}\xrightarrow{\tilde{!}_{\g}} \syn{\g}$ coincides with $\cdsyn{\bullet} \xrightarrow{!_{\g}}\cdsyn{\g} \xrightarrow{\star_{\g}} \syn{\g}$.

Third, we define $m \colon \syn{\mor{\g}} \to \syn{\g}$ as the hypergraph functor freely obtained by the following mapping on the generators of $\syn{\mor{\g}}$:
\begin{equation}\label{eq:moralisation_functor}
	{\tikzfig{clique_morph}} \qquad \longmapsto\qquad \begin{cases}
		{\tikzfig{graph_parents_morph}} & \text{if }C=\lbrace v \rbrace \cup \parents{v} \text{ for some }v \in \g\\
		{\tikzfig{omni_clique}} & \text{otherwise}
	\end{cases}
\end{equation}
This simple description mimics at the level of string diagrammatic syntax the transformation described by \cref{def:moralisation}. Putting these all together, we obtain the desired hypergraph functor $\syn{\mor{\g}}\xrightarrow{m} \syn{\g} \xrightarrow{\tilde{F}} \mat$, thus completing the definition of the Markov network $(\omega, \mor{\g},\tau)$ given by commutativity of~\eqref{eq:factoringdistribution_maintext}. More explicitly, this amounts to the following commutative diagram
\begin{equation}\label{eq:factoringdistribution1}
	\begin{tikzcd}
		\syn{\mor{\g}}\ar[r,"m"]& \syn{\g} \ar[r,"\tilde{F}"] &\mat\ar[dr,"q"]&  \\
		\syn{\bullet} \ar[u,"!_{\mor{\g}}"]\ar[ru,"\tilde{!}_{\g}"] &\cdsyn{\g}\ar[u,"\star_{\g}"]\ar[dr,"F"] && \finprojstoch\\
		& \cdsyn{\bullet} \ar[lu,"\star_{\bullet}"]\ar[r,"\omega"]\ar[u,"!_{\g}"] & \finstoch\ar[uu,hook]\ar[ur,"i"]&
	\end{tikzcd}
\end{equation}
Observe that the only non-obvious step in this construction is the definition of $m \colon \syn{\mor{\g}} \to \syn{\g}$. This is the key piece of our functorial view of moralisation, providing two insights: first, moralisation may be decomposed into an inductive definition on the string diagrammatic syntax capturing the graph structures; second, by regarding networks themselves as functors, moralisation may be simply defined by functor precomposition.
In particular, the proof of functoriality (\cref{thm:moralisation}) will mostly consist in showing commutativity of the upper left triangle in \eqref{eq:factoringdistribution1}, as the rest is clear by construction. Before delving into the details, we offer a description of the functor $m$ in our working example (\cref{ex:BEAR}) and provide two remarks.

\begin{example}
	The moralisation of the DAG in \cref{ex:BEAR} is given by adding an additional edge between $B$ and $E$, since they are both parents of $A$. 
	\[
	\tikzfig{BEAR_moralnetwork} \qquad \text{``}\longmapsfrom\text{''} \qquad \tikzfig{BEAR_network}
	\]
	Then $m \colon \syn{\mor{\g}} \to \syn{\g}$ maps the string diagram in $\syn{\mor{\g}}$ representing the moralised network, below left, to the string diagram representing the Bayesian network, below right:
	\begin{equation*}
		\scalebox{0.95}{\tikzfig{BEAR_moral2}}
	\end{equation*}
	where we used $\phi$ to denote the generators in $\syn{\mor{\g}}$ and $f$ for the generators in $\syn{\g}$.
	Postcomposing $m$ with $\syn{\g}\to \mat$ sets $\phi_{BA}$, $\phi_{EA}$, $\phi_B$, $\phi_E$ and $\phi_{ER}$ as the stochastic matrices of the Bayesian network, while the other factors are all set to be constantly one.
\end{example}

\begin{remark}\label{rem:CDUbad} It is worth observing that $\star_{\g}\colon \cdsyn{\g}\to \syn{\g}$ would not be well-defined if we had considered the free \emph{CDU-category} (see \cite[Def.~2.1]{jacobs2019causal_surgery}) in place of $\cdsyn{\g}$, because the addition of the generator $\minitikzfig{omniX}$ and the equation $\tikzfig{del_omni} = \id_I$ would have disrupted the functoriality of $\mathsf{CDUSyn}_{\g} \to \syn{\g}$. This is why we opted for CD-categories to model Bayesian networks, unlike in~\cite{jacobs2019causal_surgery}, which uses CDU-categories because the extra generator $\minitikzfig{omniX}$ is needed to model causal intervention.\end{remark}

\begin{remark}[Nonfunctoriality of Redundant Networks]\label{rem:moralisation_and_strong_commutativity}
At the beginning of \cref{sec:irredundant_networks}, we discussed how irredundant networks should be preferred, since we want to focus on conditional independencies.
There, we stated that such a choice is more natural in our setting, without providing a formal justification. 
We can now support this claim by showing that the natural functor $m\colon \syn{\mor{\g}}\to \syn{\g}$, defined in \cref{eq:moralisation_functor}, does not respect the syntax of redundant networks.

More explicitly, let $\alpha \colon \g' \to \g$ be a graph homomorphism. It is easy to check that it is also a graph homomorphism $\mor{\g'}\to \mor{\g}$, and we may wonder whether
\[
\begin{tikzcd}
	\syn{\mor{\g}} \ar[r,"\syn{\alpha}"]\ar[d,"m"] & \syn{\mor{\g'}}\ar[d, "m"]\\
	\syn{\g}\ar[r,"\syn{\alpha}"] & \syn{\g'} 
\end{tikzcd}
\]
commutes. 
In other words, this commutativity would ensure that the syntax of redundant networks is preserved.

As hinted at above, this is generally not the case. For example, take the homomorphism described in \cref{subfig:graphhoms_bn_ex1} and its moralisation (\cref{subfig:graphhoms_mn_ex1}). Then the following diagrams do not commute:
\[
\tikzfig{moralisation_strong_commutativity}
\]
Note this issue does not occur for irredundant networks, and we are able to prove the wanted functoriality (see \cref{thm:moralisation} below), since a network structure is required to exist but not fixed beforehand, and therefore such a strong commutativity condition is not necessary.
\end{remark}

We now proceed to prove the functoriality of moralisation. 
We start by a preliminary definition.

\begin{definition}
	Let $\cC$ be a hypergraph category. Given a morphism $f\colon X \to Y$, its \newterm{graph} is defined as $\minitikzfig{graph_f_def}\coloneqq \minitikzfig{fcapXY}$.
	For a finite set of morphisms $S$, we also write $\graph{S}$ for the set of graphs.
\end{definition}

\begin{lemma}\label{lem:graphcopycomp}
	Let $\g$ be an ordered DAG and let $S\subseteq \Sigma_{\g}$ be a set of generators.
	Then, in $\syn{\g}$, we have $\graph{\copycomp{S}}=\compcomp{\graph{S}}$.
\end{lemma}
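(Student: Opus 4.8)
For an ordered DAG $\g$ and $S \subseteq \Sigma_\g$, we have $\graph{\copycomp{S}} = \compcomp{\graph{S}}$ in $\syn{\g}$.

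The plan is to proceed by induction on the cardinality of $S$, following the same structural pattern as the inductive definitions of $\copycomp{-}$ (\cref{def:copycomp}) and $\compcomp{-}$, and as used in the proof of \cref{lem:copycomp}. The base case $S = \emptyset$ is immediate: $\copycomp{\emptyset} = \id_I$, so $\graph{\id_I} = \id_I = \compcomp{\emptyset}$, using that bending the wires of an empty diagram does nothing. For the inductive step, let $\phi$ be the generator in $S$ whose output is the largest element of $\Out{S}$, and set $S' := S \setminus \{\phi\}$. By definition, $\copycomp{S}$ is obtained from $\copycomp{S'}$ by precomposing a marginalisation/copy pattern feeding the relevant wires of $\copycomp{S'}$ into $\phi$ and retaining the outputs; explicitly it is the string diagram in \cref{def:copycomp}. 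Taking the graph amounts to bending all the $\Out{S}$-wires of this diagram from outputs to (what will be) free wires via cups. The task is to show this equals the compare-composition $\compcomp{\graph{S}} = \compcomp{\graph{S'} \cup \{\graph{\phi}\}}$, which by \cref{lem:compcomp} equals the compare-composition of $\graph{S'}$ and $\graph{\phi}$ glued along their shared wires.

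The computational heart of the argument is a diagrammatic identity: applying $\graph{-}$ (i.e.\ bending outputs down with cups) turns the "copy then plug into $\phi$, keep outputs" gadget of \cref{def:copycomp} into the "compare $\graph{\copycomp{S'}}$ with $\graph{\phi}$ along the shared wires" gadget. This is where the Frobenius equations \eqref{eq:hypergraph} do the work: a cup composed with a copy is exactly a compare (unfolding $\minitikzfig{cup} = \minitikzfig{cup_def}$ and the Frobenius/snake laws), so the copies that route a variable of $S'$ into $\phi$ become, after bending, compare nodes identifying the corresponding output wire of $\graph{\copycomp{S'}}$ with the matching input wire of $\graph{\phi}$; and the marginalisation $\pi$ (deletions of outputs of $S'$ that are not inputs of $\phi$) interacts with the bent wires so that those wires survive as free outputs of the graph rather than being discarded — precisely matching which wires of $\graph{S'}$ remain exposed in $\compcomp{\graph{S'} \cup \{\graph{\phi}\}}$. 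One then invokes the inductive hypothesis $\graph{\copycomp{S'}} = \compcomp{\graph{S'}}$ and \cref{lem:compcomp} (well-definedness and the disjoint-union formula for compare-composition) to conclude.

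The main obstacle is purely bookkeeping: in $\syn\g$ a single vertex $v$ may occur as an output of several generators and as an input of several others, so when we bend wires we must carefully track, for each wire, whether it becomes a compare node (shared between $\copycomp{S'}$ and $\phi$), remains a free output, or is deleted — and check that the permutations suppressed in the statements of \cref{def:copycomp} and \cref{lem:copycomp} are compatible on both sides. The cleanest way to manage this is to phrase the wire-routing via the sets $\In{S'}$, $\Out{S'}$, $\In{\phi}$, $\Out{\phi}$ exactly as in \cref{lem:copycomp}, apply the cup/Frobenius rewrite uniformly to the whole bundle of $\Out{S}$-wires at once, and let \cref{lem:compcomp} absorb the reassociation. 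No genuinely new idea is needed beyond the Frobenius interaction between cups, copies, and compares; the content is that this local rewrite is exactly the passage from copy-composition to compare-composition.
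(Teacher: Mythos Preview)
Your argument is correct and matches the paper's approach exactly: induction on $|S|$, with the inductive step handled by the special Frobenius equations~\eqref{eq:hypergraph} (the paper's own proof is literally that one sentence). One small slip in your write-up: $\graph{-}$ bends \emph{inputs} to outputs via cups, not outputs; but your identification of the key rewrite---copies routing $S'$-wires into $\phi$ become compares after bending, then invoke the inductive hypothesis and \cref{lem:compcomp}---is the right mechanism regardless.
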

In the statement we explicitly mention $\syn{\g}$ to point out that we cannot consider the case of CD-categories, where the compare-composition is not defined.
The proof is by induction, and it follows from the special Frobenius equations~\eqref{eq:hypergraph}.

\begin{theorem}\label{thm:moralisation}
	Moralisation gives rise to a functor $\mor{-}\colon \bn \to \mn$ which on objects maps ${(\omega,\g,\tau)}$ to $(\omega,\mor{\g},\tau)$.
\end{theorem}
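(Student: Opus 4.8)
The plan is to verify that the object assignment $(\omega,\g,\tau)\mapsto(\omega,\mor{\g},\tau)$ extends to a functor, and this splits into two independent tasks: (i) checking that the proposed image is a well-defined object of $\mn$, i.e.\ that $\omega$ really does factor through $\syn{\mor{\g}}$ as demanded by \eqref{eq:mn_factorisation}; and (ii) defining moralisation on morphisms and checking functoriality. For (i), the diagram \eqref{eq:factoringdistribution1} is the statement to prove, and by construction every region except the upper-left triangle $\syn{\bullet}\xrightarrow{!_{\mor{\g}}}\syn{\mor{\g}}\xrightarrow{m}\syn{\g}$ versus $\syn{\bullet}\xrightarrow{\tilde{!}_{\g}}\syn{\g}$ commutes trivially; so the heart of the argument is the identity $m\comp !_{\mor{\g}} = \tilde{!}_{\g}$. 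I would unfold both sides on the single generator $\minitikzfig{state}$ of $\syn{\bullet}$: the right-hand side sends it to $\star_{\g}(\copycomp{\Sigma_{\g}}) = \compcomp{\graph{\Sigma_{\g}}}$ by \cref{lem:graphcopycomp} (after \cref{rem:factorising_distributions_DAG}), while the left-hand side sends it first to $\compcomp{\Sigma_{\mor{\g}}}$ (by \cref{rem:factorising_distributions_UGr}) and then applies $m$. Using \cref{lem:compcomp} to pull $m$ inside the compare-composition, and then the defining clauses of $m$ in \eqref{eq:moralisation_functor} — generators of the form $\{v\}\cup\parents{v}$ go to $\graph{f_v}$, and all others go to $\minitikzfig{omni}$ — together with the fact that $\minitikzfig{omni}$ is the unit for $\minitikzfig{compare}$ and hence disappears from the compare-composition, reduces the left side to $\compcomp{\{\graph{f_v}\mid v\in V_{\g}\}} = \compcomp{\graph{\Sigma_{\g}}}$. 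This matches the right side, so the triangle commutes and $(\omega,\mor{\g},\tau)$ is a genuine object of $\mn$.

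For task (ii), given a morphism $(\alpha,\eta,\hat\eta)\colon(\omega,\g,\tau)\to(\omega',\g',\tau')$ in $\bn$, I would send it to $(\alpha,\eta,\hat\eta)$ itself, now regarded as a morphism $(\omega,\mor{\g},\tau)\to(\omega',\mor{\g'},\tau')$ in $\mn$. This is type-correct because, as already noted in \cref{rem:moralisation_and_strong_commutativity}, an order-preserving graph homomorphism $\alpha\colon\g'\to\g$ is automatically an order-preserving graph homomorphism $\mor{\g'}\to\mor{\g}$ (if $v\to w$ or $v,w$ are co-parents in $\g'$, the same holds for their images in $\g$, hence they are adjacent in $\mor{\g}$); and the data $\eta\colon\tau\to\tau'\vcat{\alpha}$, $\hat\eta$, together with the compatibility square \eqref{eq:eta_hateta}, only involve $\vcat{-}$, $\cdsyn{\bullet}$ and $\finstoch$, none of which change under moralisation. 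Since composition in both $\bn$ and $\mn$ is component-wise and identities are $(\id,\id,\id)$, functoriality is then immediate: $\mor{(\alpha,\eta,\hat\eta)}\comp\mor{(\beta,\vartheta,\hat\vartheta)} = (\alpha\beta,\dots) = \mor{(\alpha,\eta,\hat\eta)\comp(\beta,\vartheta,\hat\vartheta)}$, using that $\mor{-}$ on the graph component agrees with ordinary composition of graph homomorphisms.

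I expect the main obstacle to be entirely contained in the commutativity of that upper-left triangle — specifically, the bookkeeping of permutations of outputs that are systematically suppressed in the string-diagram notation of \cref{def:copycomp}, \cref{def:syn_morph} and \eqref{eq:moralisation_functor}. The conceptual content is clean (moralisation on syntax is "replace each conditional box by its graph and forget the directions, filling the remaining cliques with $\minitikzfig{omni}$"), but making \cref{lem:compcomp} and \cref{lem:graphcopycomp} interlock requires care that the induced orderings on $\Out{\Sigma_{\g}}=V_{\g}=\Out{\Sigma_{\mor{\g}}}$ genuinely coincide, so that the two compare-compositions are literally equal and not merely equal up to a permutation. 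Everything else — well-definedness of the image object and functoriality on morphisms — reduces to routine type-checking once the triangle is established.
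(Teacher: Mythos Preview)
Your proposal is correct and follows essentially the same route as the paper: both reduce everything to the commutativity of the upper-left triangle in \eqref{eq:factoringdistribution1}, evaluate both legs on the single generator of $\syn{\bullet}$, and match them via \cref{lem:graphcopycomp} plus the unit law for $\minitikzfig{omni}$; the treatment of morphisms is likewise identical. The one step the paper makes explicit that you leave implicit is the observation that $\{v\}\cup\parents{v}$ is genuinely a clique of $\mor{\g}$ (this is precisely what moralisation buys), so that each $\graph{f_v}$ actually appears in the image of $m$ applied to $\compcomp{\Sigma_{\mor{\g}}}$.
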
 

\begin{proof}
	To ensure that the moralisation gives rise to a functor, we first prove that $\tilde{!}_{\g}\colon \syn{\bullet}\to \syn{\g}$ factors through $m\colon \syn{\mor{\g}}\to \syn{\g}$.
	Recall that $m$, defined in~\eqref{eq:moralisation_functor}, sends $\minitikzfig{clique_morph}$ to the graph $\minitikzfig{graph_parents_morph}$ whenever $C=\lbrace v \rbrace \cup \parents{v}$, and to $\minitikzfig{omni}$ otherwise.
	
	We study the composition $\syn{\bullet}\xrightarrow{!_{\mor{\g}}} \syn{\mor{\g}}\xrightarrow{m} \syn{\g}$. 
	By definition, $!_{\mor{\g}}$ sends the unique generator $I \to \bullet$ to $\compcomp{\Sigma_{\mor{\g}}}$.
	Since the moralisation connects parents, $\lbrace v \rbrace \cup \parents{v}$ is a clique in $\mor{\g}$ for every $v\in V_{\g}$, i.e.\ all $\minitikzfig{graph_parents_morph}$ are in the image of $m$.
	Consequently, $m(\compcomp{\Sigma_{\mor{\g}}})=\compcomp{\graph{\Sigma_{\g}}}$, because the compare morphisms are preserved by hypergraph functors. 
	By \cref{lem:graphcopycomp}, $\compcomp{\graph{\Sigma_{\g}}}= \graph{\copycomp{\Sigma_{\g}}}$.
	Moreover, $\copycomp{\Sigma_{\g}}$ has trivial input, so it coincides with its graph:  $\compcomp{\graph{\Sigma_{\g}}}= \copycomp{\Sigma_{\g}}$.
	On the other side, $\tilde{!}_{\g}\colon \syn{\bullet}\to \syn{\g}$ sends $I\to \bullet$ to $\copycomp{\Sigma_{\g}}$, so the wanted factorisation holds.

	As discussed near \eqref{eq:factoringdistribution1}, this was the only non-straightforward step required to ensure that $(\omega,\g,\tau)\in \bn$ implies $(\omega,\mor{\g},\tau)\in \mn$. In particular, note that the construction leaves the assignments unchanged.
	
	We are left to discuss how the moralisation acts on morphisms. 
	Let $\alpha \colon \g' \to \g$ be an order-preserving graph homomorphism. 
	Then $\alpha$ can be seen as an order-preserving graph homomorphism $\mor{\g'}\to \mor{\g}$. 
	This is easily checked: every edge $v\edge w$ in $\mor{\g'}$ either comes from an edge of $\g'$, so it is obviosly preserved by $\alpha$, or it is given by a certain vertex $u$ such that $v\to u$ and $w\to u$. Since $\alpha$ sends edges to edges, $\alpha(v)$ and $\alpha(w)$ are parents (or coincide with) $\alpha(u)$. 
	This means that we have an edge $\alpha(v)\edge \alpha(w)$ in $\mor{\g}$ (or $\alpha(v)=\alpha(w)$).
	We can then define the moralisation functor on morphisms by sending $(\alpha,\eta,\hat{\eta})\colon (\omega,\g,\tau)\to (\omega',\g',\tau)$ to $(\alpha,\eta,\hat{\eta})\colon (\omega,\mor{\g},\tau)\to (\omega',\mor{\g'},\tau')$.
\end{proof}

\section{Triangulation as a Functor}\label{sec:triangulation}
Our next aim is to define a functor $\mn \to \bn$. This direction, however, is more subtle and it requires a modification of the semantics. The difficulty stems from the fact that moving from $\mat$ to $\finstoch$ is not as immediate as the converse operation.

From a computability perspective, it is sensible to separate this direction in two steps: one that is completely syntactic and another one that is completely semantic. To this aim, we discuss chordal networks.

\begin{definition}
	An \newterm{ordered chordal graph} is an ordered DAG $\g$ such that, for every three vertices $u,v,w$, the following implication holds:
	\begin{equation*}
	u,v\to w\text{ and }u\le v\qquad \implies \qquad u\to v.
	\end{equation*}
\end{definition}
Every ordered chordal graph is a chordal graph (see \cite[Defs.~2.24 and 2.25]{koller2009probabilistic}), but not every chordal graph is ordered --- this distinction depends on the vertex ordering. The notion of ordered chordality may seem ad hoc, but it plays a practical role in computation, as we will see more explicitly when discussing the variable elimination algorithm. Indeed, while this terminology may be nonstandard, such a vertex order is typically fixed to ensure algorithmic correctness.

We also emphasize that chordal graphs are often times considered as undirected. 
However, in our context the directed counterpart is preferable, as at the level of syntax we need less generators. In particular, this nicely connects to cluster graphs and the junction tree algorithm, as we will investigate in~\cref{rem:JTA}.

\begin{definition}\label{def:ubn_cat}
	The \newterm{category of (irredundant) chordal networks} $\cn$ is defined as:
	\begin{itemize}
		\item The objects are triples $(\omega , \g,\tau)$, where $\omega$ is a CD-functor $\cdsyn{\bullet} \to \finstoch$, $\g$ is ordered chordal, and $\tau\colon \vcat{\g}\to \finstoch$ is a symmetric monoidal functor for which there exists $F\colon \syn{\g}\to \mat$ such that the following commutes: 
		\begin{equation*}
		\begin{tikzcd}
			&&\finstoch \ar[drr,"i", bend left=10]& &\\ 
			\cdsyn{\bullet}\ar[urr,"\omega", bend left=10]\ar[rd,"\star"] &&\vcat{\g}\ar[d,hook]\ar[u,"\tau"]&& \finprojstoch\\
			&\syn{\bullet}\ar[r,"\tilde{!}_{\g}"] &\syn{\g}\ar[r,dashed,"F"] & \mat\ar[ur,"q"]& 
		\end{tikzcd}
	\end{equation*}
		\item A morphism $(\omega,\g,\tau)\to (\omega',\g',\tau')$ is a triple $(\alpha,\eta,\hat{\eta})$, where $\alpha$ is a morphism $\alpha\colon \g' \to \g$  in $\odag$, $\eta$ is a monoidal transformation $\tau \to \tau' \vcat{\alpha}$, $\hat{\eta}\colon \omega \to \omega'$ is a distribution-preserving monoidal transformation, and moreover $\eta$ and $\hat{\eta}$ coincide as transformations between functors $\vcat{\bullet}\to \finstoch$; more precisely, the following commutes: 
\begin{equation*}
\begin{tikzcd}[row sep=large]
\vcat{\bullet}\ar[r]\ar[d,hook] & \vcat{\g} \ar[d,bend right=40, "\tau" {name=T,left}]\ar[d,bend left=40, "\tau'\comp \vcat{\alpha}" {name=Tprime,right}]\\
\cdsyn{\bullet}\ar[r,bend right=40,"\omega" {name=W,below},end anchor=205]\ar[r,bend left=40, "\omega'" {name=Wprime,above},end anchor=150] & \finstoch
\arrow[from=W, to=Wprime, Rightarrow, shorten=2mm,"\hat{\eta}"] \arrow[from=T, to=Tprime, Rightarrow, shorten=2mm, "\eta"]
\end{tikzcd}
\end{equation*}
	Composition is component-wise.
	\end{itemize}	
\end{definition}
In other words, we have used the definition of $\mn$, but considered for special DAGs. Note that the following holds without necessarily passing from $\bn$.
\begin{proposition}\label{prop:cmoralisation}
We have a functor $\cmor{-}\colon \cn \to \mn$ simply given by precomposition with $m\colon \syn{\mor{\g}} \to \syn{\g}$, defined as in \eqref{eq:moralisation_functor}.
\end{proposition}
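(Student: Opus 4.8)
The plan is to follow the proof of \cref{thm:moralisation} almost verbatim, the one structural simplification being that the semantics of a chordal network already lives at the level of hypergraph categories (a functor $\syn{\g}\to\mat$), so there is no need to promote a CD-functor $\cdsyn{\g}\to\finstoch$ to a hypergraph functor as was done for $\bn$. On objects I would send $(\omega,\g,\tau)\in\cn$ to $(\omega,\mor{\g},\tau)$. Since $V_{\mor{\g}}=V_{\g}$ we have $\vcat{\mor{\g}}=\vcat{\g}$, so $\omega$ and $\tau$ are literally unchanged and $\mor{\g}$ inherits the order of $\g$; the content is then to check that $(\omega,\mor{\g},\tau)$ satisfies the factorisation \eqref{eq:mn_factorisation}. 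Taking $F\colon\syn{\g}\to\mat$ to be the hypergraph functor witnessing $(\omega,\g,\tau)\in\cn$, I would set $\Phi\coloneqq F\comp m\colon\syn{\mor{\g}}\to\mat$ --- this is exactly the ``precomposition with $m$'' of the statement.

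Next I would verify that this $\Phi$ makes \eqref{eq:mn_factorisation} commute for $\h=\mor{\g}$. The triangle through $\vcat{\mor{\g}}=\vcat{\g}$ is immediate because $m$ is the identity on objects, so $\vcat{\g}\hookrightarrow\syn{\mor{\g}}\xrightarrow{\Phi}\mat$ coincides with $\vcat{\g}\hookrightarrow\syn{\g}\xrightarrow{F}\mat$, and the corresponding triangle for $(\omega,\g,\tau)$ then gives the claim. For the outer part it suffices to show $m\comp !_{\mor{\g}}=\tilde{!}_{\g}$ as hypergraph functors $\syn{\bullet}\to\syn{\g}$: granting this, the lower path of \eqref{eq:mn_factorisation}, namely $\cdsyn{\bullet}\xrightarrow{\star}\syn{\bullet}\xrightarrow{!_{\mor{\g}}}\syn{\mor{\g}}\xrightarrow{\Phi}\mat\xrightarrow{q}\finprojstoch$, rewrites as $\cdsyn{\bullet}\xrightarrow{\star}\syn{\bullet}\xrightarrow{\tilde{!}_{\g}}\syn{\g}\xrightarrow{F}\mat\xrightarrow{q}\finprojstoch$, which equals $i\comp\omega$ by the $\cn$-factorisation of the source. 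And $m\comp !_{\mor{\g}}=\tilde{!}_{\g}$ is precisely the identity already established inside the proof of \cref{thm:moralisation}: $!_{\mor{\g}}$ sends the unique generator to $\compcomp{\Sigma_{\mor{\g}}}$; since $\lbrace v\rbrace\cup\parents{v}$ is a clique of $\mor{\g}$ for every $v$, the functor $m$ carries this to $\compcomp{\graph{\Sigma_{\g}}}$ (the remaining generators being sent to $\minitikzfig{omni}$, units for $\minitikzfig{compare}$), which by \cref{lem:graphcopycomp} equals $\graph{\copycomp{\Sigma_{\g}}}=\copycomp{\Sigma_{\g}}$, the image of the generator under $\tilde{!}_{\g}$. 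Note that chordality of $\g$ is not used anywhere in this step.

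On morphisms I would send $(\alpha,\eta,\hat{\eta})\colon(\omega,\g,\tau)\to(\omega',\g',\tau')$ to $(\alpha,\eta,\hat{\eta})\colon(\omega,\mor{\g},\tau)\to(\omega',\mor{\g'},\tau')$, where $\alpha$ is reinterpreted as an order-preserving graph homomorphism $\mor{\g'}\to\mor{\g}$; this is valid by the same check as in \cref{thm:moralisation}, since every edge of $\mor{\g'}$ arises either from an edge of $\g'$ or from a shared child, both preserved by $\alpha$. As $\vcat{\mor{\g}}=\vcat{\g}$ and $\vcat{\alpha}$ depends only on the vertex map, the compatibility square \eqref{eq:eta_hateta} for the moralised data is literally the one for $(\alpha,\eta,\hat{\eta})$, so we do land in $\mn$. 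Functoriality is then immediate: $\mor{\id_{\g}}=\id_{\mor{\g}}$, $\mor{\alpha\comp\beta}=\mor{\alpha}\comp\mor{\beta}$ (being identities on, resp.\ composites of, vertex maps), and $\eta,\hat{\eta}$ compose componentwise and are untouched by $\cmor{-}$. I do not expect a genuine obstacle: the only thing to be careful about is that restricting the domain to chordal graphs and relocating the semantics to $\syn{-}$ and $\mat$ leaves every step of the $\bn\to\mn$ argument intact, and the single nontrivial ingredient --- the syntactic identity $m\comp !_{\mor{\g}}=\tilde{!}_{\g}$ --- is already available from \cref{thm:moralisation}.
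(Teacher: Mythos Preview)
Your proposal is correct and follows exactly the approach the paper indicates: the paper's proof is simply the one-line remark that this is ``a simple adaptation of the proof of \cref{thm:moralisation}'', and you have spelled out that adaptation faithfully, including the key syntactic identity $m\comp !_{\mor{\g}}=\tilde{!}_{\g}$ and the observation that the semantics already lives at the hypergraph level.
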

The additional $C$ in $\cmor{-}$ is meant to emphasise the chordal variant of the functor. The proof is a simple adaptation of the proof of \cref{thm:moralisation}.
We now aim at defining a converse functor $\ctr{-}\colon \mn \to \cn$.

\begin{definition}
	Let $\h$ be an ordered undirected graph. Its \newterm{triangulation} $\tr{\h}$ is the ordered directed graph where $v \to w$ whenever $v \le w$ and there is a path $v \edge w_1 \edge \dots \edge w_n = w$ with $w_i \geq w$ for each $i=1, \dots, n$.
\end{definition}
In particular, every edge $v\edge w$ in $\h$ yields a directed edge in $\tr{\h}$. The idea is that $\tr{\h}$ only enforces the conditional independencies that we know to hold thanks to $\h$.

By direct check, $\tr{\h}$ is an ordered chordal graph. For clarity, we reserve the notation $\tr{\h}$ when considering $\bn$, and $\ctr{\h}$ when considering $\cn$ (the additional $C$ standing for chordal).
This mirrors the distinction between $\mor{-}\colon \bn \to \mn$ and $\cmor{-}\colon \cn \to \mn$.

As in the case of moralisation, we want to capture triangulation as a functor $\mn \to \cn$. The construction is similar: the key step is defining the functor $t\colon \syn{\ctr{\h}} \to \syn{\h}$ which will act by precomposition on the given syntax category $\syn{\h}$ of the Markov network $\h$. It is defined on the generators of $\syn{\h}$ by the following clause
\begin{equation}\label{eq:triangulation}
	{\tikzfig{parents_morph}}\qquad \longmapsto \qquad 
	{\tikzfig{triangulation_def}}
\end{equation}
where $C_v\coloneqq \lbrace \phi \colon I\to C \mid v\in C\subseteq \parents{v} \cup \lbrace v \rbrace\rbrace \subseteq \Sigma_{\h}$, $P\coloneqq \parents{v} \cap C_v$ and $Q\coloneqq \parents{v} \setminus P$.
Intuitively, the right-hand side compares all the factors involving $v$ and its parents, while $Q$ accounts for the remaining inputs.

\begin{example}\label{ex:misconception_triangulation}
The triangulation of the undirected graph $\h$ of \cref{ex:misconception} is the DAG obtained by making all edges direct and adding an edge $A\to C$ (because of the path $A\edge D \edge C$). 
\[
\tikzfig{undirected_graph_triangulation} \qquad \text{``}\longmapsfrom\text{''} \qquad 	\tikzfig{undirected_graph}
\]
The string diagram representing $\tr{\h}$ in $\syn{\tr{\h}}$ is given below left, with its image under the functor $t\colon \syn{\tr{\h}} \to \syn{\h}$ below right: 
\begin{equation*}
	\scalebox{0.89}{\tikzfig{misconception_triangulation}}
\end{equation*}
The equations in \eqref{eq:comonoids} and \eqref{eq:hypergraph} ensure that the right hand side correspond to the string diagram representing $\syn{\h}$ (see \cref{fig:markov_network_ex}). Now, the factors associated with the original network, defined in~\eqref{tab:misconception_factors}, yield a hypergraph functor $\Phi \colon \syn{\h}\to \mat$.
Postcomposing $t$ with $\Phi$ sets the following $\mat$-semantics for $\syn{\tr{\h}}$-generators: $f_D (D \given AC) \coloneqq \phi_D(D) \phi_{AD}(AD) \phi_{CD}(CD)$, $f_C (C \given AB) \coloneqq \phi_{BC}(BC) \phi_C(C)$, $f_B (B \given A)\coloneqq \phi_B(B) \phi_{AB}(AB)$ and $f_A(A)\coloneqq \phi_A(A)$. (For simplicity, here we use the same notation for the generators in the syntax categories and their image in $\mat$). 
Observe that, by definition, $f_C(C\given AB)$ does not really depend on $A$, but this additional input is forced by the triangulation. 
This plays an important role in the discussion of \cref{sec:VE}, where chordality will be crucial to ensure functoriality (see \cref{ex:misconception_triangulation_2}). 
\end{example}

\begin{theorem}\label{thm:triangulation}
	Triangulation gives rise to a functor $\ctr{-}\colon \mn \to \cn$  which on objects maps $(\omega,\h,\tau)$ to $(\omega,\ctr{\h},\tau)$.
\end{theorem}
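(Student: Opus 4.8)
The plan is to follow the proof of \cref{thm:moralisation}. As there, the substantive point is that the object assignment is well defined, i.e.\ that $(\omega,\h,\tau)\in\mn$ implies $(\omega,\ctr{\h},\tau)\in\cn$; the behaviour on morphisms will then be routine bookkeeping.

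For the object part, one first checks by a path-concatenation argument that $\ctr{\h}$ is an ordered chordal graph: if $u,v\to w$ in $\ctr{\h}$ with $u\le v$, gluing the two witnessing paths of $\h$ at $w$ gives a path $u\edge\cdots\edge w\edge\cdots\edge v$ in $\h$ whose interior vertices are all $\ge w\ge v$, hence $u\to v$. Next, let $\Phi\colon\syn{\h}\to\mat$ be the hypergraph functor witnessing $(\omega,\h,\tau)\in\mn$, and set $F\coloneqq\Phi\circ t\colon\syn{\ctr{\h}}\to\mat$, where $t$ is the hypergraph functor of \eqref{eq:triangulation}. Since $t$ is the identity on objects, $F$ restricts on $\vcat{\ctr{\h}}=\vcat{\h}$ to $\Phi|_{\vcat{\h}}=i\circ\tau$, so $F$ respects the assignment. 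The only remaining point is the distribution condition in \cref{def:ubn_cat}; composing with $\Phi$ and $q$, it reduces to the identity $t\circ\tilde{!}_{\ctr{\h}}=!_{\h}$ of hypergraph functors $\syn{\bullet}\to\syn{\h}$, equivalently $t(\copycomp{\Sigma_{\ctr{\h}}})=\compcomp{\Sigma_{\h}}$.

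To prove the latter, note that $\copycomp{\Sigma_{\ctr{\h}}}$ is a state, so it equals its own graph, and by \cref{lem:graphcopycomp} it equals $\compcomp{\graph{\Sigma_{\ctr{\h}}}}$; applying the hypergraph functor $t$ yields $\compcomp{\{\graph{t(f_v)}\}_{v}}$. Using \eqref{eq:comonoids} and \eqref{eq:hypergraph} one computes that $\graph{t(f_v)}$ is $\compcomp{C_v}$ together with an $\minitikzfig{omni}$ on each wire of $Q$, since the inputs in $Q$ are discarded in \eqref{eq:triangulation} and a discard becomes an $\minitikzfig{omni}$ when bent around. The $\minitikzfig{omni}$'s are absorbed by the monoid unit law as soon as all occurrences of each vertex are compared together, and the sets $\{C_v\}_{v}$ are pairwise disjoint with union $\Sigma_{\h}$: a clique factor $\phi_C$ lies in $C_v$ exactly when $v=\max C$ — for that $v$ every other $w\in C$ satisfies $w\to v$ in $\ctr{\h}$ via the direct edge $w\edge v$, while membership in two distinct $C_v$'s would force a $2$-cycle in the acyclic graph $\ctr{\h}$. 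Hence, by \cref{lem:compcomp}, $\compcomp{\{\graph{t(f_v)}\}_{v}}=\compcomp{\Sigma_{\h}}$, as wanted.

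Finally, for morphisms, an order-preserving graph homomorphism $\alpha\colon\h'\to\h$ is also one $\ctr{\h'}\to\ctr{\h}$, because $\alpha$ preserves edges and the order, hence the path condition defining triangulation; since $\alpha$ acts on the same set of vertices, $\vcat{\alpha}$ is unchanged. We therefore send $(\alpha,\eta,\hat{\eta})\colon(\omega,\h,\tau)\to(\omega',\h',\tau')$ to $(\alpha,\eta,\hat{\eta})\colon(\omega,\ctr{\h},\tau)\to(\omega',\ctr{\h'},\tau')$; the compatibility square \eqref{eq:eta_hateta} and all type-checking carry over verbatim, and functoriality is immediate since everything is componentwise and $\ctr{(-)}$ is the identity on vertices. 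The main obstacle is the computation of the previous paragraph — in particular the disjoint-covering property of the $C_v$'s and the $\minitikzfig{omni}$-bookkeeping through the Frobenius equations.
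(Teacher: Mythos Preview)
Your proof is correct and follows essentially the same route as the paper's: reduce the object part to the identity $t\circ\tilde{!}_{\ctr{\h}}=!_{\h}$, turn $\copycomp{\Sigma_{\ctr{\h}}}$ into $\compcomp{\graph{\Sigma_{\ctr{\h}}}}$ via \cref{lem:graphcopycomp}, push through $t$, handle the $\minitikzfig{omni}$'s on $Q$ via the unit law, and verify that the $C_v$'s partition $\Sigma_{\h}$; the morphism part is identical. Your characterisation ``$\phi_C\in C_v$ iff $v=\max C$'' is a slightly crisper packaging of the paper's separate disjointness and covering arguments, and you additionally spell out the chordality check and the $\tau$-compatibility, which the paper leaves implicit, but none of this constitutes a different approach.
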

\begin{proof}
	We start by showing that $\syn{\bullet}\to\syn{\h}$ factors through $t\colon \syn{\ctr{\h}}\to \syn{\h}$ as defined in~\eqref{eq:triangulation}. 

	By definition, $\tilde{!}_{\ctr{\h}}\colon \syn{\bullet}\to \syn{\ctr{\h}}$ sends the unique generator $I\to\bullet$ to the copy-composition $\copycomp{\Sigma_{\ctr{\h}}}$.
	As we noted in the previous proof, $\copycomp{\Sigma_{\ctr{\h}}}=\graph{\copycomp{\Sigma_{\ctr{\h}}}}$ because it has trivial input. Therefore, by \cref{lem:graphcopycomp}, $\copycomp{\Sigma_{\ctr{\h}}} = \compcomp{\graph{\Sigma_{\ctr{\h}}}}$. 
	The image of this morphism via $\syn{\ctr{\h}}\to \syn{\h}$ is then given by $\compcomp{\bigcup_{v \in V_{\ctr{\h}}}\compcomp{C_v}}$, where $C_v 
	= \lbrace \phi \colon I\to C \mid v\in C\subseteq \parents{v} \cup \lbrace v \rbrace\rbrace \subseteq \Sigma_{\h} $.
	Here, the additional tensoring seemingly needed with $\minitikzfig{omniQ}$ (where $Q$ is defined as in~\eqref{eq:triangulation}) can be disregarded because for every vertex $v$, the fact that $\lbrace v \rbrace\in C_v$ ensures that $v$ is always considered in $\compcomp{\bigcup_{v \in V_{\ctr{\h}}}\compcomp{C_v}}$.
	Moreover, $\compcomp{\bigcup_{v \in V_{\ctr{\h}}}\compcomp{C_v}} = \compcomp{\bigcup_{v \in V_{\ctr{\h}}} C_v}$ by \cref{lem:compcomp} because the sets $C_v$ are all disjoint: if a clique $D$ belongs to $C_v\cap C_w$, then $v,w \in D$ and both of them are parents of each other, which contradicts the fact that $\ctr{\h}$ is a DAG.

	We claim that $\bigcup_{v \in V_{\ctr{\h}}}{C_v}=\Sigma_{\h}$, where $\subseteq$ holds by definition. 
	Let us write $\bigcup_{v \in V_{\ctr{\h}}}{C_v}$ more explicitly:
	\[
		\bigcup_{v \in V_{\ctr{\h}}}{C_v}= \left\lbrace \phi\colon I \to C \given \exists\, v\in V_{\ctr{\h}} \text{ such that }v\in C\subseteq \parents{v}\cup\lbrace v\rbrace \right\rbrace.
	\]
	Now, given any clique $C\in \clique{\h}$, let us consider the biggest element $v\in C$. 
	Then for any element $w \in C$, there must exist an edge $w \edge v$ because $C$ is a clique.
	By assumption, $w\le v$, and therefore $w\to v$ in $\ctr{\h}$. 
	We conclude that $v \in C\subseteq \parents{v}\cup\lbrace v\rbrace$, so $\phi \colon I \to C$ belongs to $C_v$.  
	As claimed, $\bigcup_{v \in V_{\ctr{\h}}}{C_v}=\Sigma_{\h}$, and therefore 
	$\syn{\bullet}\to \syn{\ctr{\h}}\to \syn{\h}$ sends $I\to \bullet$ to $\compcomp{\Sigma_{\h}}$.
	Since $!_{\h}\colon \syn{\bullet}\to \syn{\h}$ is defined by the same assignment, we have shown that $!_{\h}$ indeed factorises through $t$.
	Finally, any $(\omega,\h,\tau)\in \mn$ leads to $(\omega, \ctr{\h},\tau)\in \cn$ in the following way:
	\[
	\begin{tikzcd}[column sep=large]
		\cdsyn{\bullet}\ar[r,"\tilde{!}_{\ctr{\h}}"]\ar[rr,bend left=20,start anchor=60, "!_{\h}"]\ar[d,"\omega"]&\syn{\ctr{\h}} \ar[r,"t"]\ar[rd,dashed]&  \syn{\h}\ar[d,"\Phi", dashed]\\
		\finstoch\ar[r,"i"] & \finprojstoch & \mat\ar[l,"q" above]
	\end{tikzcd}
	\]
	As for moralisation, the construction above does not affect the assignment $\tau$, which is therefore preserved.

	Regarding morphisms, $(\alpha,\eta,\hat{\eta}) \colon (\omega,\h,\tau)\to (\omega',\h',\tau')$ can be reinterpreted in the new type $(\omega,\ctr{\h},\tau)\to (\omega',\ctr{\h'},\tau')$. 
	The only nontrivial part is that $\alpha$ is indeed an ordered graph homorphism $\ctr{\h'}\to \ctr{\h}$. This follows by definition: whenever $v\to w$ in $\tr{\h'}$, then there is a path $v\edge w_1 \edge \dots \edge w_n =w$ in $\h'$, with $w_i \ge w$. Applying $\alpha \colon \h'\to \h$, we obtain a path $\alpha(v)\edge \alpha(w_1) \edge \dots \edge \alpha(w_n)=\alpha(w)$ with $\alpha(w_i)\ge \alpha(w)$ (as edges may contract, this path is possibly shorter than the one in $\h'$). Therefore, $\alpha(v)\to \alpha(w)$, as wanted.
\end{proof}

\begin{proposition}\label{prop:chordalcmor_id}
	The composition $\ctr{\cmor{-}}\colon \cn \to \cn$ coincides with the identity endofunctor $\id_{\cn}$.
\end{proposition}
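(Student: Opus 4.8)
The plan is to reduce the statement to a single purely combinatorial identity:
\[
\tr{\mor{\g}} = \g \qquad\text{for every ordered chordal graph }\g.
\]
This suffices because an object of $\cn$ is just a triple $(\omega,\g,\tau)$ (the witnessing semantic functor is not part of the data), so by \cref{prop:cmoralisation} and the object-actions of \cref{thm:moralisation,thm:triangulation} one has $\ctr{\cmor{(\omega,\g,\tau)}} = (\omega,\tr{\mor{\g}},\tau)$; and since moralisation and triangulation leave the ordered vertex set untouched, $\vcat{\mor{\g}}=\vcat{\g}=\vcat{\tr{\mor{\g}}}$, so $\tau$ is transported unchanged. On a morphism $(\alpha,\eta,\hat\eta)$ the two functors only re-type $\alpha$ along the induced graph homomorphisms $\mor{\g'}\to\mor{\g}$ and then $\tr{\mor{\g'}}\to\tr{\mor{\g}}$ (exactly as in the proof of \cref{thm:moralisation}), while $\eta$ and $\hat\eta$ stay put; once $\tr{\mor{\g}}=\g$ and $\tr{\mor{\g'}}=\g'$ as ordered graphs, this re-typed $\alpha$ is literally the original $\alpha\colon\g'\to\g$. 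Well-definedness of the composite as an object/morphism of $\cn$ is inherited from \cref{prop:cmoralisation,thm:triangulation}.

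To prove the identity I would proceed in three steps. First, $\mor{\g}$ is nothing but the underlying undirected graph $U(\g)$ of $\g$: the inclusion $U(\g)\subseteq\mor{\g}$ is immediate from \cref{def:moralisation}, while conversely a ``moral'' edge $v\edge w$ arising from distinct common parents $v,w$ of some vertex $u$ (say $v<w$) is already present, because ordered chordality applied to $v,w\to u$ with $v\le w$ forces $v\to w$ in $\g$. Second, $\g\subseteq\tr{\mor{\g}}$: an edge $v\to w$ of $\g$ satisfies $v<w$ (topological order), and $v\edge w$ in $U(\g)=\mor{\g}$ is a length-one path with terminal vertex $w\ge w$, so $v\to w$ lies in $\tr{\mor{\g}}$.

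The reverse inclusion $\tr{\mor{\g}}\subseteq\g$ is where the real work sits, and I expect it to be the main obstacle: it asserts that triangulating the moralisation introduces no edge absent from $\g$. Given $v\to w$ in $\tr{\mor{\g}}$ with $v\ne w$ (hence $v<w$), fix a walk $v=w_0\edge w_1\edge\cdots\edge w_n=w$ in $U(\g)$ with $w_i\ge w$ for all $i\ge1$, and induct on $n$. The case $n=1$ is the orientation argument used above. For $n\ge2$, pick a vertex $w_j$ of maximal order on the walk; it cannot be the strict minimum $w_0=v$, nor can it be $w_n=w$ (otherwise every $w_i=w$, forcing $n=1$), so $1\le j\le n-1$. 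Then $w_{j-1}$ and $w_{j+1}$ are neighbours of $w_j$ strictly below it in the order, so $w_{j-1}\to w_j$ and $w_{j+1}\to w_j$ in $\g$; ordered chordality applied to whichever of $w_{j-1},w_{j+1}$ is smaller produces an edge $w_{j-1}\edge w_{j+1}$ in $U(\g)$. Excising $w_j$ and inserting this chord yields a walk from $v$ to $w$ in $U(\g)$ of length $<n$ whose non-initial vertices are still all $\ge w$, and the induction hypothesis gives $v\to w$ in $\g$.

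Combining the three steps gives $\tr{\mor{\g}}=\g$ for every ordered chordal $\g$, and then the first paragraph delivers $\ctr{\cmor{-}}=\id_{\cn}$. The only slightly delicate points are verifying that the maximal vertex $w_j$ lands strictly inside the walk (i.e.\ $j\notin\{0,n\}$) and that the contracted walk still respects the ``$\ge w$'' constraint; everything else is bookkeeping along the identifications of graphs and of the data $(\omega,\tau,\eta,\hat\eta)$.
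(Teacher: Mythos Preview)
Your proof is correct and follows essentially the same approach as the paper's: both reduce the statement to the combinatorial identity $\tr{\mor{\g}}=\g$ for ordered chordal $\g$, first observing that moralisation adds no edges (by applying ordered chordality directly to any pair of common parents), and then showing that triangulation adds none either via induction on path length, picking the maximal vertex on the path and using ordered chordality to insert a shortcut chord. Your treatment of why the maximal index lies strictly in the interior is a touch more explicit than the paper's, but otherwise the two arguments coincide.
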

In particular, starting with a Bayesian network, we can apply the functor $\ctr{\mor{-}}$ to obtain its associated chordal representative, and this representative remains fixed under moralisation and triangulation. See \cref{sec:interplay} for a complete picture of the functorial interplay.
\begin{proof}
	The only thing to check is that the graphs are preserved by the endofunctor. 
	To see why that is the case, take an ordered chordal graph $\g$. 
	By definition, $\cmor{\g}$ is not adding additional edges: Indeed, whenever $u,v \to w$, i.e.\ whenever two edges are parents of the same vertex, then either $u\le v$, and thus $u\to v$, or $v \le u$, and thus $v\to u$.

	We are now reduced to show that $\ctr{\cmor{\g}}$ also does not add any edge to $\cmor{\g}$. 
	By definition, $v \to w$ in $\ctr{\cmor{\g}}$ holds if and only if $v \edge w_1 \edge \dots \edge w_n \edge w$ with $w_i \ge w\ge v$.
	We proceed by induction on the length $n$ of the path, noting that the base case $n=0$ is immediate.
	For the induction step, since both $v$ and $w$ are smaller than every $w_i$, the maximum element of the path must be some $w_k$. Therefore, $w_{k-1} \edge w_k \edge w_{k+1}$ (where $w_{k-1}=v$ if $k=1$ and $w_{k+1}=w$ if $k=n$) implies that $w_{k-1},w_{k+1}\to w_{k}$ in $\g$ because $\cmor{\g}$ does not add new edges. Since $\g$ is chordal, we have either $w_{k-1}\to w_{k+1}$ or $w_{k+1}\to w_{k-1}$, and both leads to an edge $w_{k-1}\edge w_{k+1}$ in $\cmor{\g}$. The path can therefore be shorten by one element, and by induction hypothesis, we conclude that $v \edge w$ in $\cmor{\g}$, as wanted.

	We therefore conclude $v \to w$ in $\ctr{\cmor{\g}}$ if and only if $v \edge w$ in $\cmor{\g}$ with $v\le w$, and this condition holds if and only if $v\to w$ in $\g$.
	As wanted, $\ctr{\cmor{\g}}=\g$.
\end{proof}

\begin{remark}[Cluster Graphs and the Junction Tree Algorithm]\label{rem:JTA}
	In this remark, we briefly discuss how cluster graphs and the junction tree algorithm can be considered in the categorical formalism developed in this paper.
	A standard example for a \emph{cluster graph}\footnote{As the precise notion is not of primary interest here, we defer its formal definition to \cite[Def.~10.1]{koller2009probabilistic}.} associated to an ordered chordal graph $\g$ can be defined as follows: 
	\begin{itemize}
		\item Vertices are $C_v = \parents{v}\cup \lbrace v \rbrace$, called \emph{clusters}.
		\item $C_v \edge C_w$ if $C_w\cap C_v \neq \emptyset$, and in that case they are labelled by $C_w \cap C_v$.
	\end{itemize}
	Then the junction tree algorithm consists in choosing a spanning tree $\mathcal{T}$ in the cluster graph that satisfies the \emph{running intersection property}: if $v \in V_{\g}$ belongs to two clusters $C$ and $D$, then it also belongs to every cluster in the (unique) path in $\mathcal{T}$ that connects $C$ and $D$. 
	This property can be interpreted from a categorical perspective as requiring the tree to connect each occurrence of the same vertex, as it is standard when considering $\minitikzfig{copy}$ and $\minitikzfig{compare}$.
	To avoid redundancy, the tree is typically simplified to include only maximal cliques. This is sensible, since chordal graphs allow efficient maximal clique computation, unlike general graphs where the problem is NP-hard~\cite[Sec.~10.4.2]{koller2009probabilistic}.
	\begin{figure}[!t]
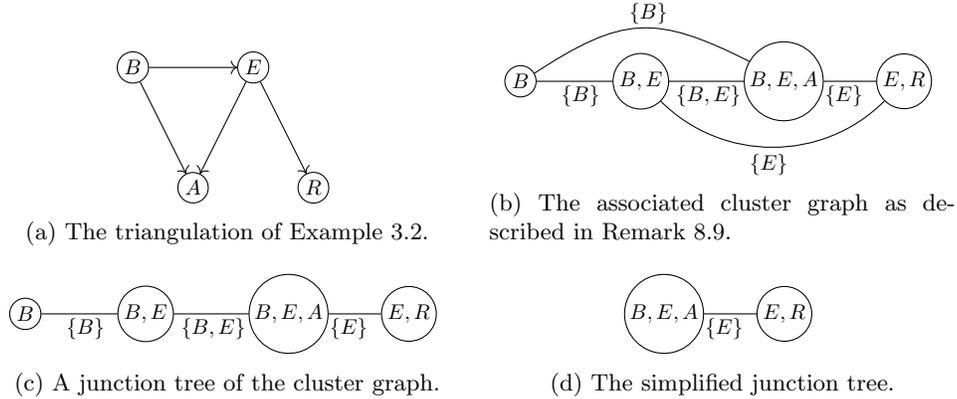
\centering
	\begin{subfigure}[b]{0.4\textwidth}
		\centering
	{\tikzfig{BEAR_triangulation}}
	\subcaption{The triangulation of \cref{ex:BEAR}.} 
	\end{subfigure}
	\hspace{1ex}
	\begin{subfigure}[b]{0.4\textwidth}
		\centering
	{\tikzfig{BEAR_cluster}}
	\subcaption{The associated cluster graph as described in \cref{rem:JTA}.} 
	\end{subfigure}
	\par \vspace{2ex}
	\begin{subfigure}[b]{0.4\textwidth}
		\centering
	{\tikzfig{BEAR_tree}}
	\subcaption{A junction tree of the cluster graph.} 
	\end{subfigure}
		\hspace{1ex}
	\begin{subfigure}[b]{0.4\textwidth}
		\centering
	{\tikzfig{BEAR_treesimple}}
	\subcaption{The simplified junction tree.} 
	\end{subfigure}
		\caption{A visual example of cluster graphs and junction trees.}\label{fig:clusterBEAR}
	\end{figure}
	
	For a better understanding, we provide a visual example in \cref{fig:clusterBEAR}, where we use the triangulation of the BEAR Bayesian network of \cref{ex:BEAR}. 
	It is worth noting that all the graphical representations are not changing the string diagrammatic situation except for the last one, where the cluster graph $\lbrace B,E,A\rbrace$ is interpreted as the string diagram
	\[
	\tikzfig{BEAR_JTsimple}
	\] 
	This is a direct consequence of merging the two cliques $\lbrace B \rbrace$ and $\lbrace B,E \rbrace$ into $\lbrace B,E,A \rbrace$. 

	To understand the key differences between the chordalization, the cluster graph, and the junction tree, we need to consider how the latter two are used; namely, in message passing algorithms. Through this lens, also edges give rise to a generator of the syntax. Explicitly, the junction tree in \cref{fig:clusterBEAR} will have the following generators, where we used the notation of \cite[Sec.~10.3]{koller2009probabilistic}:
	\[
	\tikzfig{BEAR_tree_gen}
	\]
	(The first four generators are associated to clusters, while the last three describe messages between the clusters).
	In particular, we have generators of the same type, in contrast with the syntaxes studied in this paper.
	Although an algebraic approach to message passing is a potentially interesting direction, giving a proper account of the algorithm would require the ability to perform arithmetic division --- an operation not provided by the categorical framework. Additionally, this issue appears to be semantic in nature, at least from the perspective adopted here.
	For these reasons, we leave a possible formal treatment to future work.
\end{remark}

\section{Variable Elimination as a Functor}\label{sec:VE}
We now aim to describe a functor $\cn\to \bn$, which is strongly connected to Variable Elimination, and as such we will call it $\ve{-}$.
Contrary to $\mor{-}$, $\cmor{-}$ and $\ctr{-}$, this functorial description is totally semantic.
When we compose $\ve{-}$ with $\ctr{-}$, we obtain what is known as \emph{triangulation}, which is a process that turns a Markov network into a Bayesian one.

\begin{theorem}[Functorial Variable Elimination]\label{thm:triangulation_finstoch}
	Let $\g$ be an ordered chordal graph.
	Let $\cC$ be a hypergraph category with total conditionals and a normalisation cospan $\cC_{\tot} \xrightarrow{i} \cD \xleftarrow{q} \cC$ (\cref{set:conditionals}).
	If the following diagram 
	\[
	\begin{tikzcd}
		\cdsyn{\bullet}\ar[r,"!_{\g}"]\ar[d,"\omega"]&\cdsyn{\g} \ar[r,"\star_{\g}"]& \syn{\g}\ar[d,"F"]\\
		\cC_{\tot} \ar[r,"i"] & \cD & \cC \ar[l,"q" above]
	\end{tikzcd}
	\]
	commutes, where $F$ is a hypergraph functor, then there exists a factorisation $\cdsyn{\bullet}\xrightarrow{!_{\g}} \cdsyn{\g} \xrightarrow{G} \cC_{\tot}$ of $\omega$ for some CD-functor $G$.

	In particular, we obtain a functor $\ve{-}\colon \cn \to \bn$ which acts as the identity on both objects and morphisms.
\end{theorem}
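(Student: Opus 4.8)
The plan is to prove the abstract statement — existence of the CD-functor $G$ — and then read off the functor $\ve{-}$ almost for free. On objects, an element $(\omega,\g,\tau)$ of $\cn$ carries exactly the data of an object of $\bn$, except that the required witness is a hypergraph functor $F\colon\syn{\g}\to\mat$ rather than a CD-functor $\cdsyn{\g}\to\finstoch$; producing the latter from the former, with $G(v)=F(v)$ on vertices so that the constraint $\tau=F|_{\vcat{\g}}$ is still satisfied, is the whole content. On morphisms there is nothing to do: a morphism $(\alpha,\eta,\hat\eta)$ of $\cn$ already \emph{is} a morphism of $\bn$, since an ordered chordal graph is in particular an ordered DAG and the coherence conditions on $\eta,\hat\eta$ in \cref{def:ubn_cat} are verbatim those of \cref{def:bn_cat}; as composition is component-wise in both categories, $\ve{-}$ defined to be the identity on underlying data is automatically a functor once the object part is settled.

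For the object part I would first observe that the hypothesis pins down $\omega$. Write $\Omega\coloneqq F(\copycomp{\Sigma_{\g}})$, a state since $!_{\g}$ sends $\minitikzfig{state}$ to $\copycomp{\Sigma_{\g}}$ and $\star_{\g}$ is the identity on it; commutativity of the diagram says $q(\Omega)=q(\omega)$ in $\cD$. Factoring $\Omega=(\del\comp\Omega)\otimes\nu$ with $\nu$ a total normalisation (available by \cref{set:conditionals}), and using that $q$ preserves $\del$ while $\omega$ is total, applying $\del$ to $q(\Omega)=q(\omega)$ gives $q(\del\comp\Omega)=\id_I$; hence $q(\nu)=q(\Omega)=q(\omega)$, and faithfulness of $\cC_{\tot}\hookrightarrow\cC\xrightarrow{q}\cD$ forces $\nu=\omega$. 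It therefore suffices to prove the following self-contained claim: for every ordered chordal $\g$ and every hypergraph functor $F\colon\syn{\g}\to\cC$ there is a CD-functor $G\colon\cdsyn{\g}\to\cC_{\tot}$ with $G(v)=F(v)$ on vertices and $G(\copycomp{\Sigma_{\g}})$ a total normalisation of $F(\copycomp{\Sigma_{\g}})$.

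I would prove this claim by induction on $|V_{\g}|$ — this is precisely Variable Elimination in the given vertex order. Let $v$ be the largest vertex; since $w\in\parents{w'}$ forces $w<w'$, the vertex $v$ occurs only as the output of the single generator $f_v$, so by \cref{lem:copycomp} the morphism $\copycomp{\Sigma_{\g}}$ is the copy-composition of $\Sigma_{\g'}\coloneqq\Sigma_{\g}\setminus\{f_v\}$ with $f_v$ plugged into a copy of its parents, where $\g'\coloneqq\g\setminus\{v\}$ is again ordered chordal with unchanged parent sets. Put $g_v\coloneqq\del_v\comp F(f_v)$, a factor over $\parents{v}$. Chordality enters here: with $u\coloneqq\max\parents{v}$ one has $\parents{v}\subseteq\{u\}\cup\parents{u}$, so $g_v$ involves only objects already touched by the generator $f_u$, and we may define $F'\colon\syn{\g'}\to\cC$ to agree with $F$ except that $F'(f_u)$ is $F(f_u)$ with $g_v$ multiplied in, formed using the copy and delete maps of $\cC$. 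A direct diagrammatic check — copy-composition identifies all occurrences of each variable, cf.~\cref{rem:semantics_copycomp} — then gives $F'(\copycomp{\Sigma_{\g'}})=\del_v\comp F(\copycomp{\Sigma_{\g}})$. (When $\parents{v}=\emptyset$ the factor $g_v$ is a scalar and $F'\coloneqq F|_{\syn{\g'}}$ already works.) The induction hypothesis supplies $G'\colon\cdsyn{\g'}\to\cC_{\tot}$ with $G'(w)=F(w)$ and $G'(\copycomp{\Sigma_{\g'}})$ a total normalisation of $F'(\copycomp{\Sigma_{\g'}})=\del_v\comp F(\copycomp{\Sigma_{\g}})$. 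Now set $G(w)\coloneqq F(w)$, $G(f_w)\coloneqq G'(f_w)$ for $w\neq v$, and $G(f_v)\coloneqq\normalisation{F(f_v)}$, a total conditional provided by \cref{set:conditionals}; freeness of $\cdsyn{\g}$ makes $G$ a CD-functor into $\cC_{\tot}$ restricting to $G'$ on $\cdsyn{\g'}$. Unwinding $\copycomp{\Sigma_{\g}}$ through \cref{lem:copycomp} and substituting for $F(f_v)$ its normalisation factorisation (\cref{def:conditionals}), $F(\copycomp{\Sigma_{\g}})$ becomes $\del_v\comp F(\copycomp{\Sigma_{\g}})$ followed by the conditional $\normalisation{F(f_v)}$ on a copy of $\parents{v}$; replacing $\del_v\comp F(\copycomp{\Sigma_{\g}})$ by its normalisation $G'(\copycomp{\Sigma_{\g'}})$ only rescales by a scalar, so $G(\copycomp{\Sigma_{\g}})$ is total and is a normalisation of $F(\copycomp{\Sigma_{\g}})$, as required. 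The base case $|V_{\g}|\le 1$ is immediate. Finally, applying the claim to the given $F$ and invoking $\nu=\omega$ from the previous paragraph yields $G$ with $G\comp !_{\g}=\omega$ and $G(v)=F(v)=\tau(v)$, so $(\omega,\g,\tau)$ is an object of $\bn$.

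I expect the main obstacle to be the bookkeeping of the inductive step: making precise that chordality lets the residual factor $g_v$ of the eliminated variable be absorbed into exactly one pre-existing generator $f_u$, and verifying the two string-diagram identities $F'(\copycomp{\Sigma_{\g'}})=\del_v\comp F(\copycomp{\Sigma_{\g}})$ and $G(\copycomp{\Sigma_{\g}})=\normalisation{F(\copycomp{\Sigma_{\g}})}$ via \cref{lem:copycomp}. A subsidiary nuisance is the non-uniqueness of normalisations away from the support, which is harmless here: $q(\Omega)=q(\omega)$ forces $\Omega\neq 0$, and we only ever use the defining relation $f=(\del f)\otimes\normalisation f$, never a canonical choice.
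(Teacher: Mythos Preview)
Your proposal is correct and follows essentially the same approach as the paper: induction on $|V_{\g}|$, elimination of the maximal vertex $v$, use of ordered chordality to absorb the residual effect $\del_v\comp F(f_v)$ into the generator at $u=\max\parents{v}$, and replacement of $F(f_v)$ by a total normalisation. The only organisational difference is that you isolate the faithfulness argument (any total normalisation of $\Omega$ equals $\omega$) at the outset and then run the induction purely on $F$, whereas the paper carries $\omega$ through the induction and invokes faithfulness of $i$ at the very end; this is a cosmetic rearrangement of the same ingredients. One small slip: in the case $\parents{v}=\emptyset$ you claim $F'\coloneqq F|_{\syn{\g'}}$ already satisfies $F'(\copycomp{\Sigma_{\g'}})=\del_v\comp F(\copycomp{\Sigma_{\g}})$, but the right-hand side carries the extra scalar $\del_v\comp F(f_v)$, so you should absorb that scalar into some generator of $\g'$ (or note that scalar multiples share total normalisations); the paper's proof glosses over this corner case as well.
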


\begin{proof}
	The statement can be translated as follows: Let $\g$ be an ordered chordal graph and consider a state $\omega \colon I \to V_{\g}$ and a factorisation $i(\omega)=q(\copycomp{\lbrace f_v\rbrace })$, where $f_v= F(\minitikzfig{parents_morph})$ belongs to $\cC$. Then $\omega$ admits a factorisation $\omega = \copycomp{\lbrace g_v \rbrace}$ where $g_v \in \cC_{\tot}$. 

	We proceed by induction. Let us assume that the statement holds true for all graphs with $n-1$ vertices, and prove it for graphs with $n$ vertices.
	
	Let us take the biggest element $v\in V_{\g}$. Consider $\lambda_v \coloneqq \minitikzfig{delfv}$ and $g_v \coloneqq \minitikzfig{norm_fv}\in \cC_{\tot}$. We note that, by definition,
	\[
	\tikzfig{fv}\quad =\quad \tikzfig{normalisation_ve}
	\] 
	By assumption, if we take the biggest parent $w$ of $v$, then $\parents{v}\setminus\lbrace w \rbrace \subseteq \parents{w}$.
	In particular, we can define
	\begin{gather*}
		f'_w \coloneqq \, \tikzfig{fprime_w}\, ,\qquad \text{and} \\
		f'_u\coloneqq f_u \qquad \text{for all }u\in V_{\g}\setminus \lbrace v,w\rbrace
	\end{gather*}
	Let $\g'$ be the graph obtained from $\g$ by deleting $v$. From the factorisation $i(\omega)= q(\copycomp{\lbrace f'_u \rbrace \cup \lbrace g_v \rbrace })$, and a direct check shows that $q(\copycomp{\lbrace f'_u \rbrace})$ is equal to the marginalization $i(\omega')$ of $i(\omega)$.
	Since $\g'$ is ordered chordal because $\g$ is, we can apply the induction hypothesis to show that $\omega'$ can be factorised in $\cC_{\tot}$.
	Since $g_v\in \cC_{\tot}$, the statement is also true for $\omega$ because $i(\omega)= q(\copycomp{\lbrace g_u\rbrace }) = i (\copycomp{\lbrace g_u\rbrace })$ and $i$ is faithful.

	The last sentence now follow because we have ensured that any $(\omega,\g,\tau)\in \cn$ actually allows for a factorisation through $\finstoch$, i.e.\ $(\omega,\g,\tau) \in \bn$.
\end{proof}

\begin{remark}\label{rem:v-structures}
	The trivial definition of $\ve{-}$ may lead one to question why the restriction to ordered chordal graphs is necessary in \cref{thm:triangulation_finstoch}. 
	Indeed, although the proof relies heavily on chordality, this alone does not rule out the possibility that the theorem could extend to all ordered DAGs.
	To address this, we provide a counterexample demonstrating that such a generalization does not hold.
	
	Let $\g$ be the DAG given by $\minitikzfig{v-structure}$.
	Any Bayesian network over $\g$ makes $A$ and $B$ independent of each other once we discard $C$, i.e.\ $\minitikzfig{phi_noC}=\minitikzfig{indep}$ for any $\minitikzfig{phi_ABC}$ associated to a Bayesian network over $\g$.
	We claim that this independence is not necessarily true for a hypergraph functor $\syn{\g}\to \mat$, thus the restriction to ordered chordal graphs is sensible.
	To this end, consider $A=B=C=\lbrace 0,1 \rbrace$, and let $\minitikzfig{phi_ABC}$ be the morphism defined by $\phi(a,b,c)=1$ if exactly two of them are equal and 0 otherwise.
	We then define $\Phi \colon \syn{\g}\to \mat$ by sending the generators $I\to A$ and $I\to B$ to $\minitikzfig{omni}$ and $A\tensor B \to C$ to \minitikzfig{phiC.AB}. 
	In this way, $\phi$ corresponds to the distribution $\cdsyn{\bullet}\rightarrow \syn{\g}\xrightarrow{\Phi} \mat$.
	By direct computations, $\minitikzfig{phi_noC}\neq \minitikzfig{indep}$, so indeed $A$ and $B$ share some dependence even when $C$ is discarded.
\end{remark}

\begin{remark}[Variable Elimination]\label{rem:ve}
	The proof of \cref{thm:triangulation_finstoch} mirrors the variable elimination algorithm, whose goal is to compute the marginal of the smallest element. More precisely, we derive a simplified version of the standard algorithm by restricting it to ordered chordal graphs (see \cref{algo:ve}). The key difference from the standard algorithm for general Markov networks is that the procedure \textsc{Sum-Product-Eliminate-Var} in~\cite[Alg.~9.1]{koller2009probabilistic} --- which marginalises the selected variable $\tau(v)$ after performing a compare-composition of all factors having $v$ as output --- can be captured as a combination of \textsc{Marg-Var} and $\ctr{-}$. 
\begin{algorithm}[!t]
	\caption{Variable Elimination for Ordered Chordal Graphs}\label{algo:ve}
		\hrulefill

		\noindent \textbf{Procedure} \textsc{VE} (

		\hspace{2ex} $\g$ $\backslash\backslash$ Ordered chordal graph

		\hspace{2ex} $F$ $\backslash\backslash$ Family of morphisms given by $\syn{\g}\to \mat$

		)
		
		\hspace{2ex} Let $v_1, \dots, v_n$ be an ordering of $V_{\g}$ such that

		\hspace{4ex} $v_i < v_j$ if and only if $i<j$

		\hspace{2ex} \textbf{for} $i=n,n-1,\dots, 1$

		\hspace{4ex} ${F}\leftarrow $ \textsc{Marg-Var}($F$,$v_i$)

		\hspace{2ex} \textbf{endfor} 
		
		\hspace{2ex} $\omega' \leftarrow \copycomp{F}$

		\hspace{2ex} \textbf{Return} $\omega'$

		\textbf{endProcedure}
		\par\vspace{2ex}
		\textbf{Procedure} \textsc{Marg-Var} (
			
		\hspace{2ex} $F$ $\backslash\backslash$ Family of morphisms given by $\syn{\g}\to \mat$
		
		\hspace{2ex} $v$ $\backslash\backslash$ Vertex corresponding to the variable to be marginalised
		
		)
		
		\hspace{2ex} $\lambda_v \leftarrow \tikzfig{delfv}$
		
		\hspace{2ex} $\tilde{F} \leftarrow F \setminus \lbrace f_{v}\rbrace$

		\hspace{2ex} $w \leftarrow \max \lbrace u \in \In{f_v} \rbrace$

		\hspace{2ex} $\tilde{F}' \leftarrow \tilde{F} \setminus \lbrace f_w \rbrace$

		\hspace{2ex} $f'_w \leftarrow \tikzfig{fprime_w}$

		\hspace{2ex} \textbf{Return} $\tilde{F}'\cup \lbrace f'_w \rbrace$

		\textbf{endProcedure}

\end{algorithm}
We also note that the validity of \cref{algo:ve} follows directly from the proof of \cref{thm:triangulation_finstoch}, with the only difference being that the algorithm does not require defining $g_v$.
\end{remark}

\begin{example}\label{ex:misconception_triangulation_2}
	By applying the proof of \cref{thm:triangulation_finstoch} to \cref{ex:misconception_triangulation}, and recalling the values of the factors given in \cref{ex:misconception}, we obtain the following
\begin{equation*}
{\begin{tabular}{|c|c|c|}
\hline
$A$ & $C$ & $g_D(d\given AC)$\\
\hline \hline
$a$ & $c$ & 0.5\\
\hline
$a$&$\no{c}$& 0.9999\\
\hline
$\no{a}$&$c$& 0.0001\\
\hline
$\no{a}$&$\no{c}$&0.5\\
\hline
\end{tabular}}
\hspace{0.3cm}
{\begin{tabular}{|c|c|c|}
\hline
$A$ & $B$ & $g_C(c\given AB)$\\
\hline \hline
$a$ & $b$ & 0.6666\\
\hline
$a$&$\no{b}$& 0.0002\\
\hline
$\no{a}$&$b$& 0.9998\\
\hline
$\no{a}$&$\no{b}$&0.3334\\
\hline
\end{tabular}}
\hspace{0.3cm}
{\begin{tabular}{|c|c|}
\hline
$A$ & $g_B(b\given A)$\\
\hline \hline
$a$ & 0.2307 \\
\hline
$\no{a}$& 0.8475\\
\hline
\end{tabular}}
\hspace{0.3cm}
{\begin{tabular}{|c|}
\hline
$g_A(a)$\\
\hline \hline
0.1806 \\
\hline
\end{tabular}}
\end{equation*}
In particular, we observe that the values of $A$ do influence $c$, reinforcing the point made earlier in \cref{rem:v-structures}.
\end{example}

\begin{proposition}
	$\ve{-}\colon \cn \to \bn$ is fully faithful.
\end{proposition}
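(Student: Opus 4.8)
The plan is to exploit the fact, recorded in \cref{thm:triangulation_finstoch}, that $\ve{-}$ acts as the identity on both objects and morphisms. Faithfulness is then immediate: for any two parallel morphisms $f,f'$ of $\cn$ we have $\ve{f}=f$ and $\ve{f'}=f'$, so $\ve{f}=\ve{f'}$ forces $f=f'$. Equivalently, $\ve{-}$ restricts to the identity injection on each hom-set, so there is nothing further to check on this side.

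For fullness, I would fix objects $X=(\omega,\g,\tau)$ and $X'=(\omega',\g',\tau')$ of $\cn$ together with an arbitrary morphism $(\alpha,\eta,\hat\eta)\colon \ve{X}\to\ve{X'}$ of $\bn$, and then unwind \cref{def:bn_cat}: this datum consists of an $\odag$-morphism $\alpha\colon \g'\to\g$, a monoidal transformation $\eta\colon \tau\to\tau'\comp\vcat{\alpha}$, a distribution-preserving monoidal transformation $\hat\eta\colon\omega\to\omega'$, subject to the compatibility square \eqref{eq:eta_hateta}. Comparing with \cref{def:ubn_cat}, these are verbatim the defining conditions for $(\alpha,\eta,\hat\eta)$ to be a morphism $X\to X'$ of $\cn$: the hom-set descriptions in the two definitions coincide word for word, the sole difference between $\bn$ and $\cn$ being a restriction on the nature of the \emph{objects} (ordered chordal graphs versus arbitrary ordered DAGs), a restriction already satisfied here because $X,X'$ lie in $\cn$. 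Hence $(\alpha,\eta,\hat\eta)$ is a $\cn$-morphism $X\to X'$, and $\ve{-}$ sends it to itself; this establishes surjectivity on hom-sets, hence fullness.

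The only point requiring (light) care — and the nearest thing to an obstacle — is to confirm that nothing in the $\bn$-morphism data secretly depends on structure absent from $\cn$: concretely, that $\alpha$ is constrained only to be an order-preserving graph homomorphism in both settings, with no extra clause forcing it to respect chordality or to interact with the witnessing functors $F$ appearing in the object definitions. Since the morphism conditions in both \cref{def:bn_cat} and \cref{def:ubn_cat} reference only $\omega,\omega',\tau,\tau',\vcat{\alpha}$ and the category $\odag$, and never the existentially quantified $F$ that distinguishes the two classes of objects, this verification is routine — and it is essentially the whole content of the argument, together with the observation from \cref{thm:triangulation_finstoch} that $\ve{-}$ is the identity on morphisms in the first place.
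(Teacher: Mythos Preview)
Your proof is correct and follows exactly the paper's approach: the paper's own argument is the one-liner ``Since $\ve{-}$ acts as the identity on both objects and morphisms, this follows from our definition of the morphisms in both $\cn$ and $\bn$,'' and you have simply unpacked in detail what that sentence means. Your additional observation that the morphism clauses in \cref{def:bn_cat} and \cref{def:ubn_cat} never reference the existentially quantified witness $F$ is the precise justification the paper leaves implicit.
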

Despite its full faithfulness, the $\ve{-}$ embedding depends on the graph structure and does not admit straightforward extensions, as discussed in \cref{rem:v-structures}.
\begin{proof}
Since $\ve{-}$ acts as the identity on both objects and morphisms, this follows from our definition of the morphisms in both $\cn$ and $\bn$.
\end{proof}

\section{Functorial Interplay}\label{sec:interplay} 

We now investigate the interactions between the functors introduced in the previous sections.
To start, we discuss the following commutative diagram, already appeared in the introduction as \eqref{eq:interplay_intro}:
\begin{equation}\label{eq:interplay}
\begin{tikzcd}[column sep=large]
	&\bn \ar[r,"\mor{-}"]\ar[rrr, bend left=20, start anchor=north east, "\tr{\mor{-}}"] & \mn \ar[rr, color=red, "\tr{-}"]\ar[rd,"\ctr{-}" below left] && \bn\\
	\cn \ar[ru, color=red, "\ve{-}"]\ar[rru,"\cmor{-}" below right]\ar[rrr,equal] &&&\cn\ar[ru, color=red, "\ve{-}"] &
\end{tikzcd}
\end{equation}
where only the red arrows --- $\ve{-}$ and $\tr{-}$ --- require semantic assumptions, while the black ones only consider syntax.
From previous sections, we have already discussed
\begin{itemize}
	\item The definitions of $\mor{-}$, $\cmor{-}$, $\ctr{-}$, and $\ve{-}$ (\cref{thm:moralisation,prop:cmoralisation,thm:triangulation,thm:triangulation_finstoch});
	\item The fact that $\ctr{\cmor{-}}=\id_{\cn}$ (\cref{prop:chordalcmor_id}).
\end{itemize}
Moreover, we define $\tr{-}\coloneqq \ve{\ctr{-}}\colon \mn \to \bn$, which perfectly reflects the definition given in \cite{lorenzin2025moralisation}, so the commutativity on the left is obvious. 
We are therefore left to show that $\cmor{-}$ coincides with $\mor{\ve{-}}$ and that $\tr{\mor{-}}$ is purely syntactic, so for example we have no need to require the existence of conditionals.
The former result is a direct consequence of the definitions --- recall that $\ve{-}$ acts as the identity on both objects and morphisms --- while the latter follows from the fact that $\tr{\mor{-}}$ just adds additional inputs to boxes, i.e.\ $\tr{\mor{-}}$ is obtained by $\syn{\tr{\mor{\g}}}\to \syn{\g}$ defined on generators as 
\[
\tikzfig{parents_morph} \quad \longmapsto \quad \tikzfig{mortr}
\]
where $\parents{v}$ indicates the parents of $v$ in $\tr{\mor{\g}}$, $P$ its parents in $\g$, and $Q\coloneqq \parents{v} \setminus P$. 
In particular, $\tr{\mor{-}}$ is highly more efficient than computing it as a composition. 

\begin{remark}
	With $\tr{\mor{-}}$, the variable elimination algorithm reduces to just taking any associated $F\colon \cdsyn{\g}\to \finstoch$ and consider the state associated to the smallest element.
	This conclusion can be directly drawn from \cref{algo:ve}, since $\lambda_v= \minitikzfig{del}$ holds in every call to the procedure \textsc{Marg-Var}, and therefore we can just as well not change $f_w$ and simply return $\tilde{F}$.
	Note this follows because the order of each DAG is \emph{topological}, i.e.\ it respects the edges (see \cref{rem:ordering}). 

	To apply this functorial framework to more meaningful situations, we should allow a change of order --- otherwise, we can only compute the marginal of the smallest element.
	A first remark is that such an operation cannot happen at the level of directed graphs, since orders need to respect the structure, and this would rarely be the case. 
	Moreover, CD-categories do not allow \minitikzfig{cup} and \minitikzfig{cap}, so that inputs cannot become outputs.
	The situation is different for undirected graphs, where in particular a change of order can be described by an equivalence $\sigma \colon \syn{\h} \xrightarrow{\cong} \syn{\h}$ that simply permutes the order of the outputs for each generator. 
	To better illustrate the situation, let us consider the BEAR network of \cref{ex:BEAR}, and the order $A<B<E<R$ for $\mor{\g}$. 
	Then the resulting triangulation becomes
	\[
	\tikzfig{BEAR_triangulation_orderswap} \qquad\qquad \tikzfig{BEAR_triangulation_orderswap_stringdiagram}
	\] 
	This can be linked to the starting Bayesian network using the functors $m$ and $t$ defined in \eqref{eq:moralisation_functor} and \eqref{eq:triangulation} respectively, by means of the composition $\syn{\tr{\mor{\g}}} \xrightarrow{t} \syn{\mor{\g}}\xrightarrow{\sigma} \syn{\mor{\g}} \xrightarrow{m} \syn{\g}$, where $\sigma$ is associated to the change of order from $B<E<A<R$ to $A<B<E<R$. This results in the following inductive description on generators: 
	\[
	\tikzfig{bear_triang_aber}
	\]
\end{remark}

We now turn our attention to natural transformations between the functors. 

\begin{proposition}\label{prop:trmor_mortr}
	There are two natural transformations: $\tr{\mor{-}} \to \id_{\bn}$ and $\mor{\tr{-}} \to \id_{\mn}$, where $\id_{-}$ indicates the identity functor.
	Moreover, $\tr{\mor{-}}$ and $\mor{\tr{-}}$ are idempotent endofunctors.
\end{proposition}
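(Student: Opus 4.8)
The plan is to realise both natural transformations as ``identity on vertices'' and to deduce idempotency from \cref{prop:chordalcmor_id} together with the fact that the triangulation functor always outputs an ordered chordal graph. Recall that on objects $\tr{\mor{-}}$ sends $(\omega,\g,\tau)$ to $(\omega,\ctr{\mor\g},\tau)$ --- with the ordered chordal graph $\ctr{\mor\g}$ (equivalently $\tr{\mor\g}$), and with $\omega$, $\tau$ only reinterpreted over a graph on the same vertex set --- and $\mor{\tr{-}}$ sends $(\omega,\h,\tau)$ to $(\omega,\mor{\ctr\h},\tau)$. For $\tr{\mor{-}}\to\id_{\bn}$ I would take, at each $(\omega,\g,\tau)$, the triple $(\iota,\id,\id)$ with $\iota\colon\g\to\ctr{\mor\g}$ the identity on $V_\g$; for $\mor{\tr{-}}\to\id_{\mn}$, the analogous $(\iota,\id,\id)$ with $\iota\colon\h\to\mor{\ctr\h}$. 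The one point to check is that $\iota$ is an order-preserving graph homomorphism: order-preservation is automatic, and for edges, any $v\to w$ in $\g$ becomes an undirected edge $v\edge w$ in $\mor\g$ by \cref{def:moralisation}, hence a directed edge $v\to w$ in $\ctr{\mor\g}$ since the triangulation retains every edge of the underlying undirected graph; in the Markov case, $v\edge w$ in $\h$ becomes $v\to w$ in $\ctr\h$ for the same reason, hence $v\edge w$ in $\mor{\ctr\h}$ by \cref{def:moralisation}. Since $\vcat{\ctr{\mor\g}}=\vcat\g$ and $\vcat\iota=\id$, the components $\eta=\hat\eta=\id$ have the right (co)domains, $\hat\eta$ is distribution-preserving, and \eqref{eq:eta_hateta} commutes trivially, so $(\iota,\id,\id)$ is a genuine morphism in $\bn$ (resp.\ $\mn$).

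Naturality against a morphism $(\alpha,\eta,\hat\eta)$ is then immediate: the two composites around the square both reduce, under the component-wise composition of $\bn$ (resp.\ $\mn$), to $(\alpha,\eta,\hat\eta)$ with graph part reinterpreted as a homomorphism $\g'\to\ctr{\mor\g}$, because $\tr{-}$ and $\mor{-}$ leave the $\eta,\hat\eta$-data untouched and act on $\alpha$ only by viewing the same vertex function between graphs on the same vertex set, while $\iota$ is the identity on vertices. For idempotency I would isolate the fact that $\ctr{\mor{\g_0}}=\g_0$ for every ordered chordal graph $\g_0$; this is the graph content of \cref{prop:chordalcmor_id}, once one notes that $\cmor{-}$ and $\mor{-}$ agree on underlying graphs (both precompose with the same $m$ of \eqref{eq:moralisation_functor}). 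Then $\tr{\mor{-}}$ applied twice sends $(\omega,\g,\tau)$ to $(\omega,\ctr{\mor{\g_0}},\tau)$ with $\g_0:=\ctr{\mor\g}$ ordered chordal, which equals $(\omega,\g_0,\tau)=\tr{\mor{(\omega,\g,\tau)}}$ by the key fact; on morphisms nothing changes except the (now identical) graph types. Symmetrically $\mor{\tr{-}}$ applied twice sends $(\omega,\h,\tau)$ to $(\omega,\mor{\ctr{\mor{\g_0}}},\tau)$ with $\g_0:=\ctr\h$ ordered chordal, which equals $(\omega,\mor{\g_0},\tau)=\mor{\tr{(\omega,\h,\tau)}}$. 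Hence both endofunctors are idempotent.

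I do not expect a genuine obstacle --- the only real work is bookkeeping. Two places ask for attention: (i) the contravariance of $\alpha$ in \cref{def:bn_cat,def:mn_cat} forces the transformation component to be $\iota\colon\g\to\ctr{\mor\g}$ (from the triangulated--moralised graph back to the original), not the reverse, which is also in line with \cref{prop:Imap_DAG} (so that $\ctr{\mor\g}$ is an I-map for $\g$); and (ii) one must know that $(\omega,\ctr{\mor\g},\tau)$ and $(\omega,\mor{\ctr\h},\tau)$ really are objects of $\bn$ and $\mn$, but this is already part of the functoriality of $\mor{-}$, $\ctr{-}$ and $\ve{-}$ (\cref{thm:moralisation,thm:triangulation,thm:triangulation_finstoch}).
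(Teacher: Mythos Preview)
Your proposal is correct and follows essentially the same approach as the paper: the natural transformations are given by the identity-on-vertices graph homomorphisms $\g\to\ctr{\mor\g}$ and $\h\to\mor{\ctr\h}$, with naturality coming from the fact that $\mor{-}$ and $\tr{-}$ act as the identity on the $(\alpha,\eta,\hat\eta)$ data. For idempotency the paper argues via the commutative diagram \eqref{eq:interplay} to obtain $\tr{\mor{\tr{-}}}=\ve{\ctr{-}}=\tr{-}$ in one stroke (then both idempotencies follow by pre- and post-composing with $\mor{-}$), whereas you unpack the same content at the graph level using \cref{prop:chordalcmor_id}; the two arguments are equivalent.
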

In particular, $\tr{\mor{-}}$ and $\mor{\tr{-}}$ are idempotent semimonads.
\begin{proof}
	The two natural transformations are simply achieved by noting that the identity on an ordered DAG $\g$ and an ordered undirected graph $\h$ are also morphisms $\g\to \tr{\mor{\g}}$ and $\h\to\mor{\tr{\h}}$. 
	That these correspond to natural transformation follows because the functors $\mor{-}$ and $\tr{-}$ send the morphism $(\alpha,\eta,\hat{\eta})$ to the same pair intepreted in the new type, as stated at the end of the proofs of \cref{thm:moralisation,thm:triangulation}.

	Idempotency follows from \eqref{eq:interplay}: indeed, the commutative diagram 
	\[ 
		\begin{tikzcd}[column sep=large]
		\mn\ar[rr,"\tr{-}"]\ar[dr,"\ctr{-}" below left] &&\bn \ar[r,"\mor{-}"] & \mn \ar[rr,"\tr{-}"]\ar[rd,"\ctr{-}" below left] && \bn\\
		&\cn \ar[ru, "\ve{-}"]\ar[rru,"\cmor{-}" below right]\ar[rrr,equal] &&&\cn\ar[ru, "\ve{-}"] &
	\end{tikzcd}
	\]
	shows that $\tr{\mor{\tr{-}}} =\ve{\ctr{-}}=\tr{-}$.
\end{proof}

Finally, the reader may wonder whether an adjunction may be achieved, but this is too much to ask as it would require to have either natural $\id_{\bn}\to \tr{\mor{-}}$ or $\id_{\mn}\to \mor{\tr{-}}$.
	However, this is not possible already at the level of graphs, because in general $\tr{\mor{\g}}$ (resp.\ $\mor{\tr{\h}}$) has more edges than $\g$ (resp.\ $\h$), and represents less independencies.

	\begin{proposition}\label{prop:noadjunction}
		There is no adjunction given by $\mor{-}$ and $\tr{-}$.
	\end{proposition}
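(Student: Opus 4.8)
The plan is to suppose an adjunction exists --- in either variance --- and contradict it using the \emph{unit}. If $\mor{-}\dashv\tr{-}$, the unit is a natural transformation $\eta\colon\id_{\bn}\to\tr{-}\circ\mor{-}=\tr{\mor{-}}$; if instead $\tr{-}\dashv\mor{-}$, it is a natural transformation $\upsilon\colon\id_{\mn}\to\mor{-}\circ\tr{-}=\mor{\tr{-}}$. (Recall that \cref{prop:trmor_mortr} only provides the transformations going the \emph{other} way, towards $\id$.) In both cases the endofunctor at issue leaves the semantic data untouched on objects: this is how $\mor{-}$, $\ctr{-}$ act on objects and $\ve{-}$ acts as the identity (\cref{thm:moralisation,thm:triangulation,thm:triangulation_finstoch}), so $\tr{\mor{(\omega,\g,\tau)}}=(\omega,\tr{\mor{\g}},\tau)$. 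Hence the component of the unit at a Bayesian network $(\omega,\g,\tau)$ is a $\bn$-morphism $(\omega,\g,\tau)\to(\omega,\tr{\mor{\g}},\tau)$, whose $\odag$-component is an order-preserving graph homomorphism $\epsilon_{(\omega,\g,\tau)}\colon\tr{\mor{\g}}\to\g$; symmetrically for Markov networks.

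First I would record that sending a network to its underlying graph is a functor $P\colon\bn\to\odag^{\op}$ (resp.\ $\mn\to\ougr^{\op}$), and that $P\circ\tr{\mor{-}}=\overline{\tr{\mor{-}}}\circ P$, where $\overline{\tr{\mor{-}}}$ denotes the induced composite acting on graphs --- functorial by \cref{thm:graphhom-cd,thm:graphhom-hyp} together with the morphism parts of \cref{thm:moralisation,thm:triangulation}. Whiskering the unit with $P$ then produces a natural transformation whose naturality squares, read back in $\odag$, state that for every $\bn$-morphism $g\colon X\to Y$ with $\odag$-component $\beta\colon\g_Y\to\g_X$ one has $\epsilon_X\circ\tr{\mor{\beta}}=\beta\circ\epsilon_Y$ as maps $\tr{\mor{\g_Y}}\to\g_X$.

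Next I would feed in the ``vertex-selecting'' morphisms. Take $\g_0$ to be the v-structure $A\to C\leftarrow B$ of \cref{rem:v-structures}, ordered $A<B<C$; a direct computation gives $\mor{\g_0}=$ the triangle on $\{A,B,C\}$ and $\tr{\mor{\g_0}}=$ the complete DAG on $\{A,B,C\}$, which carries the edge $A\to B$ absent from $\g_0$. Fix any $X\in\bn$ over $\g_0$ (e.g.\ uniform conditionals). For each vertex $v$, the map $\bullet\to\g_0$ picking out $v$ is a morphism of $\odag$ (order-preservation and edge-preservation are vacuous on $\bullet$), and it is realised as the $\odag$-component of a $\bn$-morphism $X\to Y_v$, taking $Y_v$ to be the $v$-marginal of $X$ equipped with the obvious marginalisation transformations. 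Since $\tr{\mor{\bullet}}=\bullet$, the component $\epsilon_{Y_v}$ is forced to equal $\id_{\bullet}$, and the induced map $\bullet=\tr{\mor{\bullet}}\to\tr{\mor{\g_0}}$ again picks out $v$; the naturality square then collapses to $\epsilon_X(v)=v$. Letting $v$ range over $\{A,B,C\}$ shows $\epsilon_X$ is the identity on vertices, hence must carry the edge $A\to B$ of $\tr{\mor{\g_0}}$ to the edge $A\to B$ of $\g_0$ --- which does not exist. This contradiction rules out $\mor{-}\dashv\tr{-}$. Running the identical argument with $\h_0$ the $4$-cycle of \cref{ex:misconception} (for which $\mor{\tr{\h_0}}$ acquires the chord $A\edge C$) kills the unit $\upsilon$ and hence the reverse variance, completing the proof.

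The main obstacle is the bookkeeping of the middle step: one must verify that the vertex-selecting graph maps genuinely occur as $\odag$-components of network morphisms --- which reduces to checking that the one-vertex marginals $Y_v$ are legitimate objects of $\bn$/$\mn$, i.e.\ still factor as in \eqref{eq:irredundantbayesian_functor}/\eqref{eq:mn_factorisation}, which is immediate since any distribution on a single vertex factors --- and that the whiskered naturality square is exactly the displayed identity, with the $\odag^{\op}$/$\ougr^{\op}$ variances tracked correctly. Once these are in place, the contradiction with the extra edge is immediate.
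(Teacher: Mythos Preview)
Your proposal is correct and follows essentially the same route as the paper: both argue that a putative unit $\id\to\tr{\mor{-}}$ (resp.\ $\id\to\mor{\tr{-}}$) would, via naturality against the vertex-selecting marginalisation morphisms $(\omega,\g,\tau)\to(\omega_v,\bullet,\tau_v)$, force the graph component to be the identity on vertices, contradicting the extra edge in $\tr{\mor{\g_0}}$ for the v-structure (resp.\ $\mor{\tr{\h_0}}$ for the $4$-cycle). Your additional bookkeeping with the projection functor $P$ and the explicit choice of $\h_0$ for the second variance is just a slightly more formal packaging of the same argument.
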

	\begin{proof}
		By contradiction, let us assume that a natural transformation $\mu_{(\omega,\g,\tau)}\colon (\omega,\g,\tau)\to (\omega,\tr{\mor{\g}},\tau)$ exists. 
		Recall that the functors $\mor{-}$ and $\tr{-}$ are `identity-like', i.e.\ they send $(\alpha,\eta,\hat{\eta})$ to $(\alpha,\eta,\hat{\eta})$ intepreted in the new type. 
		For any $(\omega,\g,\tau)$ and any vertex $v \in \g$, let us consider $(\omega_v,\bullet,\tau_v)$, where $\omega_v$ is the marginalization of $\omega$ at $v$, and $\tau_v\colon \vcat{\bullet}\to \finstoch$ is defined by setting $\tau_v(\bullet)\coloneqq \tau(v)$.
		The marginalization gives rise to a distribution-preserving monoidal transformation $\hat{\pi}\colon \omega \to \omega_v$, so in particular we have a morphism $(i,\pi,\hat{\pi})\colon (\omega,\g,\tau)\to (\omega_v,\bullet,\tau_v)$ where $i$ is the inclusion of graphs $\bullet \to \g$ sending $\bullet$ to $v$ and $\pi \colon \tau \to \tau_v \vcat{i}$ is the identity on $v$ and $\minitikzfig{del}$ on the other vertices.
		By assumption,
		\[
		\begin{tikzcd}[column sep=large]
			(\omega,\g,\tau )\arrow[d,"{(i,\pi,\hat{\pi})}"]\ar[r,"\mu_{(\omega,\g,\tau)}"] & (\omega,\tr{\mor{\g}},\tau)\arrow[d,"{(i,\pi,\hat{\pi})}"]\\
			(\omega_v,\bullet,\tau_v)\ar[r,"\mu_{(\omega_v,\bullet,\tau_v)}"] & (\omega_v,\bullet,\tau_v)
		\end{tikzcd}
		\]
		commutes.
		At the level of graphs, $\mu_{(\omega_v,\bullet,\tau_v)}$ must be given by the identity, as it is the only morphism $\bullet \to \bullet$. 
		Therefore, whatever $\alpha \colon \tr{\mor{\g}}\to \g$ represents $\mu_{(\omega,\g,\tau)}$ at the level of graphs, it must respect the commutation of the diagram above, which means that $\alpha(v)=v$.
		The arbitrariety of $v$ implies that $\alpha$ must be the identity, but the identity is not necessarily a graph homomorphisms $\tr{\mor{\g}}\to \g$ because $\tr{\mor{\g}}$ has in general more edges than $\g$.
		For the sake of an example, let $\g\coloneqq \minitikzfig{v-structure}$, and note that $\tr{\mor{\g}}$ is a complete DAG (i.e.\ we have the additional edge $A \to B$). 
	
		The same idea applies when assuming the existence of a natural transformation $(\omega,\h,\tau)\to (\omega,\mor{\tr{\h}},\tau)$, and therefore $\mor{-}$ and $\tr{-}$ do not admit a unit for the possible adjunction in either direction. 
	\end{proof}

	\section{Conclusions}\label{sec:conclusions}
	We provided a categorical framework for Bayesian and Markov networks, focusing on translations between them in the form of moralisation and triangulation. In the spirit of Lawvere's categorical semantics, we characterised networks as functors from a `syntax' (the graph) to a `semantics' (its probabilistic interpretation). The translations were formulated abstractly by functor pre-composition, and defined inductively on the diagrammatic syntax.
	Triangulation required semantic assumptions that allows a procedure strongly reminiscent of the variable elimination algorithm.
	We moreover offered a preliminary discussion on the Junction Tree Algorithm (\cref{rem:JTA}).
	Overall, this approach ensures a study where semantics is considered only when explicitly needed; recall \eqref{eq:interplay_intro} and \eqref{eq:interplay}, providing a diagrammatic overview of the studied transformations.

A first direction for future work is accounting for additional PGMs. 
Our separation of syntax and semantics offers a clear way forward characterising several existing models, including Gaussian networks (where a discrete probabilistic semantics is replaced with a Gaussian one), networks based on `partial DAGs' (where the syntax allows both for directed and undirected edges), hidden Markov models (where not all vertices of the graph are `visible' to the string diagram interfaces), and factor graphs (where the syntax describes bipartite graphs). It also abstracts and simplifies reasoning on translations between models, whose specification in the existing literature is often in natural language and prone to ambiguity.

A second direction is to apply our abstract account of moralisation and triangulation to study in more detail the junction tree algorithm, with the goal of expressing clique tree message passing and graph-based optimisation. 
Ultimately, the aim is to offer a compositional perspective on these algorithms, leveraging our syntactic description of moralisation and triangulation.

{\small
\bibliography{references}}

\end{document}